\documentclass{article}

\usepackage[preprint, nonatbib]{neurips_2026}

\usepackage[backend=biber, style=ieee, maxbibnames=10]{biblatex}
\bibliography{ref}

\usepackage[utf8]{inputenc}
\usepackage[T1]{fontenc}
\usepackage{hyperref}
\usepackage{url}
\usepackage{booktabs}
\usepackage{amsfonts}
\usepackage{nicefrac}
\usepackage{microtype}
\usepackage{xcolor}
\usepackage{soul}
\usepackage{comment}
\usepackage{amsmath}
\usepackage{amsthm, amssymb}

\usepackage{enumitem}
\usepackage[framemethod=TikZ]{mdframed}
\usepackage{booktabs}
\usepackage{multirow}
\usepackage[most]{tcolorbox}
\usepackage{adjustbox}

\usepackage{thmtools}
\usepackage{thm-restate}

\newtheorem{definition}{Definition}
\newtheorem{theorem}{Theorem}
\newtheorem{lemma}{Lemma}
\newtheorem{corollary}{Corollary}
\newtheorem{remark}{Remark}

\usepackage{graphicx}
\usepackage{subcaption}
\usepackage{xcolor}
\usepackage{ifthen}
\newboolean{include-notes}
\setboolean{include-notes}{true}
\newcommand{\nb}[3]{%
  \ifthenelse{\boolean{include-notes}}%
    {{\colorbox{#2}{\bfseries\sffamily\scriptsize\textcolor{white}{#1}}%
      {\ \textcolor{#2}{\sf\small\textit{#3}}}}}%
    {}%
}

\title{On the effectiveness of reward functions in reinforcement learning for confidence calibration of large language models}

\author{
  Chee Heng Tan \\
  School of Computing \\
  National University of Singapore \\
  \texttt{e0764286@u.nus.edu} \\
  \And 
  Zhuoyi Lin \\
  Institute of Advanced Intelligence and Computing \\
  Agency for Science, Technology and Research \\
  \texttt{lin\_zhuoyi@a-star.edu.sg} \\
  \And
  Mehul Motani \\
  Department of Electrical \& Computer Engineering \\ 
  National University of Singapore  \\
  \texttt{motani@nus.edu.sg} \\
  \And
  Wee Sun Lee \\
  School of Computing \\
  National University of Singapore \\
  \texttt{leews@comp.nus.edu.sg}\\
}

\begin{document}

\maketitle

\begin{abstract}
    In this paper, we consider the setting where large language models (LLMs) are trained using reinforcement learning (RL) to simultaneously improve reasoning accuracy and verbalize its confidence. Our reward scheme uses two functions for rewarding confidence verbalized by the LLM: one when the LLM is correct and a different one when the LLM is incorrect. With a poorly designed reward scheme, the LLM may be incentivized to answer incorrectly so that it can be confident that its answer is indeed incorrect, a phenomenon that we call confidence reward hacking. We propose the concept of non-hackable confidence reward schemes and define a spectrum of such reward schemes for RL confidence calibration training in LLMs. We demonstrate that selective confidence reward hacking can occur in practical datasets with reward schemes that are not designed to be non-hackable. We also demonstrate that the reward scheme with the best calibration to accuracy tradeoff depends on the dataset and the application, and propose using the reward scheme as a hyperparameter to optimize the tradeoffs in accordance to what is important for the application. The code of our experiments is available in \url{https://anonymous.4open.science/r/rl-confidence-calibration-9ED4/README.md}.
\end{abstract}

\section{Introduction}

Large language models (LLM)s have made significant recent advances in reasoning, especially when using reinforcement learning (RL) \cite{zhang2025surveyreinforcementlearninglarge}. However, much of the RL literature has mainly focused on the RL with verifiable rewards setting (RLVR) \cite{wang2025arbitraryentropypolicyoptimization, wu2026takestwogrposecretly}, which does not directly attempt to estimate the likelihood of hallucinations via confidence calibration.

Confidence calibration has applications such as event forecasting \cite{turtel2025outcomebased} and determining whether the user is trying to factually deceive the LLM \cite{li-etal-2025-firm}. Pretrained LLMs have been demonstrated to be overconfident \cite{damani2026beyond, li2026consistencylargereasoningmodels} and the same is true for LLMs trained with RL without confidence calibration \cite{xiaohu2026knowyourewrongaligning, openai2024gpt4technicalreport}. Overconfidence in LLM can mislead end-users into thinking that the output of the LLM is almost certainly correct and can be trusted even though the output should still be viewed with skepticism. For instance, in medical LLM assistants, LLM overconfidence in a negative diagnosis may result in doctors underestimating the risk of cancer development during screening, which may lead to insufficient precautions to manage cancer risk.

Confidence calibration using RL shows potential as RL usually exhibits better out-of-distribution generalization capabilities \cite{chu2025sft} and lesser forgetting \cite{shenfeld2026rls}. However, it remains an underexplored topic. RL calibration fine-tuning using proper scoring methods such as log loss \cite{bani-harouni2026rewarding} or Brier score \cite{xu-etal-2024-sayself} is possible, but doing so in isolation may improve calibration at the expense of answer accuracy. It is possible to define a function that rewards a combination of accuracy and calibration. However, this may incentivize the LLM to answer incorrectly so that it can be confident that its answer is indeed incorrect. We call this phenomenon confidence reward hacking. In \cite{damani2026beyond}, a provably non-hackable scheme, combining a constant correctness reward with the Brier score, is proposed. 

In this paper, we define a reward scheme as a pair of confidence-dependent functions that, respectively, provide the rewards for answering correctly and answering incorrectly. We theoretically characterized the set of non-hackable confidence reward schemes, which are reward schemes that reward more for being confidently incorrect, penalize more for being confidently incorrect and encourage the LLM to be honest with its confidence and to answer to the best of its ability. To support our theoretical analysis, we empirically demonstrate that LLMs trained with RL using reward schemes outside the set can answer some questions incorrectly to maximize the reward. 

Subsequently, we define a spectrum of reward schemes containing reward schemes where underconfidence is preferred and where overconfidence is preferred. Such schemes may be useful when there is a need to trade off accuracy with calibration, e.g. when the LLM has limited capacity or where the optimization process is difficult, resulting in the LLM being unable to provide the optimal solution to both requirements.

Lastly, we examine the tradeoffs and demonstrate that different reward schemes can lead to better accuracy-calibration tradeoffs, suggesting that the choice of the reward scheme is a hyperparameter that can be used to optimize the tradeoff depending on what is important for the application.

\section{Related Work}\label{section:related-work}

Proper scoring methods have been used for confidence calibration in RL of LLM. In \cite{bani-harouni2026rewarding}, a clipped log-loss reward function is used over the confidence scores of the LLM while in \cite{xu-etal-2024-sayself}, RL with a Brier-based reward function is used after supervised finetuning (SFT) using empirical correctness rates. In both papers, accuracy is not optimized and is treated as secondary to confidence calibration.

In \cite{li2026confidenceansweringparadigmshift}, it is shown that two stage training, first for accuracy, then for confidence calibration results in reward hacking when using the standard Group Relative Policy Optimization (GRPO) \cite{shao2024deepseekmathpushinglimitsmathematical} loss. To address the issue, a combination of a correctness reward and a confidence alignment reward denoting how close the output confidence is to the question-wise empirical accuracy of the model is proposed. While experimentally successful, this work did not show theoretically that the reward functions are non-hackable. We provide mathematical foundations for understanding non-hackability in this paper.

Recently, Damani et al. \cite{damani2026beyond} proposed a reward scheme which combines accuracy with Brier score and proved that it incentivizes both answering correctly and accurate confidence calibration. More generally, in their theoretical derivation, they considered reward schemes with constant correctness rewards. Notably, \cite{shuford1966admissible} had also examined the theory of proper scoring reward schemes that also incentivize answering correctly, and obtained theoretical results similar to \cite{damani2026beyond} in their analysis. To the best of our knowledge, Damani et al. \cite{damani2026beyond} were the first to successfully utilize RL for simultaneously learning to reason and performing confidence calibration. Despite experimenting with reward schemes that allow for confidence reward hacking, they did not show concrete evidence of reward hacking in practical datasets when trained simultaneously for correctness and calibration \cite{damani2026beyond}, leaving open the question of whether reward hacking occurs in practice. Moreover, they considered only reward schemes with constant correctness reward. We generalize their work to more general reward schemes where the correctness reward can vary by confidence. We show the presence of reward hacking in practical datasets and further examine the trade-off between accuracy and calibration for different non-hackable confidence reward schemes.

Similar reward functions have been proposed in  \cite{wu2026mitigatingllmhallucinationbehaviorally} and \cite{wu2026basdecisiontheoreticapproachevaluating} based on the behavioral calibration framework of \cite{kalai2025languagemodelshallucinate}. While \cite{wu2026basdecisiontheoreticapproachevaluating} proposed a family of reward schemes proven to incentivize both answering accurately and accurate confidence calibration, we demonstrate the existence of other reward schemes satisfying both properties and characterize the set of all such reward schemes. More details can be found in Appendices \ref{appendix:non-hackable-confidence-reward-schemes}, \ref{appendix:decision-making-construction} and \ref{appendix:interpretability-redundant}.

\section{Problem Formulation} \label{section:problem-formulation}

Our goal is to investigate the effect of reward functions in RL when LLMs are trained for confidence calibration. We consider the following RL setting:

For each question, the LLM is tasked to answer the question and provide a confidence value \(c \in (0, 1)\). We assume that the questions have a deterministic and well-defined set of correct answers. The LLM obtains a reward of \(f(c)\) for answering correctly and a reward of \(g(c)\) for answering incorrectly. The objective of the LLM is to maximize its reward. Note that this differs from the setting in \cite{damani2026beyond} since the reward for correctness is no longer constrained to be independent of \(c\). 

\begin{definition}
We define a ``binary correctness confidence reward scheme'', or ``reward scheme'' for short, as a pair of functions \((f(c), g(c))\) defined over \((0, 1)\), where \(f\) and \(g\) have continuous first derivatives. The corresponding "binary correctness confidence reward function", or "reward function" for short, is \(r(q, a, c) = \text{isCorrect}(q, a)f(c) + (1-\text{isCorrect}(q, a))g(c)\) where \(\text{isCorrect}(q, a)\) is 1 if the answer \(a\) to question \(q\) is correct and 0 otherwise. 

A reward function can be viewed as the composition of the reward scheme \((f(c), g(c))\) and the evaluator \(\text{isCorrect}\), where \(f(c)\) is returned if \(\text{isCorrect}(q, a)=1\) and \(g(c)\) otherwise. 
\end{definition}

If the reward scheme is not properly designed, the LLM may select some questions to intentionally answer wrongly with low confidence to obtain a higher reward, a phenomenon we term ``confidence reward hacking''. For more details on how this may occur, refer to Remark \ref{remark:why-consider-non-hackable-confidence-reward-schemes} in Appendix \ref{appendix:non-hackable-confidence-reward-schemes}. Therefore, our first research question aims to investigate the possibility of selective confidence reward hacking over practical reasoning datasets.

Infinitely many reward schemes exist such that confidence reward hacking never happens. We term such reward schemes as ``non-hackable confidence''. Such reward schemes may not offer the same accuracy and confidence calibration performance due to the limited approximation capability of neural networks and optimization capability of gradient descent optimizers. Therefore, our second research question aims to examine the accuracy and calibration tradeoffs as well as the convergence speed for different non-hackable confidence reward schemes.

We aim to investigate and answer the following research questions:
\begin{itemize}[noitemsep, topsep=0pt]

\item \textbf{RQ1}: Can the use of a hackable confidence reward scheme on \((0, 1)\) result in LLMs selectively choosing questions to intentionally answer incorrectly?

\item \textbf{RQ2}: What are the accuracy and confidence calibration tradeoffs in using different non-hackable confidence reward schemes for RL training in LLM?

\end{itemize}

\section{Methodology}

In this paper, we propose the concept of non-hackable confidence reward schemes, designed to encourage the LLM to answer to the best of its ability and to be honest about its confidence. Subsequently, we designed a spectrum of reward schemes with one end where it is better to be overconfident when unsure, and the other end where it is better to be underconfident when unsure.

\subsection{Non-hackable Confidence Reward Schemes}

A confident and correct answer should be rewarded more than an unconfident and correct answer. Likewise, a confident and incorrect answer should be rewarded less than an unconfident and incorrect answer. Ideally, we would want the LLM to answer to the best of its ability, and yet understand its own limitations by providing an informative estimate of its epistemic uncertainty. This notion, refined from \cite{damani2026beyond} via the inclusion of the Interpretability property, is formalized as follows:

\begin{definition}\label{definition:non-confidence-hackable-reward-scheme}
We define a ``non-hackable confidence reward scheme'' over an open interval \((a, b) \subseteq (0, 1)\) as a pair of functions \((f(c), g(c))\) satisfying the following properties:

(i) \textbf{Interpretability}: \(f(c)\) is non-decreasing on \((0, 1)\) and \(g(c)\) is non-increasing on \((0, 1)\).

(ii) \textbf{Proper Scoring}: Let \(R(c, p) = \mathbb{E}_{q \sim Bernoulli(p)}[q f(c) + (1-q)g(c)] = p f(c) + (1-p)g(c) \), defined over \(c \in (0, 1)\) and \(p \in [0, 1]\). Then, for all \(p \in (0, 1)\), \(p \in \text{argmax}_{c} R(c, p)\).

(iii) \textbf{Best Effort}: Let \(R_{max}(c) = R(c, c)\). Then, $R_{max}(c)$ is non-decreasing in \((a, b)\).

If the open interval is not explicitly mentioned, it is taken to be over \((0, 1)\). Corresponding reward functions of non-hackable confidence reward schemes are termed ``non-hackable confidence reward functions''. 

\end{definition}

Non-hackable confidence reward schemes are characterized as follows, with proof in Appendix \ref{appendix:non-hackable-confidence-reward-schemes}:

\begin{restatable}{theorem}{maintheorem} \label{theorem:characterization-of-non-reward-hackable-schemes}
A reward scheme is non-hackable confidence over \((a, b) \subseteq (0, 1)\) if and only if all of the following hold:

(i) There exists a function \(h(c)\) defined over \((0, 1)\) such that \(h(c)=\frac{f'(c)}{c-1}=\frac{g'(c)}{c}\)

(ii) \(h(c) \leq 0\) for all \(c \in (0, 1)\)

(iii) \(f(a^+) \geq g(a^+)\)

\end{restatable}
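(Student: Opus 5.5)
The plan is to reduce all three properties of Definition~\ref{definition:non-confidence-hackable-reward-scheme} to statements about the derivatives \(f'\) and \(g'\). This is possible because \(f\) and \(g\) are continuously differentiable on \((0,1)\), and because neither \(c\) nor \(c-1\) vanishes there, so the quotients in (i) are well defined. The two key computations, which I would do first, are
\begin{equation*}
\frac{\partial R}{\partial c}(c,p) = p f'(c) + (1-p) g'(c), \qquad \frac{d}{dc}R_{max}(c) = f(c) - g(c) + \bigl(c f'(c) + (1-c) g'(c)\bigr).
\end{equation*}

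\textbf{Necessity.} Assume the scheme is non-hackable confidence over \((a,b)\).

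Proper Scoring says that for every \(p \in (0,1)\) the interior point \(c=p\) maximizes the differentiable function \(c \mapsto R(c,p)\). Hence the first-order condition holds: \(p f'(p) + (1-p) g'(p) = 0\) for all \(p\). Set \(h(c) := g'(c)/c\). The first-order condition then gives \(f'(c) = -(1-c)g'(c)/c = (c-1)h(c)\), which is (i).

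For (ii), Interpretability gives \(g' \le 0\) on \((0,1)\), and therefore \(h = g'/c \le 0\).

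For (iii), substitute (i) into the derivative of \(R_{max}\). The bracketed term is \(c(c-1)h + (1-c)c\,h = 0\), so \(R_{max}'(c) = f(c) - g(c)\). Best Effort therefore forces \(f-g \ge 0\) on \((a,b)\). This holds because \(f-g\) is continuous: if it were negative at some point, it would be negative on a neighbourhood, and \(R_{max}\) would strictly decrease there.

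Next, \((f-g)' = (c-1)h - c\,h = -h \ge 0\), so \(f-g\) is non-decreasing. Its right limit at \(a\) therefore exists in \([-\infty,\infty)\) and equals \(\inf_{(a,b)}(f-g)\). Since \(f-g \ge 0\) on \((a,b)\), this limit is \(\ge 0\), which gives (iii).

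\textbf{Sufficiency.} Assume (i)--(iii).

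Interpretability follows directly. We have \(f' = (c-1)h \ge 0\) because \(c-1<0\) and \(h \le 0\), and \(g' = c\,h \le 0\).

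For Proper Scoring, the \(c\)-derivative of \(R\) factors as
\begin{equation*}
\frac{\partial R}{\partial c}(c,p) = h(c)\bigl(p(c-1) + (1-p)c\bigr) = h(c)(c-p).
\end{equation*}
Since \(h \le 0\), this derivative is \(\ge 0\) for \(c<p\) and \(\le 0\) for \(c>p\). So \(R(\cdot,p)\) is non-decreasing up to \(p\) and non-increasing after it, and \(p \in \operatorname{argmax}_c R(c,p)\).

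For Best Effort, the computation above again gives \(R_{max}' = f-g\). This function is non-decreasing, since its derivative is \(-h \ge 0\), and its infimum over \((a,b)\) is \(f(a^+)-g(a^+) \ge 0\). Hence \(R_{max}' \ge 0\) on \((a,b)\).

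\textbf{Main obstacle.} The step needing the most care is the equivalence between Best Effort and the boundary condition (iii). First, one has to justify that \(f(a^+)\) and \(g(a^+)\) make sense, possibly as infinite limits, for example when \(a=0\) and \(g(0^+)=+\infty\) as with log-loss. Since \(f\) and \(g\) are individually monotone, each one-sided limit exists in the extended reals. The condition \(f(a^+) \ge g(a^+)\) is then read as a statement about the limit of the monotone difference \(f-g\). This reading avoids any \(\infty-\infty\) ambiguity, because that limit is well defined, possibly equal to \(-\infty\).

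Second, one has to make sure that "non-decreasing" is handled correctly in both directions. This rests on the continuity of \(f-g\) together with the elementary fact that a \(C^1\) function is non-decreasing on an open interval if and only if its derivative is non-negative there.
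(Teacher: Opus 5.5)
Your proof is correct and follows essentially the same route as the paper. The first-order condition at the interior maximizer \(c=p\) produces \(h\), the identity \(R_{max}'=f-g\) turns Best Effort into \(f\ge g\) on \((a,b)\), and the factorization \(\frac{\partial R}{\partial c}=(c-p)h(c)\) gives Proper Scoring in the converse. The only difference is cosmetic: you reach the endpoint through the single monotone function \(f-g\), using \((f-g)'=-h\ge 0\), whereas the paper uses \(\inf_{(a,b)}f\ge\sup_{(a,b)}g\).

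Two small corrections to your closing discussion. First, the \(\infty-\infty\) worry does not arise: monotonicity already gives \(f(a^+)<+\infty\) and \(g(a^+)>-\infty\), so \(\lim_{c\to a^+}(f-g)(c)=f(a^+)-g(a^+)\) always holds and your reading of (iii) coincides with the literal one. Second, for log loss the infinite limit is \(f(0^+)=-\infty\), not \(g(0^+)\).
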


Intuitively, property (iii) means that the LLM is never incentivized to answer incorrectly, e.g. by intentionally answering incorrectly, provided the confidence output of the LLM is restricted to \((a, b)\). The standard binary reward for correctness \((1, 0)\) and the Brier-based reward scheme \((1 - (1-c)^2, -c^2)\), originally proposed by \cite{damani2026beyond}, are examples of non-hackable confidence reward schemes. Table \ref{table:reward-schemes} shows more examples of reward schemes and whether they are non-hackable confidence.

\subsection{Overconfidence to Underconfidence Spectrum}

In this subsection, we define an overconfidence to underconfidence spectrum of non-hackable confidence reward schemes. To understand the spectrum, we first define the miscalibration penalty. Intuitively, the miscalibration penalty is the loss in expected reward due to outputting a sub-optimal confidence value \(c\) instead of the true probability \(p\) of answering the question correctly.

\begin{definition}\label{definition:miscalibration-penalty}

The miscalibration penalty for a reward scheme \((f(c), g(c))\), denoted as \(R_{pen}(c, p)\), is defined as \(R_{pen}(c, p) = R(p, p) - R(c, p)\).

If \(c > p\), the miscalibration penalty can be termed as ``overconfidence penalty". If \(c < p\), the miscalibration penalty can be termed as ``underconfidence penalty".

\end{definition}

\begin{definition}\label{definition:underconfidence-to-overconfidence-spectrum}

We define an overconfidence to underconfidence spectrum of non-confidence hackable reward schemes as follows, where the mentioned reward schemes are as defined in Table \ref{table:reward-schemes}. In order from the overconfident end to the underconfident end of the spectrum, reading from top to bottom, we have the following reward schemes:

\begin{itemize}[noitemsep, topsep=0pt]

\item Correctness-only
\item Overconfidence-\(k\), \(k > 0\), larger \(k\) indicates more towards the overconfident end.
\item Brier-1
\item Underconfidence-\(k\), \(k > 0\), larger \(k\) indicates more towards the underconfident end.
\item Brier-log Hybrid\footnote{Brier-log Hybrid is named as such since it approximates half of Brier-1 reward at small \(c\) and \(p\), while it approximates Log Loss at \(c\) close to 1. More detailed justification can be found in Appendix \ref{appendix:brier-log-hybrid-naming}.}

\end{itemize}

\end{definition}

We also introduce the notion of overconfidence and underconfidence bias. Intuitively, a reward scheme exhibits overconfidence bias when the LLM is penalized less by being slightly overconfident compared to being slightly underconfident. Definition \ref{definition:overconfidence-underconfidence-bias} formalizes this notion.

\begin{restatable}{definition}{definitionconfidencebias} \label{definition:overconfidence-underconfidence-bias}

A non-hackable confidence reward scheme \((f(p), g(p))\) has overconfidence bias (respectively, underconfidence bias) if the miscalibration penalty \(R_{pen}\) satisfies the following property:

For all \(\delta > 0\) and \(p \in (0, 1)\) such that \(p + \delta \in (0, 1)\) and \(p - \delta \in (0, 1)\), \(R_{pen}(p+\delta, p) < R_{pen}(p - \delta, p)\) (respectively \(R_{pen}(p+\delta, p) > R_{pen}(p - \delta, p)\)) 

Note that \(R_{pen}\) is as defined in Definition \ref{definition:miscalibration-penalty}.

\end{restatable}

Figure \ref{fig:miscalibration-penalty} shows the miscalibration penalty plot for some reward schemes along the overconfidence to underconfidence spectrum. Brier-log Hybrid and Underconfidence-\(k\) are towards the underconfident end of the spectrum and exhibit underconfidence bias. Overconfidence-\(k\) is towards the overconfident end of the spectrum and exhibits overconfidence bias.

Both Brier-1 and Correctness-only have neither overconfidence nor underconfidence bias. Even though Correctness-only does not have overconfidence bias, Overconfidence-\(k\) converges pointwise to Correctness-only over \((0, 1)\) as \(k \rightarrow \infty\), which justifies the position of Correctness-only furthest in the overconfident end of the spectrum. A rigorous derivation can be found in Appendix \ref{appendix:confidence-bias}.

\begin{figure}
    \centering
    \includegraphics[width=1\linewidth]{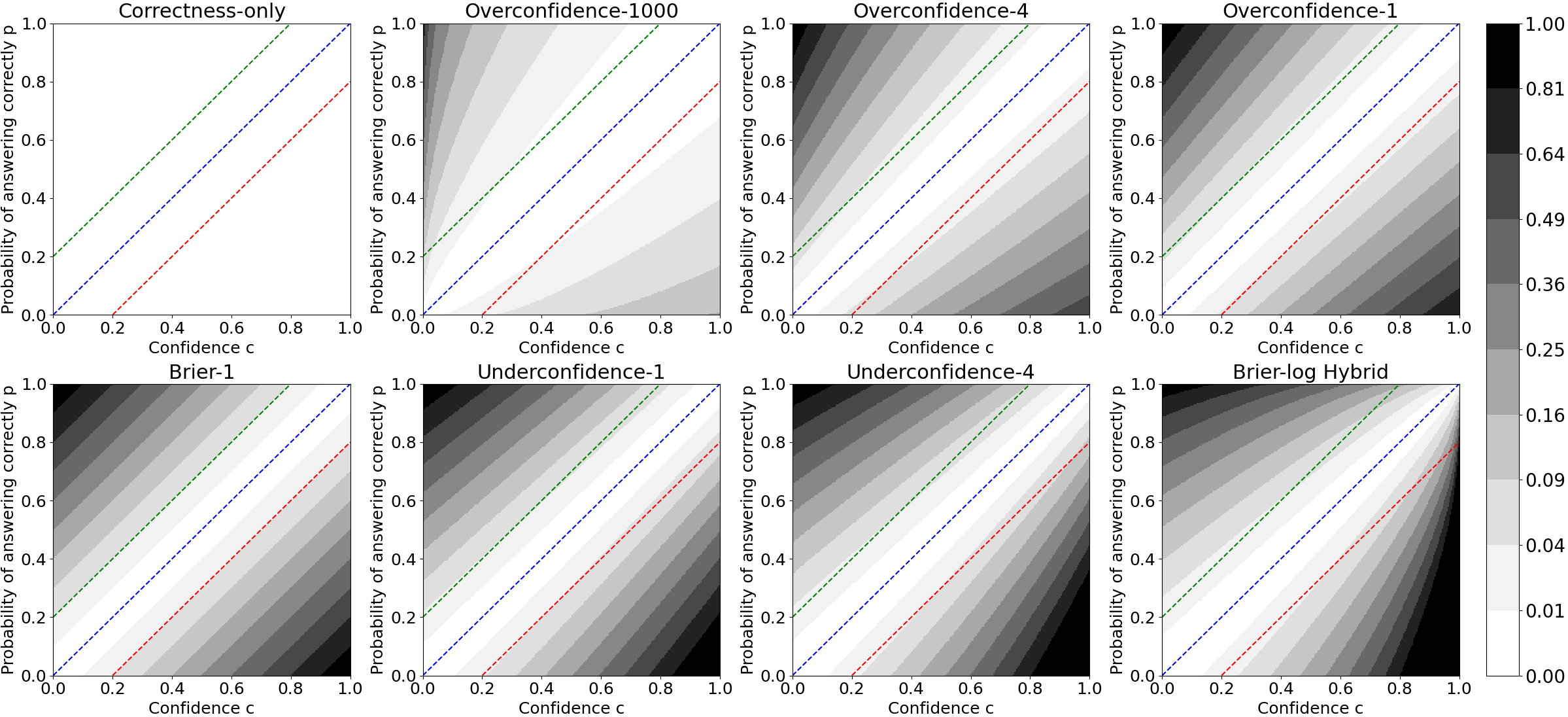}
    \caption{Miscalibration penalty of Correctness-only, Overconfidence-1000, Overconfidence-4, Overconfidence-1, Brier-1, Underconfidence-1, Underconfidence-4 and Brier-log Hybrid reward schemes. The reward schemes are as defined in Table \ref{table:reward-schemes}. Darker greys mean higher miscalibration penalty, which occur when the confidence \(c\) differs more significantly from the probability \(p\) of answering correctly. Blue line represents ideal confidence calibration. Green and red line represents being consistently underconfident and overconfident by 0.2 respectively.}
    \label{fig:miscalibration-penalty}
\end{figure}

\section{Experimental Settings}

In this section, we describe the datasets, reward schemes and the procedure for our experiments.

\subsection{Datasets Used}

For our experiments, we used the HotpotQA \cite{yang2018hotpotqa}, HotpotQA-Modified (inspired by \cite{damani2026beyond}), BigMath \cite{albalak2025bigmathlargescalehighqualitymath} and DeepMath-103K \cite{deepmath} datasets. HotpotQA is a textual reasoning dataset where the LLM is given sources and required to reason about the sources to provide the answer. HotpotQA-Modified is a variant of HotpotQA where two sources are removed at random. Questions with one or both removed sources being relevant are harder to answer because the LLM is required to answer based on its trained knowledge. BigMath and DeepMath-103K are math reasoning datasets. More details of the datasets used and their preprocessing can be found on Appendix \ref{appendix:experimental-setting-details}.

\subsection{Reward Schemes Used}

For our experiments, we use the reward schemes shown in Table \ref{table:reward-schemes}. Let \(R_{RQ2}\) = \{Correctness-only, Overconfidence-1000, Overconfidence-4, Overconfidence-1, Brier-1, Underconfidence-1, Underconfidence-4, Brier-log Hybrid\} denote the set of non-hackable confidence reward schemes that we will examine in our experiments. Justifications behind why each of the reward schemes are or are not hackable can be found in Appendix \ref{appendix:non-hackable-determination}. 

For Log-\(k\), we tested \(k \in \{1, \frac{1}{\ln 202}\}\). For Brier-\(k\), we tested \(k \in \{1, 2\}\). For Overconfidence-\(k\), we tested \(k \in \{1, 4, 1000\}\). For Underconfidence-\(k\), we tested \(k \in \{1, 4\}\). All reward schemes in Table \ref{table:reward-schemes} except Log Loss and Brier Score satisfy \(f(1^-) = 1\) and \(g(0^+) = 0\), which implies \(R_{max}(1^-)=1\) and \(R_{max}(0^+) = 0\). This ensures a consistent reward scale for a fair comparison among reward schemes with correctness rewards.

\subsection{Experimental Procedure}

We ran our experiments on Qwen 2.5 (3B) Instruct \cite{qwen2, qwen2.5}. The LLM was fine-tuned using SFT on a reformatted version of its own responses to help it follow the RL answering format. For each reward scheme, we ran RL training on the LLM using Dr GRPO \cite{liu2025understandingr1zeroliketrainingcritical}. We evaluated the model on Accuracy, Expected Calibration Error (ECE), Area Under Receiver Operating Characteristic Curve (AUROC), Brier score, average Brier-1 reward and Calibration bias. Calibration bias is defined as accuracy minus average confidence. A negative calibration bias indicates overconfidence while a positive calibration bias indicates underconfidence. More details on the metrics used can be found on Appendix \ref{appendix:evaluation-metrics-used}. For full details on the experimental procedure, refer to Appendix \ref{appendix:experimental-procedure}.

\begin{table}[h]
    \caption{Table of reward schemes used in experiments. \(f(c)\) is the reward for answering correctly while \(g(c)\) is the reward for answering incorrectly. Details on whether reward schemes listed are non-hackable confidence can be found in Appendix \ref{appendix:non-hackable-determination}.}
    \centering
    \begin{tabular}{c c c c}\toprule
         Reward Scheme&  \(f(c)\)& \(g(c)\) &Non-hackable confidence?\\ \midrule
 Correctness-only & 1& 0& Yes\\ \midrule
         Log-\(k\)&  \(1+ k\ln(c)\)&  \(k\ln(1-c)\)&No\\ \midrule
 Log Loss & \(\ln(c)\)& \(\ln(1-c)\)&No\\ \midrule
         Brier-\(k\) &  \(1- k(1-c)^2\)&  \(- kc^2\)& Only for \(k \leq 1\)\\ \midrule
 Brier Score & \(-(1-c)^2\)& \(-c^2\) &No\\ \midrule
         Brier-Log Hybrid&  \(c\)&  \(c + \ln(1-c)\)&Yes\\ \midrule
         Overconfidence-\(k\)& \(\frac{(k+1)\ln(ck+1)-ck}{(k+1)\ln(k+1)-k}\) & \( \frac{\ln(ck+1)-ck}{(k+1)\ln(k+1)-k}\) & Yes \\ \midrule
         Underconfidence-\(k\)& \(\frac{kc+\ln(1-\frac{kc}{1+k})}{k-\ln(1+k)}\) & \(\frac{kc+(k+1)\ln(1-\frac{kc}{1+k})}{k-\ln(1+k)}\)& Yes \\ \bottomrule
    \end{tabular}
    \label{table:reward-schemes}
\end{table}

\section{Experimental Results}

In this section, we analyze our experimental results to answer RQ1 and RQ2. The full results, containing performance statistics by difficulty split, can be found in Appendix \ref{appendix:full-results}.

\subsection{RQ1: Presence of Selective Reward Hacking}

\begin{table}
\centering
% Booktabs: https://nhigham.com/2019/11/19/better-latex-tables-with-booktabs/ is a good resource
% Adjustbox: https://tex.stackexchange.com/questions/10863/is-there-a-way-to-slightly-shrink-a-table-including-font-size-to-fit-within-th
\caption{Accuracy results for BigMath, DeepMath-103K, HotpotQA-Modified and HotpotQA for different reward schemes, split by difficulty. Bolded reward schemes are non-hackable confidence. The reward schemes are subdivided into three categories from top to bottom: independent of confidence, hackable confidence and other non-hackable confidence.}
\label{table:results-bigmath-hotpotqa-modified}
\begin{adjustbox}{width=\columnwidth,center}
\begin{tabular}{l c c c c c c c c c c} 
\toprule
\multirow{2}{*}{Reward Scheme} & \multicolumn{3}{c}{BigMath} & \multicolumn{3}{c}{DeepMath-103K} & \multicolumn{3}{c}{HotpotQA-Modified} & HotpotQA \\ \cmidrule(lr){2-4} \cmidrule(lr){5-7} \cmidrule(lr){8-10} \cmidrule(lr){11-11}
 & easy & medium & hard & easy & medium & hard & easy & medium & hard & hard \\ \midrule
\textbf{Correctness-only} & 0.9540 & 0.7545 & 0.3229 & 0.5308 & 0.3940 & 0.4189 & 0.6273 & 0.4250 & 0.2100 & 0.6571 \\ \midrule
Log Loss & 0.0001 & 0.0001 & 0.0000 & 0.0000 & 0.0000 & 0.0003 & 0.0001 & 0.0000 & 0.0000 & 0.0000 \\
Brier Score & 0.0003 & 0.0001 & 0.0000 & 0.0001 & 0.0001 & 0.0001 & 0.0000 & 0.0001 & 0.0000 & 0.0000 \\
Log-1 & 0.9388 & 0.6981 & 0.2748 & 0.0001 & 0.0000 & 0.0000 & 0.2022 & 0.0619 & 0.0204 & 0.6472 \\
Brier-2 & 0.9530 & 0.7520 & 0.3185 & 0.0000 & 0.0001 & 0.0004 & 0.5296 & 0.2323 & 0.0958 & 0.6507 \\
Log-\(\frac{1}{\ln 202}\) & 0.9501 & 0.7427 & 0.3139 & 0.5334 & 0.4059 & 0.4260 & 0.6271 & 0.4203 & 0.2093 & 0.6537 \\ \midrule
\textbf{Overconfidence-1000} & 0.9529 & 0.7506 & 0.3172 & 0.5262 & 0.3886 & 0.4197 & 0.6228 & 0.4135 & 0.2050 & 0.6548 \\
\textbf{Overconfidence-4} & 0.9509 & 0.7474 & 0.3169 & 0.5324 & 0.4054 & 0.4167 & 0.6248 & 0.4187 & 0.2070 & 0.6526 \\
\textbf{Overconfidence-1} & 0.9521 & 0.7532 & 0.3211 & 0.5205 & 0.3855 & 0.4063 & 0.6328 & 0.4238 & 0.2055 & 0.6507 \\
\textbf{Brier-1} & 0.9529 & 0.7480 & 0.3117 & 0.5108 & 0.3796 & 0.3976 & 0.6317 & 0.4184 & 0.2028 & 0.6504 \\
\textbf{Underconfidence-1} & 0.9505 & 0.7444 & 0.3146 & 0.5191 & 0.3876 & 0.4105 & 0.6245 & 0.4197 & 0.2110 & 0.6551 \\
\textbf{Underconfidence-4} & 0.9552 & 0.7573 & 0.3279 & 0.4948 & 0.3613 & 0.3911 & 0.6199 & 0.4158 & 0.2008 & 0.6529 \\
\textbf{Brier-log-hybrid} & 0.9513 & 0.7460 & 0.3169 & 0.3312 & 0.2263 & 0.2919 & 0.6111 & 0.4003 & 0.1915 & 0.6439 \\ \bottomrule

\end{tabular}
\end{adjustbox}
\end{table}

To answer RQ1, we demonstrate the presence of confidence reward hacking in reward schemes that allow such hacking, which can occur to varying degrees in practice. Table \ref{table:results-bigmath-hotpotqa-modified} shows the accuracy statistics for BigMath, DeepMath-103K, HotpotQA-Modified and HotpotQA, split by question difficulty, where the LLM is trained under different reward schemes. 

The LLM trained with the Log Loss and Brier Score reward schemes obtained a near-zero accuracy, highlighting the necessity of adding a correctness reward (which may vary by confidence) when both answering ability and confidence calibration ability are simultaneously trained. 

Compared to the Correctness-only reward scheme, Log-1 reward scheme demonstrates more severe accuracy loss in HotpotQA-Modified and BigMath than in the other examined non-hackable confidence reward schemes. The accuracy decrease in relative terms is more significant with increasing difficulty in the test set (e.g. 14.9\% on hard, 7.5\% on medium and 1.6\% on easy relative to Correctness-only for BigMath), suggesting that the LLM has learnt to give up answering the questions it finds harder to answer. The result for Brier-2 reward scheme is qualitatively similar to Log-1 for HotpotQA-Modified but Brier-2 maintained similar accuracy as Correctness-only for BigMath, suggesting that Brier-2 is less prone to confidence reward hacking than Log-1.

Both Brier-2 and Log-1 satisfy \(R_{max}(0^+) = 0\), \(R_{max}(c) < 0\) for all \(c \in (0, p)\) and \(R_{max}(c) > 0\) for all \(c \in (p, 1)\) for some \(p \in (0, 1)\). The value of \(p\) for Brier-2 is \(\frac{1}{2}\), which is smaller than \(p \approx 0.64842\) for Log-1. The confidence threshold below which it is optimal for the LLM to always answer incorrectly with confidence approaching \(0^+\) is lower for Brier-2, hence Brier-2 is less prone to confidence reward hacking.

During the final step of the training run, as evidence of selective confidence reward hacking, the LLM has occasionally given up on answering the question, giving responses such as ``Unknown" with confidence 0. Appendix \ref{appendix:reward-hacking-examples} shows a case study of selective confidence reward hacking.

In contrast, no evidence of confidence reward hacking occurred for Log-\(\frac{1}{\ln 202}\) as the accuracy remained similar to Correctness-only. Log-\(\frac{1}{\ln 202}\) is a non-hackable confidence reward scheme over \((\frac{1}{203}, 1)\) but not \((0, 1)\). In practice, we can clip the confidence values to \([\frac{1}{203}, 1-\frac{1}{203}]\) to prevent confidence reward hacking, similar to what is proposed in Equation 4 of \cite{wu2026mitigatingllmhallucinationbehaviorally}, and the ``truncated logarithmic scoring system'' in \cite[p. 137]{shuford1966admissible}.

Overall, this suggests that the presence and extent of reward hacking depends on the difficulty distribution of the dataset and the reward scheme used. Model size influences reasoning ability of the model \cite{qwen3technicalreport}. Both reasoning ability of the model and dataset distribution influence the difficulty distribution of the dataset. Therefore, the findings indirectly support the hypothesis in \cite[p. 23]{damani2026beyond}.

\textbf{Answer to RQ1:} 

Confidence reward hacking, selective or total, can occur when a hackable confidence reward scheme is used. Its occurrence and type depends on the difficulty distribution of the dataset and the reward scheme used.

\subsection{RQ2: Accuracy and Calibration Tradeoffs} \label{section:results-rq2}

\begin{figure}[h]
    \centering
    \includegraphics[width=1\linewidth]{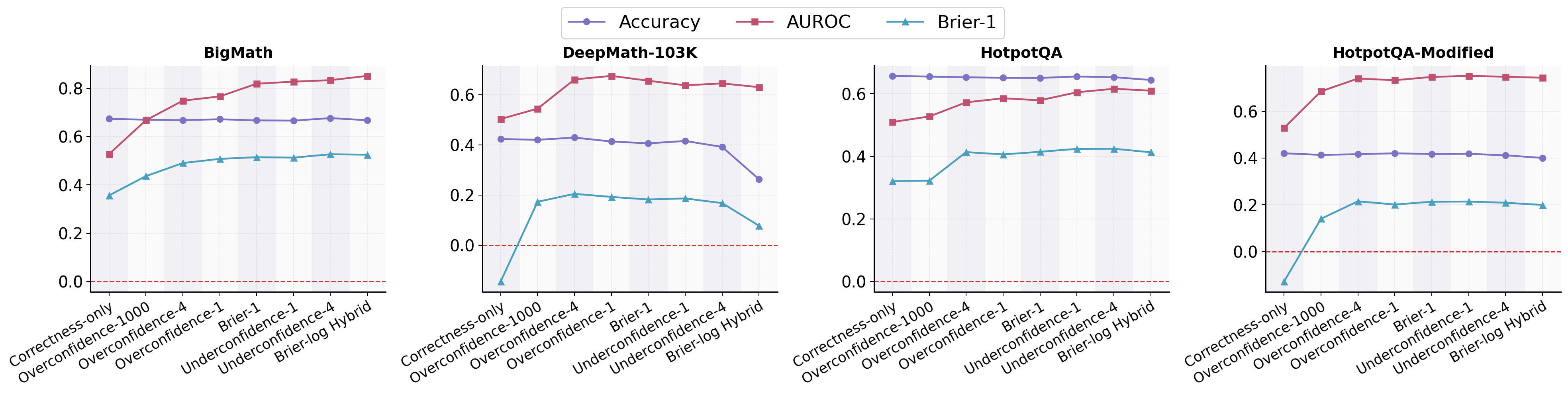}
    \caption{Accuracy, AUROC and Brier-1 performance metrics for BigMath, DeepMath, HotpotQA and HotpotQA-Modified datasets (from left to right) for non-hackable confidence reward schemes ordered in the spectrum from the overconfident end to the underconfident end (from left to right). For all three performance metrics, higher is better.}
    \label{fig:higher-is-better-line-chart}
\end{figure}

To answer RQ2, we demonstrate using Figures \ref{fig:higher-is-better-line-chart}, \ref{fig:closer-to-0-is-better-line-chart} and \ref{fig:confidence-calibration-plot-bigmath} that the best reward scheme within the overconfidence to underconfidence spectrum depends on both the dataset and the metric used. Subsequently, we demonstrate using Figure \ref{fig:deepmath-103k-training} that the reward scheme influences the training speed of the LLM in both accuracy and confidence calibration.

Figure \ref{fig:higher-is-better-line-chart} shows the accuracy, AUROC and Brier-1 reward metric for the reward schemes in \(R_{RQ2}\) when ordered from the overconfident end to the underconfident end of the spectrum (from left to right of plot). Brier-1 is an integrated metric which takes into account both accuracy and confidence calibration, simulating one possible application where accuracy and confidence are both important. 

As one moves towards the underconfident end of the spectrum, the accuracy is either maintained or generally decreases. The decrease in accuracy is most notable in DeepMath-103K and is the least pronounced in BigMath. In contrast, AUROC and Brier-1 initially improve, but may deteriorate or level off as one moves towards the underconfident end of the spectrum. The degree of deterioration depends on the dataset and the metric. Underconfidence-4 and Overconfidence-4 have the best Brier-1 performance in BigMath and DeepMath-103K respectively while Brier-log Hybrid and Overconfidence-1 have the best AUROC in BigMath and DeepMath-103K respectively, highlighting that the best reward scheme depends on both the dataset and the metric.

\begin{figure}[h]
    \centering
    \includegraphics[width=1\linewidth]{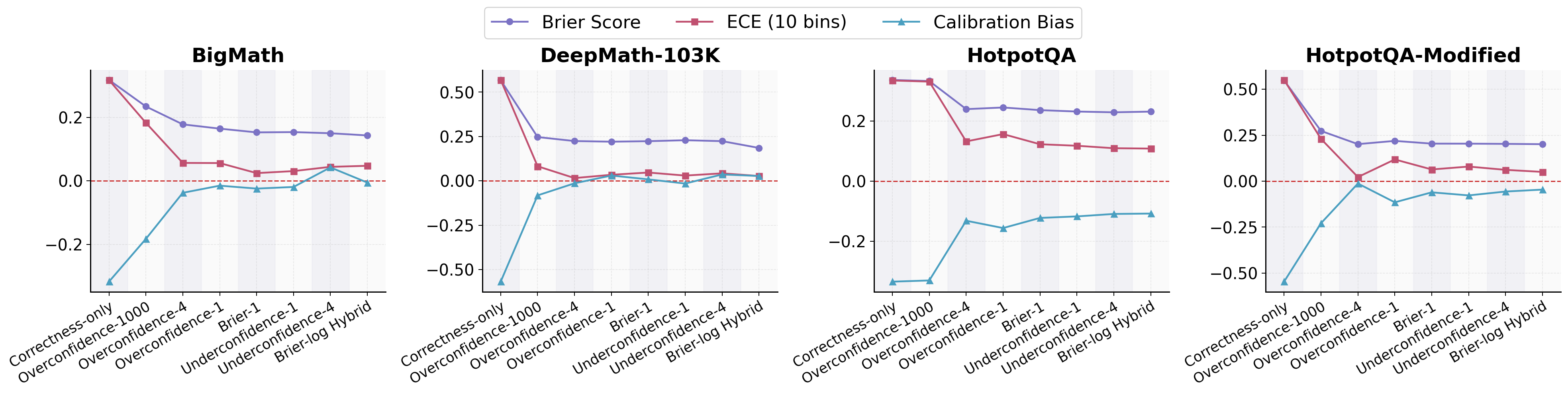}
        \vspace{0.05in}
    \caption{Brier score, ECE (10 bins) and Calibration bias statistics for BigMath, DeepMath-103K, HotpotQA and HotpotQA-Modified datasets (from left to right) for non-hackable confidence reward schemes ordered in the spectrum from the overconfident end to the underconfident end (from left to right). For all three performance metrics, closer to 0 (red dotted line) is better.}
    \label{fig:closer-to-0-is-better-line-chart}
\end{figure}

\begin{figure}[h]
    \centering
    \includegraphics[width=1\linewidth]{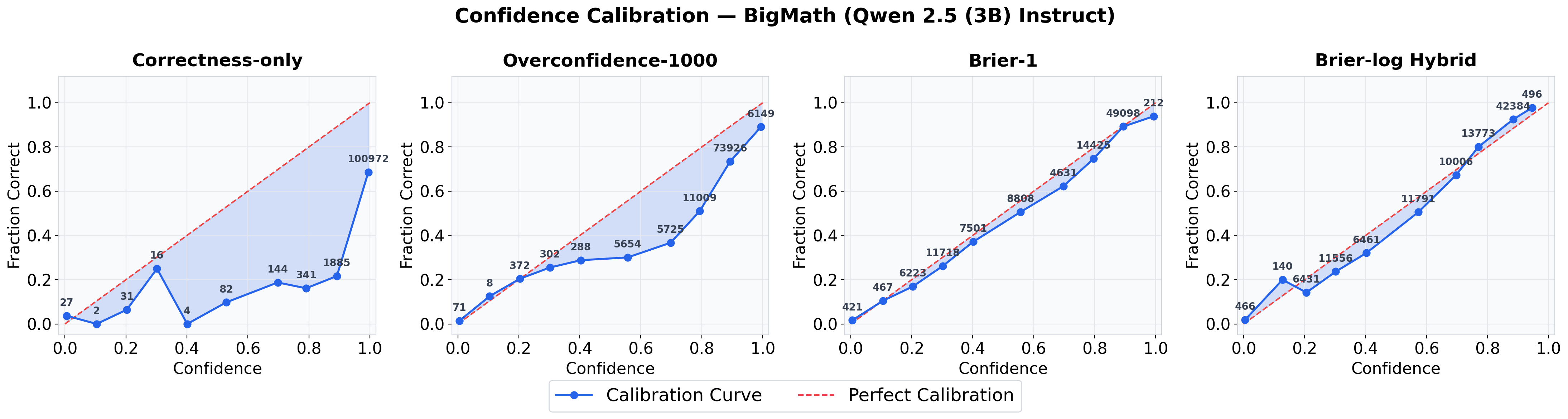}
        \vspace{0.05in}
    \caption{Confidence calibration plots of the LLM after RL training with Correctness-only, Overconfidence-1000, Brier-1 and Brier-log Hybrid reward schemes (from left to right, in order from overconfident end to underconfident end of spectrum) from the BigMath dataset. Each question in the test set was evaluated with 16 different responses. Confidence values are subdivided into 10 equally spaced bins, with tie-breaking at boundaries to the right. The label of each point represents the size of each bin. The ideal calibration line is in red.}

    \label{fig:confidence-calibration-plot-bigmath}
\end{figure}

Figure \ref{fig:closer-to-0-is-better-line-chart} shows the Brier score, ECE (10 bins) and calibration bias for the reward schemes in \(R_{RQ2}\) when ordered from the overconfident end of the spectrum to the underconfident end (from left to right of plot). As one moves towards the underconfident end of the spectrum, both Brier score and ECE generally improve before leveling off around Overconfidence-4. As one moves toward the underconfident end of the spectrum, the calibration bias generally becomes less negative. This is further supported by the confidence calibration plots in Figure \ref{fig:confidence-calibration-plot-bigmath} depicting the LLM accuracy in each confidence bin when trained and evaluated on the BigMath dataset.

\begin{figure}[t]
    \centering
    \includegraphics[width=1\linewidth]{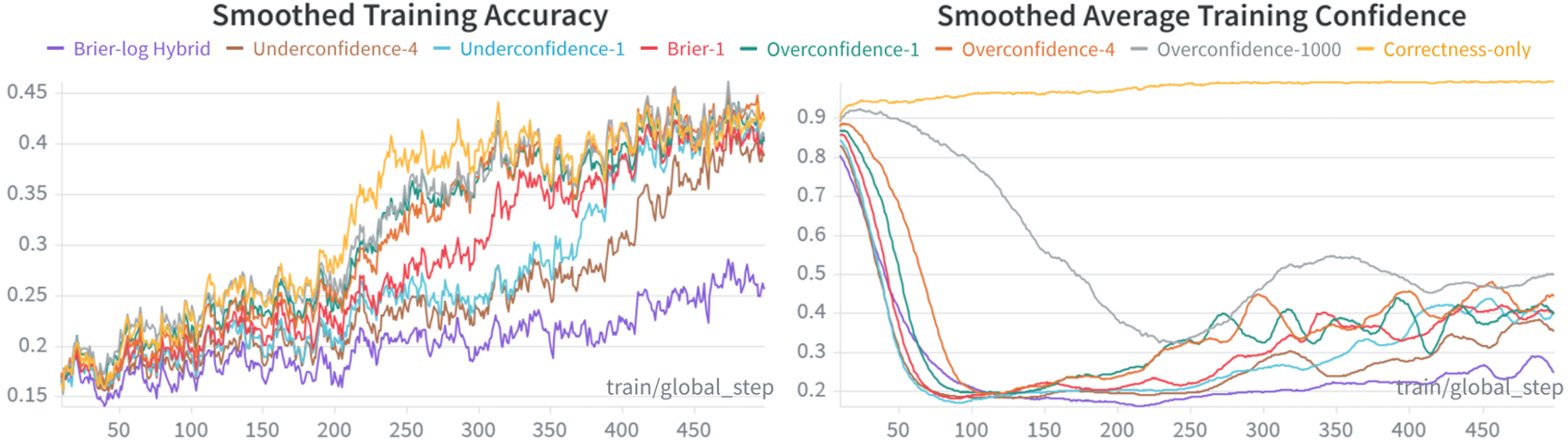}

    \caption{Training accuracy (left) and training confidence (right) statistics for DeepMath-103K from Step 10 to Step 499 (0-indexed), smoothed using an exponential moving average \cite[p. 15]{brown1956exponential}, both initialized at 0.5 with smoothing factor of 0.2.}
    \label{fig:deepmath-103k-training}
\end{figure}

Figure \ref{fig:deepmath-103k-training} shows the training accuracy and training confidence of Qwen 2.5 (3B) Instruct when trained with different non-hackable confidence reward schemes in DeepMath-103K. The training speed is generally slower towards the underconfident end of the spectrum, while the confidence calibration is generally faster apart from a significant anomaly in Brier-Log Hybrid. Speed of confidence calibration is important in resource-limited applications where confidence calibration is prioritized over accuracy.

While the initial LLM, i.e. just after the SFT stage, is overconfident, its AUROC is above 0.5 (see Appendix \ref{appendix:full-results} for full results), suggesting that answers with lower confidence are less likely to be correct. With higher confidence penalty at high confidences towards the underconfident end of the spectrum, the LLM is incentivized to generate responses with lower confidence as these responses have higher expected reward. Such answers are less likely to be correct, leaving the LLM with fewer correct responses to learn from, which slows down the improvements in training accuracy.

The general trend in the confidence calibration speed is likely due to the higher miscalibration penalties towards the underconfident end of the spectrum when initial confidence is high. The anomaly is likely due to relatively low miscalibration penalties at low confidences and low probability of answering correctly in DeepMath-103K when using the Brier-log Hybrid reward scheme.

As the priority metric varies by application, the optimal choice of reward scheme depends on the application. Compute budget may influence the optimal choice as different reward schemes have different training speeds and confidence calibration speeds. Therefore, we suggest adopting the reward scheme as a hyperparameter to tune the tradeoffs.

\textbf{Answer to RQ2:} 

Accuracy and speed of training accuracy improvements generally improve towards the overconfident end of the spectrum. Brier score generally improves and calibration bias generally becomes less negative towards the underconfident end of the spectrum. The optimal reward scheme for AUROC and Brier-1 is generally in between Overconfidence-4 and Brier-log Hybrid (both inclusive), and depends on both the dataset and the metric. As one moves towards the underconfident end of the spectrum, the speed of confidence calibration generally improves but may slow down if the initial probability of answering correctly is low. The optimal tradeoff within the spectrum depends on the dataset and the application since different applications require prioritization of different metrics.

\section{Limitations and Future Work}

In this section, we analyze the limitations of our work and provide future research directions. 

First, correctness evaluation is not perfect for practical datasets. Therefore, evaluation noise may seep into confidence estimates, leading to post-training biases.

Second, this work only applies to scenarios where an answer can be considered completely correct, or completely wrong. Future work can generalize to scenarios involving partially correct answers earning partial credit, categorical distribution of possible correct answers and long-form answers where only some sentences, but not others, are factually correct.

Third, this work is dependent on the stability of RL training of LLM. Future work can investigate into the loss landscapes of different reward schemes to improve learning stability.

\section{Conclusion}

In this paper, we examined RL for LLMs in the setting where the LLM is trained to jointly improve reasoning accuracy and verbalized confidence calibration. We identified a failure mode, which we call confidence reward hacking, in which a poorly designed reward scheme can incentivize the LLM to provide an answer it can confidently recognize as incorrect. Therefore, rewarding confidence naively can create incentives that conflict with accuracy.

To address this issue, we introduced non-hackable confidence reward schemes, which removes the incentive for an LLM to trade correctness for high confidence in incorrectness. Furthermore, we introduced an overconfidence to underconfidence spectrum of reward schemes which contain reward schemes that favor overconfidence by \(\delta\) over underconfidence by \(\delta\) and rewards that favor underconfidence by \(\delta\) over overconfidence by \(\delta\). 

Empirically, we showed that hackable confidence reward schemes can lead to selective confidence reward hacking on practical datasets, with accuracy degradation concentrated on harder questions compared with training for accuracy alone. We also found that no single non-hackable confidence reward scheme consistently performs best across different metrics and different datasets. Thus, the reward scheme should be tuned as a hyperparameter according to the calibration and accuracy requirements of the target application.

\printbibliography

\appendix

\section{Proofs of Theorems In Paper}\label{appendix:proof-theorems}

\subsection{Non-hackable confidence reward schemes}\label{appendix:non-hackable-confidence-reward-schemes}

In these proofs, \(R\) is as defined in Proper Scoring property of Definition \ref{definition:non-confidence-hackable-reward-scheme}, and \(R_{max}\) is as defined in Best Effort property of Definition \ref{definition:non-confidence-hackable-reward-scheme}. We first motivate our definition of non-hackable confidence reward schemes, then proceed to characterize the set of non-hackable confidence reward schemes.

\begin{remark} \label{remark:why-consider-non-hackable-confidence-reward-schemes} Why we consider non-hackable confidence reward schemes in the first place.

Consider the strategy where the LLM performs at its best and honestly reports its confidence \(c\), aligned with the Bayesian probability of answering correctly. We want this strategy to be optimal. Its expected reward is \(R_{max}(c)\).

Suppose \((f(c), g(c))\) is a hackable confidence reward scheme. Then, there exists \(c \in (0, 1)\) and \(p \in (0, c)\) such that \(R_{max}(p) > R_{max}(c)\). Consider the strategy where the LLM reports a confidence of \(p\), and answers to the best of its ability with probability \(\frac{p}{c}\) and answers gibberish, assumed to be a wrong, otherwise. Then, the probability of the LLM answering correctly is \(\frac{p}{c}c + (1-\frac{p}{c})(0) = p\). Therefore, the expected reward of this strategy is \(R_{max}(p) > R_{max}(c)\).

Therefore, the LLM can obtain a higher maximum expected reward by intentionally answering wrongly with some probability, for instance, by answering some gibberish. In practice, this defeats the purpose of doing RL to improve LLM reasoning, as this loophole encourages the LLM to answer incorrectly. Since such a \(p\) may only exist for a proper subset of questions in the dataset, it is possible that the LLM will selectively choose the subset of questions to answer incorrectly, which may lead to difficult debugging or even non-detection of bugs if the overall accuracy happens to increase during the training. 

\end{remark}

\begin{lemma}\label{lemma:derivation-of-h}

Reward schemes \( (f(c), g(c)) \) that satisfy the Proper Scoring property over \( (0, 1) \) must necessarily satisfy \(  \frac{f'(c)}{c-1} = \frac{g'(c)}{c} \) over \( (0, 1) \). 

\end{lemma}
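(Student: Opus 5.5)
The plan is a first-order optimality argument. Fix $p \in (0,1)$. By the Proper Scoring property, $p$ maximizes $c \mapsto R(c,p) = p f(c) + (1-p) g(c)$ over the open interval $(0,1)$. A reward scheme has continuous first derivatives, so $R(\cdot, p)$ is differentiable on $(0,1)$. Since $p$ is an interior maximizer, Fermat's theorem gives
\begin{equation*}
\frac{\partial R}{\partial c}(p,p) = p f'(p) + (1-p) g'(p) = 0 .
\end{equation*}
This holds for every $p \in (0,1)$. Renaming $p$ as $c$, we get $c f'(c) = (c-1) g'(c)$ for all $c \in (0,1)$.

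Next, divide both sides by $c(c-1)$. This is legitimate because $c \in (0,1)$ forces $c \neq 0$ and $c \neq 1$. The division yields
\begin{equation*}
\frac{f'(c)}{c-1} = \frac{g'(c)}{c},
\end{equation*}
which is the claimed identity.

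The only point needing care is that Fermat's theorem applies: the maximizer must lie in the interior of an open domain, and the function must be differentiable there. Both hold here, since $c$ ranges over the open interval $(0,1)$, the lemma takes $p$ in $(0,1)$, and Definition 1 gives $f,g \in C^1$. No second-order or global information is needed.

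The same computation shows that the common value of the two quotients defines the function $h$ in Theorem \ref{theorem:characterization-of-non-reward-hackable-schemes}(i). I expect no real obstacle.
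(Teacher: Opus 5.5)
Your proposal is correct and matches the paper's proof. Both use the first-order condition $p f'(p) + (1-p) g'(p) = 0$ at the interior maximizer $c = p$ and then divide by $p(p-1)$, which is nonzero on $(0,1)$; your explicit appeal to Fermat's theorem (interior point of an open interval, differentiability from the $C^1$ assumption) simply states a step the paper leaves implicit.
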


\begin{remark}
A related result, i.e. both Lemma \ref{lemma:derivation-of-h} and its converse hold assuming the Interpretability property, was proven in \cite{shuford1966admissible} and generalized by \cite{schervish1989general}, with varying assumptions about \(f(c)\) and \(g(c)\). In this section, we take a slightly different approach and prove Lemma \ref{lemma:derivation-of-h} in the context of characterizing non-hackable confidence reward schemes.
\end{remark}

\begin{proof}[\textbf{\textup{Proof of Lemma \ref{lemma:derivation-of-h}}}] 

\mbox{}

Consider \( R(c, p) = p f(c) + (1-p) g(c) \), where \(p\) is treated as constant. Since \(f\) and \(g\) are assumed to be have continuous first derivatives, \(R\) is partially differentiable with respect to \(c\). Differentiating with respect to \(R\), we obtain \(\frac{\partial R}{\partial c} = p f'(c) + (1-p) g'(c)\).

By the Proper Scoring property, since \(R(c, p)\) obtains its maximum at \(c=p\) when keeping \(p\) constant, \( p f'(p) + (1-p) g'(p) = 0 \) for all \( p \in (0, 1) \). Therefore, \( p f'(p) = (p-1) g'(p) \). Since \( p \in (0, 1)\), $p \neq 0$ and $p-1 \neq 0$, hence we divide by \(p(p-1)\) on both sides to obtain \( \frac{f'(p)}{p-1} = \frac{g'(p)}{p} \). A change of variable from \(p\) to \(c\) yields the result in the lemma. 

\end{proof}

\begin{definition}\label{definition:h}

For any reward scheme such that \(\frac{f'(c)}{c-1} = \frac{g'(c)}{c}\) for all \( c \in (0, 1) \), the function \(h(c)\) is well-defined over \((0,1)\) as \(h(c) = \frac{f'(c)}{c-1} = \frac{g'(c)}{c}\). In such a scenario, \(f(c)\) and \(g(c)\) can be expressed as \(f(c) = \int (c-1)h(c) \: \text{d}c\) and \(g(c) = \int ch(c) \: \text{d}c\), up to arbitrary additive constants.

\end{definition}

\begin{lemma} \label{lemma:h-interpretability}
Assuming \(h(c)\) is well-defined (as defined in Definition \ref{definition:h}), the Interpretability property holds if and only if \(h(c) \le 0\) for all \(c \in (0, 1)\).
\end{lemma}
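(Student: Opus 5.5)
The approach is to translate the Interpretability property into sign conditions on the derivatives \(f'\) and \(g'\), then use the defining relations of Definition \ref{definition:h}, \(f'(c) = (c-1)h(c)\) and \(g'(c) = c\,h(c)\), to read those sign conditions off the sign of \(h\). Since \(f\) and \(g\) are assumed to have continuous first derivatives on the open interval \((0,1)\), the standard fact is available: a \(C^1\) function on an open interval is non-decreasing (resp. non-increasing) if and only if its derivative is everywhere \(\geq 0\) (resp. \(\leq 0\)). One direction follows from the mean value theorem. The other follows from the sign of difference quotients in the limit.

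\textbf{Reverse direction.} Assume \(h(c) \leq 0\) on \((0,1)\). For \(c \in (0,1)\) we have \(c - 1 < 0\), so \(f'(c) = (c-1)h(c) \geq 0\). We also have \(c > 0\), so \(g'(c) = c\,h(c) \leq 0\). By the mean value theorem, \(f\) is non-decreasing and \(g\) is non-increasing on \((0,1)\), which is exactly the Interpretability property.

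\textbf{Forward direction.} Assume \(f\) is non-decreasing and \(g\) is non-increasing on \((0,1)\). Then \(g'(c) \leq 0\) for all \(c\), since every difference quotient of \(g\) is non-positive and so is their limit. Because \(c > 0\), it follows that \(h(c) = g'(c)/c \leq 0\). Alternatively, \(f'(c) \geq 0\) gives \(h(c) = f'(c)/(c-1) \leq 0\), since \(c - 1 < 0\). Either route alone suffices, so the two relations are consistent.

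The argument is elementary. The only step needing care is the equivalence between monotonicity and derivative sign, which we invoke on the open interval where differentiability is assumed. We also need the strict signs \(c > 0\) and \(c - 1 < 0\), which hold because \(c\) lies in the open interval \((0,1)\). These strict signs are what make division by \(c\) and by \(c-1\) legitimate and sign-preserving or sign-reversing as stated.
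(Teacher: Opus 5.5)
Your proposal is correct and takes essentially the same route as the paper. Both arguments turn the monotonicity of \(f\) and \(g\) into sign conditions on \(f'(c)=(c-1)h(c)\) and \(g'(c)=c\,h(c)\), then use \(c-1<0\) and \(c>0\) on \((0,1)\); the paper writes this as two chains of equivalences (one for \(f\), one for \(g\)) instead of two directions, but the content is the same.
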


\begin{proof}[\textbf{\textup{Proof of Lemma \ref{lemma:h-interpretability}}}]

\mbox{}

Recall that the Interpretability property is that \(f\) is non-decreasing on \((0, 1)\) and \(g\) is non-increasing on \((0, 1)\).
Because \(f\) and \(g\) are differentiable on \((0, 1)\):
\begin{flalign*}
&\text{\(f\) is non-decreasing on } (0, 1) &\\
&\iff f'(c) = (c-1) h(c) \geq 0 \text{ for all } c \in (0, 1)&\\
&\iff h(c) \le 0 \text{ for all } c \in (0, 1) \text{ since } c-1 < 0 \text { for all } c \in (0, 1)&
\end{flalign*}

Similarly,
\begin{flalign*}
&\text{\(g\) is non-increasing on } (0, 1) &\\
&\iff g'(c) = c h(c) \leq 0 \text{ for all } c \in (0, 1)&\\
&\iff h(c) \leq 0 \text{ for all } c \in (0, 1) \text{ since } c > 0 \text{ for all } c \in (0, 1).&
\end{flalign*}

Thus, the Interpretability property holds if and only if $h(c) \le 0$ over $(0, 1)$.

\end{proof}

\begin{lemma}\label{lemma:partial-derivative-computation}

Assuming \(h\) is well-defined, \(\frac{\partial}{\partial c}{R(c, p)} = (c - p)h(c)\) for all \(c \in (0, 1)\)

\end{lemma}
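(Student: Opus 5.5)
The computation is direct: I will differentiate \(R(c,p)\) with respect to \(c\) and then substitute the representation of \(f'\) and \(g'\) supplied by \(h\).

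First, as in the proof of Lemma \ref{lemma:derivation-of-h}, \(f\) and \(g\) have continuous first derivatives on \((0,1)\). Holding \(p\) fixed, \(R(c,p) = p f(c) + (1-p) g(c)\) is therefore differentiable in \(c\), with
\[
\frac{\partial R}{\partial c}(c,p) = p f'(c) + (1-p) g'(c).
\]

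Second, since \(h\) is well-defined in the sense of Definition \ref{definition:h}, we have \(f'(c) = (c-1)h(c)\) and \(g'(c) = c\,h(c)\) for every \(c \in (0,1)\). Substituting these gives
\[
\frac{\partial R}{\partial c}(c,p) = \bigl(p(c-1) + (1-p)c\bigr) h(c).
\]

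Third, the bracket simplifies:
\[
p(c-1) + (1-p)c = pc - p + c - pc = c - p.
\]
This yields \(\frac{\partial}{\partial c} R(c,p) = (c-p)h(c)\) for all \(c \in (0,1)\). The identity holds for every \(p \in [0,1]\), because the argument never uses anything about \(p\) beyond its being a constant.

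There is no substantive obstacle here. The only point needing care is that the identities \(f' = (c-1)h\) and \(g' = ch\) are genuine pointwise equalities on all of \((0,1)\), rather than statements about antiderivatives up to constants. This follows directly from how \(h\) is defined, since \(c-1 \neq 0\) and \(c \neq 0\) on \((0,1)\), so the defining quotients can be cleared.
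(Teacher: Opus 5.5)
Your proposal is correct and follows the paper's proof: differentiate \(R(c,p) = p f(c) + (1-p)g(c)\) in \(c\), substitute \(f'(c) = (c-1)h(c)\) and \(g'(c) = c\,h(c)\) from the definition of \(h\), and simplify the coefficient to \(c-p\). Your remarks on differentiability and on \(p\) being arbitrary are consistent with the paper's setup and add nothing problematic.
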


\begin{proof}[\textbf{\textup{Proof of Lemma \ref{lemma:partial-derivative-computation}}}]

\mbox{}

By the definition of the expected reward \(R\) in the Proper Scoring property of Definition \ref{definition:non-confidence-hackable-reward-scheme}, \(R(c, p) = p f(c) + (1-p)g(c)\). Therefore, \(\frac{\partial}{\partial c}{R(c, p)} = p f'(c) + (1-p) g'(c)\).

From Definition \ref{definition:h}, $f'(c) = (c-1)h(c)$ and $g'(c) = c h(c)$ for all $c \in (0, 1)$. 

Therefore, \(\frac{\partial}{\partial c}{R(c, p)} = p(c-1)h(c) + (1-p)ch(c) = (pc - p)h(c) + (c - pc)h(c) = (c - p)h(c)\)

\end{proof}

\begin{corollary}\label{corollary:unimodal-R}
For any fixed \(p\), if the reward scheme satisfies both the Interpretability and the Proper Scoring property, \(R(c, p)\) is non-decreasing on \(c \in (0, p)\) and non-increasing on \(c \in (p, 1)\).
\end{corollary}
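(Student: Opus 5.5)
The plan is to reduce everything to the sign of the partial derivative computed in Lemma \ref{lemma:partial-derivative-computation}. Fix \(p \in (0,1)\) and assume the reward scheme satisfies both the Interpretability and the Proper Scoring property. By Lemma \ref{lemma:derivation-of-h}, Proper Scoring guarantees that \(\frac{f'(c)}{c-1} = \frac{g'(c)}{c}\) on \((0,1)\). Hence the function \(h\) of Definition \ref{definition:h} is well-defined. By Lemma \ref{lemma:h-interpretability}, Interpretability then gives \(h(c) \le 0\) for all \(c \in (0,1)\). Lemma \ref{lemma:partial-derivative-computation} supplies
\[
\frac{\partial}{\partial c} R(c,p) = (c-p)\,h(c).
\]

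Next I would do a case split on the position of \(c\) relative to \(p\).
\begin{itemize}
\item For \(c \in (0,p)\), we have \(c - p < 0\) and \(h(c) \le 0\). So \(\frac{\partial}{\partial c}R(c,p) \ge 0\) throughout \((0,p)\).
\item For \(c \in (p,1)\), we have \(c - p > 0\) and \(h(c) \le 0\). So \(\frac{\partial}{\partial c}R(c,p) \le 0\) throughout \((p,1)\).
\end{itemize}

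To pass from the sign of the derivative to monotonicity, note that \(R(\cdot,p) = p f + (1-p) g\) is continuously differentiable on \((0,1)\), since \(f\) and \(g\) are. Take any \(c_1 < c_2\) in \((0,p)\). The mean value theorem gives \(R(c_2,p) - R(c_1,p) = (c_2 - c_1)\,\partial_c R(\xi,p)\) for some \(\xi \in (c_1,c_2) \subseteq (0,p)\), and this is \(\ge 0\). Hence \(R(\cdot,p)\) is non-decreasing on \((0,p)\). The same argument on \((p,1)\) shows that \(R(\cdot,p)\) is non-increasing there.

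There is no real obstacle here; the only point needing care is the order of the logical steps. The sign condition on \(h\) must be derived via Lemma \ref{lemma:h-interpretability}, and that lemma presupposes \(h\) is well-defined. So Lemma \ref{lemma:derivation-of-h} has to be invoked first, using Proper Scoring, to justify the existence of \(h\).
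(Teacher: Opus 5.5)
Your proof is correct and follows the paper's argument essentially step for step. You use Lemma~\ref{lemma:derivation-of-h} to get $h$, Lemma~\ref{lemma:h-interpretability} for $h\le 0$, Lemma~\ref{lemma:partial-derivative-computation} for $\partial_c R=(c-p)h(c)$, and then a sign analysis on each side of $p$; your explicit mean value theorem step just spells out the passage from derivative sign to monotonicity that the paper leaves implicit.
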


\begin{proof}[\textbf{\textup{Proof of Corollary \ref{corollary:unimodal-R}}}] 

\mbox{}

Since the reward scheme satisfies the Proper Scoring property, by Lemma \ref{lemma:derivation-of-h}, \(h\) is well-defined.

Since the reward scheme additionally satisfies the Interpretability property, by Lemma \ref{lemma:h-interpretability}, \(h\) is non-positive.

By Lemma \ref{lemma:partial-derivative-computation}, \(\frac{\partial}{\partial c}R(c, p) = (c-p)h(c)\).
For \(c \in (0, p)\), \(c - p < 0\).
For \(c \in (p, 1)\), \(c - p > 0\).

This implies that keeping \(p\) fixed, \(R(c, p)\) is non-decreasing on \(c \in (0, p)\) and non-increasing on \(c \in (p, 1)\).

\end{proof}

\begin{corollary}\label{corollary:unimodal-R-discrete}

\mbox{}

Now, we assume the LLM can only output confidences \(c \in (0, 1)\) in some finite set \(C_{valid}\). All elements in \(C_{valid}\) are assumed to be in \((0, 1)\). Let \(c_{min} = \min C_{valid}\) and \(c_{max} = \max C_{valid}\). 

Take any \(p \in (0, 1)\). Let \(C_{\leq} = \{c \in C_{valid} | c \leq p\}\) and \(C_{>} = \{c \in C_{valid} | c > p\}\).

Let \(c_{left} = \max C_{\leq} \) if \(C_{\leq}\) is not empty, otherwise set to \(c_{min}\).

Let \(c_{right} = \min C_{>} \) if \(C_{>}\) is not empty, otherwise set to \(c_{max}\).

Suppose the reward scheme satisfies both the Interpretability property and the Proper Scoring property, then an optimal value of \(c \in C_{valid}\) that maximizes \(R(c, p)\) is either \(c_{left}\) or \(c_{right}\).

\end{corollary}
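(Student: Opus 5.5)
The plan is to reduce everything to the unimodality of \(R(\cdot, p)\) from Corollary \ref{corollary:unimodal-R}. Fix \(p \in (0,1)\). Since the reward scheme satisfies both the Interpretability and Proper Scoring properties, Corollary \ref{corollary:unimodal-R} gives that \(R(c,p)\) is non-decreasing in \(c\) on \((0,p)\) and non-increasing on \((p,1)\). Because \(f\) and \(g\) are continuously differentiable, \(R(\cdot,p)\) is continuous. Hence the monotonicity extends to the closed pieces: \(R(\cdot,p)\) is non-decreasing on \((0,p]\) and non-increasing on \([p,1)\). I would record this small extension explicitly, since the corollary is stated on open intervals.

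Next I split into the three cases determined by which of \(C_{\leq}\) and \(C_{>}\) are empty. Both cannot be empty, since \(C_{valid}\) is nonempty; this is implicit, because \(c_{min}\) is assumed to exist.

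\textbf{Case 1: both \(C_{\leq}\) and \(C_{>}\) are nonempty.} Any \(c \in C_{\leq}\) satisfies \(c \leq c_{left} \leq p\), so by monotonicity on \((0,p]\) we get \(R(c,p) \leq R(c_{left},p)\). Any \(c \in C_{>}\) satisfies \(p < c_{right} \leq c\), so by monotonicity on \([p,1)\) we get \(R(c,p) \leq R(c_{right},p)\). Therefore \(\max\{R(c_{left},p), R(c_{right},p)\}\) dominates \(R(c,p)\) for every \(c \in C_{valid}\). The maximizer among \(c_{left}\) and \(c_{right}\) is then an optimal element of \(C_{valid}\).

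\textbf{Case 2: \(C_{\leq}\) is empty.} Then every element of \(C_{valid}\) exceeds \(p\). Monotonicity on \([p,1)\) shows that \(c_{right} = \min C_{>} = c_{min}\) is optimal.

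\textbf{Case 3: \(C_{>}\) is empty.} Symmetrically, all elements are at most \(p\), and \(c_{left} = \max C_{\leq} = c_{max}\) is optimal.

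In Cases 2 and 3 the fallback values \(c_{min}\) and \(c_{max}\) assigned to the other endpoint play no role, because the optimum is already the one defined via the nonempty set.

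I do not expect a real obstacle here. The only delicate point is the boundary behaviour at \(c = p\), which is handled by the continuity extension above. Finiteness of \(C_{valid}\) guarantees that the max and min defining \(c_{left}\) and \(c_{right}\) exist, so the maximum of \(R(\cdot,p)\) over \(C_{valid}\) is attained.
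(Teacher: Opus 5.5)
Your proof is correct and follows the paper's argument: your emptiness cases correspond exactly to the paper's split \(p < c_{min}\), \(c_{min} \le p < c_{max}\), \(p \ge c_{max}\), and each case is settled by the monotonicity from Corollary~\ref{corollary:unimodal-R}. The only cosmetic difference is at \(c = p\): you extend monotonicity to \((0,p]\) and \([p,1)\) by continuity, whereas the paper treats \(c_{left} = p\) as a separate subcase and uses Proper Scoring directly to conclude that \(p\) is a global maximizer.
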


\begin{proof}[\textbf{\textup{Proof of Corollary \ref{corollary:unimodal-R-discrete}}}] 

\mbox{}

We consider three cases based on where \(p \in (0, 1)\) lies relative to \(c_{min}\) and \(c_{max}\).

Case 1: \(p \in (0, c_{min})\)

In this case, \(c_{left} = c_{min}\) since \(C_{\leq}\) is empty and \(c_{right} = c_{min}\).

Since \(c_{min} \in (p, 1)\), all values in \(C_{valid}\) are in \((p, 1)\). Therefore, since \(R(c, p)\) is non-increasing on \(c \in (p, 1)\) by Corollary \ref{corollary:unimodal-R}, \(c_{min} \in \text{argmax}_{c \in C_{valid}} R(c, p)\).

Case 2: \(p \in [c_{min}, c_{max})\).

In this case, since \(C_{\leq}\) and \(C_{>}\) are both non-empty, \(c_{left} = \max C_{\leq} \) and \(c_{right} = \min C_{>} \).

If \(c_{left} = p\), then \(c_{left} \in \text{argmax}_{c \in (0, 1)} R(c, p) \) implies \(c_{left} \in \text{argmax}_{c \in C_{valid}} R(c, p) \) since \(C_{valid} \subseteq (0, 1)\).

Otherwise, since all elements in \( C_{\leq} \) are in \( (0, p) \) and \(R(c, p)\) is non-decreasing on \(c \in (0, p)\) by Corollary \ref{corollary:unimodal-R}, \(c_{left} = \max C_{\leq} \in \text{argmax}_{c \in C_{\leq}} R(c, p)\).

Likewise, since all elements in \( C_{>} \) are in \( (p, 1) \) and \(R(c, p)\) is non-increasing on \(c \in (p, 1)\) by Corollary \ref{corollary:unimodal-R}, \(c_{right} = \min C_{>} \in \text{argmax}_{c \in C_{>}} R(c, p)\).

Since \(C_{valid} = C_{\leq} \cup C_{>}\), either \(c_{left} \in \text{argmax}_{c \in C_{valid}} R(c, p)\) or \(c_{right} \in \text{argmax}_{c \in C_{valid}} R(c, p)\).

Case 3: \(p \in [c_{max}, 1)\).

In this case, \(c_{right} = c_{max}\) since \(C_{>}\) is empty and \(c_{left} = c_{max}\).

If \(p = c_{max}\), then \(c_{left} = c_{max} \in \text{argmax}_{c \in (0, 1)} R(c, p) \) implies \(c_{left} \in \text{argmax}_{c \in C_{valid}} R(c, p) \) since \(C_{valid} \subseteq (0, 1)\).

Otherwise, \(p \in (c_{max}, 1)\). Therefore, all values in \(C_{valid}\) are in \((0, p)\). Since \(R(c, p)\) is non-decreasing on \(c \in (0, p)\) by Corollary \ref{corollary:unimodal-R}, \(c_{max} \in \text{argmax}_{c \in C_{valid}} R(c, p)\).

\end{proof}

\begin{remark}
For our experiments, $C_{valid} = \left\{\frac{\tilde{c}+0.5}{101} | \tilde{c} \in \{0,1,\cdots,100\}\right\}$.
\end{remark}

\begin{remark}
Intuitively, Corollary \ref{corollary:unimodal-R-discrete} implies that even in discretization of confidences, for non-hackable confidence reward schemes, the LLM is incentivized to output a confidence value that is the closest to (either just lesser than or just greater than) the probability of answering the question correctly.
\end{remark}

\begin{lemma}\label{lemma:rmax-derivative}
Assuming \(h\) is well-defined, \( R_{max}'(c) = f(c) - g(c) \) for all \( c \in (0, 1) \).
\end{lemma}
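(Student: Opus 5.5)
The plan is a direct computation. We regard \(R_{max}(c) = R(c, c) = c f(c) + (1-c) g(c)\) as a composition and differentiate it, then use the relations that define \(h\) to cancel the derivative terms.

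First, since \(f\) and \(g\) have continuous first derivatives on \((0, 1)\), the product rule gives
\begin{equation*}
R_{max}'(c) = f(c) + c f'(c) - g(c) + (1-c) g'(c).
\end{equation*}
Next, because \(h\) is well-defined, Definition \ref{definition:h} lets us substitute \(f'(c) = (c-1)h(c)\) and \(g'(c) = c h(c)\). The derivative terms then become
\begin{equation*}
c(c-1)h(c) + (1-c)c\,h(c) = 0,
\end{equation*}
which leaves exactly \(f(c) - g(c)\).

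Equivalently, we could apply the chain rule to \(R(c, p)\) along the diagonal \(p = c\). This gives
\begin{equation*}
R_{max}'(c) = \left.\frac{\partial R}{\partial c}\right|_{p=c} + \left.\frac{\partial R}{\partial p}\right|_{p=c}.
\end{equation*}
By Lemma \ref{lemma:partial-derivative-computation}, the first term is \((c - c)h(c) = 0\). The second term is \(f(c) - g(c)\), since \(R\) is affine in \(p\). I would present this chain-rule version as a remark, because it explains why the identity holds: it is the envelope-theorem phenomenon for proper scoring rules.

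No step is a real obstacle. The only care needed is to state that differentiability of \(R_{max}\) follows from the assumed \(C^1\) regularity of \(f\) and \(g\).
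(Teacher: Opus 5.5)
Your proposal is correct and follows essentially the same route as the paper: differentiate \(R_{max}(c) = cf(c) + (1-c)g(c)\) with the product rule, then use the defining relation of \(h\) to get \(cf'(c) + (1-c)g'(c) = 0\). Your chain-rule remark along the diagonal \(p = c\) is a valid and illuminating alternative view of the same computation, but the proof does not need it.
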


\begin{proof}[\textbf{\textup{Proof of Lemma \ref{lemma:rmax-derivative}}}] 
\,\hfill\break\\
By definition of \(R_{max}\) and \(R\), \( R_{max}(c) = R(c, c) = c f(c) + (1-c) g(c) \).

\( R_{max}'(c) = f(c) + c f'(c) - g(c) + (1-c)g'(c) \)

Since \(h\) is assumed to be well-defined, \(\frac{f'(c)}{c-1} = \frac{g'(c)}{c}\), which implies that \(c f'(c) + (1-c)g'(c) = 0\). Therefore, \(R_{max}'(c) = f(c) - g(c)\).
\end{proof}

For ease of reference, Theorem \ref{theorem:characterization-of-non-reward-hackable-schemes}, which characterizes the non-hackable confidence reward schemes, is restated as follows:
\maintheorem*

\begin{proof}[\textbf{\textup{Proof of Theorem \ref{theorem:characterization-of-non-reward-hackable-schemes}}}] 

\mbox{} % Google Search Gemini AI helped with this

\( (\Rightarrow) \) 

If a reward scheme is non-hackable confidence, then three properties must be proven to be true.

We are given that \((f, g)\) is a non-reward hackable scheme. Since \((f, g)\) satisfies the proper scoring property over \((0, 1)\), 
by Lemma \ref{lemma:derivation-of-h}, \(h(c)\) exists and is well-defined, satisfying property (i).

By the Interpretability property, since we assume \(f\) to be differentiable and non-decreasing on \((0, 1)\), since \(c \in (0,1)\), \(f'(c) \geq 0 \Rightarrow h(c) = \frac{f'(c)}{c-1}\leq 0 \), which fulfills property (ii).

By Lemma \ref{lemma:rmax-derivative}, \(R'_{max}(c) = f(c) - g(c)\) for all \(c \in (0, 1)\). By Best Effort property, since \(R'_{max}\) is non-decreasing on \((a, b)\), \(R'_{max}(c) \geq 0\) for all \(c \in (a, b)\). Therefore, since \((a, b) \subseteq (0, 1)\), \(f(c) \geq g(c)\) for all \(c \in (a, b)\). 

Take any \(x, y \in (a, b)\). By Interpretability property, $f$ is non-decreasing and $g$ is non-increasing.

Case 1: \(x \leq y\).
\(f(x) \geq g(x) \geq g(y)\)

Case 2: \(x > y\).
\(f(x) \geq f(y) \geq g(y)\)

Therefore, \(f(x) \geq g(y)\) for all \(x, y\in (a, b)\). 

This implies \(\inf_{x \in (a,b)} f(x) \geq g(y)\) for all \(y \in (a, b)\). 

This implies \(\inf_{x \in (a,b)} f(x) \geq \sup_{y \in (a, b)} g(y)\).
 
By Interpretability property, since f is non-decreasing on \((a, b)\), \(\inf_{x \in (a,b)} f(x) = f(a^+)\).

By Interpretability property, since g is non-increasing on \((a, b)\), \(\sup_{y \in (a, b)} g(y) = g(a^+)\).

Therefore, \(f(a^+) \geq g(a^+)\), satisfying property (iii).

\( (\Leftarrow) \)

If the following three properties are satisfied, then we need to prove that the scheme is non-confidence hackable.

(i) There exists a function \(h(c)\) defined over \((0, 1)\) such that \(h(c)=\frac{f'(c)}{c-1}=\frac{g'(c)}{c}\)

(ii) \(h(c) \leq 0\) for all \(c \in (0, 1)\)

(iii) \(f(a^+) \geq g(a^+)\)

From property (i), \(h\) is well-defined. Therefore, by Lemma \ref{lemma:h-interpretability} and property (ii), the Interpretability property holds.

Next, we aim to show the Proper Scoring property. Fix any \(p \in (0, 1)\) and let \(r(c) = R(c, p)\), where \(r\) is defined over \(c \in (0, 1)\). By Lemma \ref{lemma:partial-derivative-computation}, \(r'(c) = \frac{\partial R}{\partial c} = (c - p)h(c)\) for all \(c\) in \((0, 1)\).

Therefore, from property (ii), \(r'(c) \geq 0 \Leftrightarrow c \leq p\) and \(r'(c) \leq 0 \Leftrightarrow c \geq p\).

We aim to show that \(r(c) \leq r(p)\) for all \(c \in (0, 1)\).

Case 1: \(c \in (0, p]\)
Since \(r'(x) \geq 0\) for all \(x \in (0, p]\),
\(r(p) - r(c) = \int^{p}_{c} r'(x) \: \text{d}x \geq 0\).
This implies \(r(p) \geq r(c)\).

Case 2: \(c \in (p, 1)\)
Since \(r'(x) \leq 0\) for all \(x \in (p, 1)\),
\(r(c) - r(p) = \int^{c}_{p} r'(x) \: \text{d}x \leq 0\).
This implies \(r(c) \leq r(p)\).

In both cases, \(r(p)\) is an upper bound of \(r(c)\). Therefore, the maximum value of \(R\) when \(p\) is fixed is achieved when \(c\) is set to \(p\). Hence, the Proper Scoring property is satisfied.

Last, we aim to show the Best Effort property. By Lemma \ref{lemma:rmax-derivative}, since \(h\) is well-defined, \(R_{max}'(c) = f(c) - g(c)\) for all \(c \in (0, 1)\).

By the Interpretability property, for all \(c \in (a, b)\), \(f(c) \geq \inf_{c' \in (a,b)} f(c') = f(a^+) \). Similarly, \( g(a^+) = \sup_{c' \in (a,b)} g(c') \geq g(c)\). Since property (iii) states that \(f(a^+) \geq g(a^+)\), we have \(f(c) - g(c) = R_{max}'(c) \geq 0\). Therefore, \(R_{max}\) is non-decreasing on \((a, b)\), which is the Best Effort property.

\end{proof}

\begin{corollary}\label{corollary:non-reward-hackable-interval}
If a reward scheme \((f(c), g(c))\) is a non-hackable confidence reward scheme over \((a, b) \subseteq (0, 1)\), then it must be non-hackable confidence over \((a, 1)\). 
\end{corollary}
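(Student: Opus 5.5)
The plan is to route everything through the characterization in Theorem \ref{theorem:characterization-of-non-reward-hackable-schemes}, so that no monotonicity statement about \(R_{max}\) has to be reproved directly. The key observation is that conditions (i) and (ii) of that theorem concern only \(h\) on all of \((0,1)\) and do not mention the interval at all. Condition (iii) depends only on the left endpoint \(a\), through the one-sided limits \(f(a^+)\) and \(g(a^+)\). Changing the right endpoint from \(b\) to \(1\) therefore leaves all three conditions untouched.

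Concretely, I would argue as follows. Assume \((f,g)\) is non-hackable confidence over \((a,b)\).
\begin{enumerate}
\item By the forward direction of Theorem \ref{theorem:characterization-of-non-reward-hackable-schemes}, there exists \(h\) with \(h(c)=\frac{f'(c)}{c-1}=\frac{g'(c)}{c}\) on \((0,1)\).
\item The same direction gives \(h\le 0\) on \((0,1)\).
\item It also gives \(f(a^+)\ge g(a^+)\).
\item These are exactly conditions (i)--(iii) of the theorem instantiated with the interval \((a,1)\), since \((a,1)\subseteq(0,1)\) and it has the same left endpoint \(a\).
\item The reverse direction of Theorem \ref{theorem:characterization-of-non-reward-hackable-schemes} then yields that \((f,g)\) is non-hackable confidence over \((a,1)\).
\end{enumerate}

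As a sanity check, the reverse-direction proof shows \(R_{max}'(c)=f(c)-g(c)\ge f(a^+)-g(a^+)\ge 0\) for every \(c>a\). This uses Lemma \ref{lemma:rmax-derivative} together with the monotonicity of \(f\) and \(g\) from Interpretability, and the lower bound involves only \(a\), never \(b\).

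The main subtlety is making sure condition (iii) is meaningful for the interval \((a,1)\). The limits \(f(a^+)\) and \(g(a^+)\) are taken from the right at the same point \(a\) in both cases, so they are literally the same quantities. These limits exist in the extended sense by monotonicity, which Interpretability guarantees. One further point should be stated explicitly: the Interpretability and Proper Scoring properties in Definition \ref{definition:non-confidence-hackable-reward-scheme} are required on all of \((0,1)\) regardless of the interval, so only Best Effort is interval-dependent. With that observation the argument is immediate.
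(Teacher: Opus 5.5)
Your proposal is correct and matches the paper's primary argument: conditions (i)--(iii) of Theorem~\ref{theorem:characterization-of-non-reward-hackable-schemes} do not involve the right endpoint \(b\), so applying the forward direction over \((a,b)\) and then the reverse direction over \((a,1)\) gives the result. The paper also gives a direct alternative proof that avoids the theorem: \(R_{max}' = f - g\) is non-decreasing on \((0,1)\) and non-negative on \((a,b)\), hence non-negative on \((a,1)\). Your sanity check is essentially this argument.
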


\begin{proof}[\textbf{\textup{Proof of Corollary \ref{corollary:non-reward-hackable-interval}}}] 

\mbox{}

The corollary follows from the fact that properties (i), (ii) and (iii) in Theorem \ref{theorem:characterization-of-non-reward-hackable-schemes} do not rely on the endpoint \(b\). Here is an alternative proof which does not rely on Theorem \ref{theorem:characterization-of-non-reward-hackable-schemes}.

Since \((f(c), g(c))\) is a non-hackable confidence reward scheme, it satisfies proper scoring, hence the corresponding function \(h\) is well-defined and Lemma \ref{lemma:rmax-derivative} applies. By the Interpretability property, both \(f(c)\) and \(-g(c)\) are non-decreasing on \((0, 1)\). Therefore, using Lemma \ref{lemma:rmax-derivative}, \(R'_{max} = f(c) - g(c)\) is non-decreasing on \((0, 1)\). By the Best Effort property, \(R_{max}(c)\) is non-decreasing on \((a, b)\), which implies \(R'_{max}(c) \geq 0\) for all \(c \in (a, b)\).

Since \(R'_{max}\) is non-decreasing on \((0, 1)\) and is non-negative on \((a, b)\), \(R'_{max}\) must be non-negative on \((a, 1)\). Therefore, \(R_{max}(c)\) must be non-decreasing on \((a, 1)\), satisfying the Best Effort property.

The Interpretability and Proper Scoring properties are automatically satisfied as the properties remain the same when the interval is changed from \((a, b)\) to \((a, 1)\).

Therefore, \((f(c), g(c))\) is a non-hackable confidence reward scheme on \((a, 1)\).

\end{proof}

\begin{comment}
\begin{corollary}\label{corollary:non-hackable-confidence-interval-shift}
If a reward scheme \((f(c), g(c))\) is a non-hackable confidence reward scheme over \((a, b) \subseteq (0, 1)\), then, for any \(f_0, g_0 \in \mathbb{R}\) such that \(f_0 \geq g_0\), \((f(c) + f_0, g(c) + g_0)\) is also a non-hackable confidence reward scheme over \((a, b)\).
\end{corollary}

\begin{proof}[\textbf{\textup{Proof of Corollary \ref{corollary:non-hackable-confidence-interval-shift}}}] 

\mbox{}

Since \((f(c), g(c))\) is non-hackable confidence, by Theorem \ref{theorem:characterization-of-non-reward-hackable-schemes}, \(h\) is well-defined and \(h(c) \leq 0\) for all \(c \in (0, 1)\). 

Since \(\frac{d}{dc}(f(c) + f_0) = f'(c)\) and \(\frac{d}{dc}(g(c) + g_0) = g'(c)\), \(h_0(c) = \frac{\frac{d}{dc}(f(c) + f_0)}{c-1} = \frac{\frac{d}{dc}(g(c) + g_0)}{c}\) is well-defined and \(h_0(c) = h(c) \leq 0\) for all \(c \in (0, 1)\). 

Since \(f_0 \geq g_0\) and \(f(a^+) \geq g(a^+)\), we have

\(\lim_{c \rightarrow a^+} (f(c) + f_0) = f(a^+) + f_0 \geq g(a^+) + g_0 = \lim_{c \rightarrow a^+} (g(c) + g_0) \).

Therefore, by Theorem \ref{theorem:characterization-of-non-reward-hackable-schemes}, \((f(c) + f_0, g(c) + g_0)\) is a non-hackable confidence reward scheme over \((a, b)\).

\end{proof}
\end{comment}

We then proceed to derive the general form of non-hackable confidence reward schemes. But we have to first show that certain improper integrals are convergent to establish well-definedness of the general form.

\begin{lemma}\label{lemma:comparison-theorem-nonstandard}
Let \(f_1, f_2\) be continuous on \((a, b]\), where \(a, b \in \mathbb{R}\) and \(0 \leq f_1(x) \leq f_2(x)\) for all \(x \in (a, b]\). If \(\int_a^b f_2(x) \: dx\) converges, then \(\int_a^b f_1(x) \: dx\) converges.
\end{lemma}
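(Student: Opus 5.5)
The plan is to prove the comparison test for improper integrals at the left endpoint directly, using monotonicity and boundedness of the partial integrals. Here convergence of \(\int_a^b f_i(x)\,dx\) means that the limit \(\lim_{t \to a^+} \int_t^b f_i(x)\,dx\) exists and is finite. Since \(f_1\) and \(f_2\) are continuous on \([t, b]\) for every \(t \in (a, b]\), the proper Riemann integrals \(F_i(t) = \int_t^b f_i(x)\,dx\) are well-defined for \(i = 1, 2\). The whole question is therefore the existence of the limit of \(F_1(t)\) as \(t \to a^+\).

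First, I will show that \(F_1\) is non-increasing in \(t\), i.e.\ it grows as \(t\) decreases toward \(a\). For \(a < s \le t \le b\) we have \(F_1(s) - F_1(t) = \int_s^t f_1(x)\,dx \ge 0\), because \(f_1 \ge 0\). The same argument shows \(F_2\) is non-increasing.

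Next, I will bound \(F_1\) uniformly. Since \(f_1 \le f_2\) pointwise, monotonicity of the Riemann integral gives \(F_1(t) \le F_2(t)\) for every \(t \in (a, b]\). Because \(F_2\) is non-increasing in \(t\) and converges to some finite \(L_2\) as \(t \to a^+\), we get \(F_2(t) \le L_2\) for all \(t\). Hence \(0 \le F_1(t) \le L_2\) for all \(t \in (a, b]\).

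Finally, I will conclude with the monotone convergence principle for real functions. Let \(L_1 = \sup_{t \in (a, b]} F_1(t)\), which is finite by the previous bound. Given \(\varepsilon > 0\), choose \(t_0\) with \(F_1(t_0) > L_1 - \varepsilon\). Monotonicity then gives \(L_1 - \varepsilon < F_1(t) \le L_1\) for all \(t \in (a, t_0]\). So \(\lim_{t \to a^+} F_1(t) = L_1\) exists and is finite, which is exactly convergence of \(\int_a^b f_1(x)\,dx\), with \(0 \le \int_a^b f_1 \le \int_a^b f_2\) as a by-product.

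The only step needing care is the last one: turning monotone and bounded into an actual limit via the completeness (supremum) argument, rather than just asserting it. Everything else is routine monotonicity of the integral.
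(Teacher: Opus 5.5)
Your proof is correct: \(t \mapsto \int_t^b f_1(x)\,dx\) is monotone, it is bounded by \(\lim_{t\to a^+}\int_t^b f_2(x)\,dx\), and the supremum argument then gives the required finite limit. The paper does not write out a proof; it defers to the standard textbook comparison theorem (Theorem 15 of its cited calculus text), whose proof is this same monotone-and-bounded argument, here applied at the left endpoint. So your approach is essentially the paper's.
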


\begin{lemma}\label{lemma:limit-comparison-theorem-nonstandard}
Let \(f_1, f_2\) be continuous on \((a, b]\), where \(a, b \in \mathbb{R}\) and \(f_1(x), f_2(x) \geq 0\) for all \(x \in (a, b]\). If \(\lim_{x \rightarrow a^+} \frac{f_1(x)}{f_2(x)}\) exists and is positive and finite, then \(\int_a^b f_1(x) \: dx\) and \(\int_a^b f_2(x) \: dx\) either both converge or both diverge.
\end{lemma}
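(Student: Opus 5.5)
The statement to prove is the limit comparison test for improper integrals at the left endpoint, Lemma \ref{lemma:limit-comparison-theorem-nonstandard}. I will reduce it to the direct comparison test, Lemma \ref{lemma:comparison-theorem-nonstandard}, which I take as given.

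Let \(L = \lim_{x \rightarrow a^+} \frac{f_1(x)}{f_2(x)} \in (0, \infty)\). The hypothesis that this limit exists implicitly requires \(f_2(x) > 0\) on some punctured right neighbourhood of \(a\). I will make this explicit at the start.

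Apply the definition of the limit with \(\varepsilon = L/2\). This gives some \(c \in (a, b]\) such that, for all \(x \in (a, c]\), \(f_2(x) > 0\) and
\[
\frac{L}{2} f_2(x) \le f_1(x) \le \frac{3L}{2} f_2(x).
\]

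Next, split each integral as \(\int_a^b = \int_a^c + \int_c^b\). The piece \(\int_c^b\) is a proper Riemann integral of a continuous function on a compact interval, so it is finite. Hence \(\int_a^b f_i\) converges if and only if \(\int_a^c f_i\) converges, for \(i = 1, 2\). Everything therefore reduces to the interval \((a, c]\).

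On \((a, c]\), all four functions \(f_1\), \(f_2\), \(\frac{3L}{2}f_2\) and \(\frac{2}{L}f_1\) are continuous and non-negative. This lets me apply Lemma \ref{lemma:comparison-theorem-nonstandard} with \(b\) replaced by \(c\).

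\emph{Convergence of \(f_2\) implies convergence of \(f_1\).} Suppose \(\int_a^c f_2\) converges. Then \(\int_a^c \frac{3L}{2} f_2\) converges, since it is a constant multiple of a convergent improper integral (a one-line limit argument). Because \(0 \le f_1 \le \frac{3L}{2} f_2\), Lemma \ref{lemma:comparison-theorem-nonstandard} gives convergence of \(\int_a^c f_1\).

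\emph{Convergence of \(f_1\) implies convergence of \(f_2\).} Suppose \(\int_a^c f_1\) converges. Using \(0 \le f_2 \le \frac{2}{L} f_1\), the same lemma gives convergence of \(\int_a^c f_2\).

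The two directions together give the equivalence, so the integrals either both converge or both diverge. For non-negative integrands, divergence here means the integral is \(+\infty\), which is consistent with this dichotomy.

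The only delicate point is the implicit positivity of \(f_2\) near \(a\), which is needed for the ratio to make sense. I handle it by choosing \(c\) from the limit definition so that \(f_2 > 0\) on \((a, c]\). The rest is routine.
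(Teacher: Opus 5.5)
Your proof is correct and matches the paper's intended argument. The paper does not write a proof; it defers to the textbook proof of the limit comparison corollary in the cited calculus text, and that proof is exactly your reduction to the direct comparison test: use the bound \(\frac{L}{2} f_2 \le f_1 \le \frac{3L}{2} f_2\) near the endpoint and split off the remaining proper integral. Your remark that the existence of the limit forces \(f_2 > 0\) on some \((a, c]\) is a worthwhile clarification, since the lemma as stated only assumes \(f_1, f_2 \geq 0\).
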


The proofs of Lemmas \ref{lemma:comparison-theorem-nonstandard} and \ref{lemma:limit-comparison-theorem-nonstandard} are analogous to the proofs of Theorem 15 and its corollary respectively in \cite[pp. 214-215]{calculusbook1978}.

\begin{lemma}\label{lemma:improper-integral-convergence-one-implies-all}
Let \(f_1\) be a continuous function on \((0, 1)\). If \(\int_0^c f_1(x) \: dx\) converges for some \(c \in (0, 1)\), then \(\int_0^p f_1(x) \: dx\) converges for all \(p \in (0, 1)\).
\end{lemma}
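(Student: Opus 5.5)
The idea is to split the improper integral at the point $c$ where convergence is given, and to note that the remaining piece is a proper Riemann integral of a continuous function over a compact interval. Recall what the hypothesis means: $\int_0^c f_1(x)\,dx$ converges exactly when $\lim_{t\to 0^+}\int_t^c f_1(x)\,dx$ exists and is finite. For $t\in(0,c)$ write $F(t)=\int_t^c f_1(x)\,dx$; the hypothesis says that $L=\lim_{t\to0^+}F(t)$ exists in $\mathbb{R}$.

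Now fix any $p\in(0,1)$. Since $c,p\in(0,1)$, the closed interval with endpoints $c$ and $p$ is a compact subset of $(0,1)$. The function $f_1$ is continuous there, so $\int_c^p f_1(x)\,dx$ is an ordinary Riemann integral and is a finite number $K$, with the usual orientation convention if $p<c$.

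For $t\in(0,\min(c,p))$, additivity of the Riemann integral over intervals gives
\begin{equation*}
\int_t^p f_1(x)\,dx=\int_t^c f_1(x)\,dx+\int_c^p f_1(x)\,dx=F(t)+K.
\end{equation*}
This holds whatever the order of $c$ and $p$, because all three integrals are proper on compact subintervals of $(0,1)$. Letting $t\to0^+$, the right-hand side tends to $L+K$, which is finite. Hence $\lim_{t\to0^+}\int_t^p f_1(x)\,dx$ exists, so $\int_0^p f_1(x)\,dx$ converges and equals $L+K$.

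No step is a real obstacle. The only points needing care are to restrict $t$ below $\min(c,p)$ so that every integral is proper, and to handle the case $p<c$ through the signed-integral convention rather than splitting into separate cases. The comparison results, Lemmas \ref{lemma:comparison-theorem-nonstandard} and \ref{lemma:limit-comparison-theorem-nonstandard}, are not needed here.
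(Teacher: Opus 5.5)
Your proof is correct and uses the same idea as the paper: split $\int_0^p$ at $c$ into the given convergent improper piece plus a proper Riemann integral over the compact interval between $c$ and $p$. The paper treats $p=c$, $p<c$ and $p>c$ as separate cases and is terse when $p<c$; your signed-integral identity with $t<\min(c,p)$ handles all three at once and spells out the limit $t\to 0^+$.
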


\begin{proof}[\textbf{\textup{Proof of Lemma \ref{lemma:improper-integral-convergence-one-implies-all}}}]

\mbox{}

We are given that \(\int_0^c f_1(x) \: dx\) converges for some \(c \in (0, 1)\). Take any \(p \in (0, 1)\). 

Case 1: \(p = c\)

Trivially proven as \(\int_0^c f_1(x) \: dx\) is known to converge.

Case 2: \(p < c\)

Since \(\int_0^c f_1(x) \: dx\) converges and \(f_1\) is continuous on \((0, c]\), both \(\int_p^c f_1(x) \: dx\) and \(\int_0^p f_1(x) \: dx\) must converge. 

Case 3: \(p > c\)

Since \(f_1\) is continuous on \([c, p]\), \(\int_c^p f_1(x) \: dx\) is a proper integral and therefore converges. 

Since \(\int_0^c f_1(x) \: dx\) converges by hypothesis, \(\int_0^p f_1(x) \: dx\) converges and is equal to \(\int_c^p f_1(x) \: dx\) + \(\int_0^c f_1(x) \: dx\).

Therefore, combining the three cases, \(\int_0^p f(x) \: dx\) converges for all \(p \in (0, 1)\).

\end{proof}

\begin{corollary}\label{corollary:improper-integral-convergence-both-ways}
Let \(h(c)\) be any continuous function on \((0, 1)\) such that \(h(c) \leq 0\) for all \(c \in (0, 1)\). Then, \(\int_0^p h(c) \: dc\) converges for all \(p \in (0, 1) \Leftrightarrow \int_0^p (c-1)h(c) \: dc\) converges for all \(p \in (0, 1)\).
\end{corollary}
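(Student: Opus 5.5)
The plan is to prove each direction by comparing the two integrands near \(0\), where they differ only by the factor \(c-1\). Set \(f_1(c) = -h(c) \geq 0\) and \(f_2(c) = (1-c)(-h(c)) = -(c-1)h(c) \geq 0\). Both are continuous on \((0,1)\) and non-negative, since \(h \leq 0\) and \(1-c > 0\). Convergence of an improper integral is unaffected by multiplying the integrand by \(-1\). So it suffices to show that \(\int_0^p f_1\) converges for all \(p\) if and only if \(\int_0^p f_2\) converges for all \(p\).

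By Lemma \ref{lemma:improper-integral-convergence-one-implies-all}, each side of the equivalence holds for all \(p \in (0,1)\) as soon as it holds for a single \(p\). It is therefore enough to fix one \(p_0 \in (0,1)\), say \(p_0 = \tfrac12\), and show that \(\int_0^{p_0} f_1\) converges if and only if \(\int_0^{p_0} f_2\) converges.

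\((\Rightarrow)\): On \((0,p_0]\) we have \(0 \le 1-c \le 1\), so \(0 \le f_2 \le f_1\). Lemma \ref{lemma:comparison-theorem-nonstandard} then gives convergence of \(\int_0^{p_0} f_2\) from that of \(\int_0^{p_0} f_1\).

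\((\Leftarrow)\): On \((0,p_0]\) we have \(1-c \geq 1-p_0 = \tfrac12\), so \(0 \le f_1 \le 2 f_2\). Since \(\int_0^{p_0} 2f_2\) converges, Lemma \ref{lemma:comparison-theorem-nonstandard} gives convergence of \(\int_0^{p_0} f_1\). Applying Lemma \ref{lemma:improper-integral-convergence-one-implies-all} in each direction then extends the conclusion from \(p_0\) to every \(p \in (0,1)\).

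The only delicate point is the possibility that \(h\) vanishes. This rules out applying Lemma \ref{lemma:limit-comparison-theorem-nonstandard} directly to \(f_1/f_2 = 1/(1-c)\), because the ratio is undefined where \(f_2 = 0\). The two-sided direct comparison above avoids this issue, since it only needs non-negativity and the bounds \(\tfrac12 \le 1-c \le 1\) on \((0,\tfrac12]\).
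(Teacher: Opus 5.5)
Your proof is correct, but it reaches the comparison through a different lemma than the paper. The paper compares $(c-1)h$ with $-h$ via the limit comparison test (Lemma \ref{lemma:limit-comparison-theorem-nonstandard}), computing the ratio limit as $\lim_{c\to 0^+}(1-c)=1$. It then uses Lemma \ref{lemma:improper-integral-convergence-one-implies-all} to handle the quantifier over $p$. You instead use the direct comparison test (Lemma \ref{lemma:comparison-theorem-nonstandard}) with the two-sided bound $\tfrac12 f_1 \le f_2 \le f_1$ on $(0,\tfrac12]$.

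The paper's route is a one-line limit computation. However, cancelling $h$ in the ratio is only legitimate when $h<0$ on a punctured neighbourhood of $0$. As written, it therefore does not cover $h\equiv 0$, or $h$ with zeros accumulating at $0$. This matters, because Theorem \ref{theorem:non-hackable-confidence-general-form} applies the corollary to every non-hackable scheme, including Correctness-only with $h\equiv 0$. Your direct comparison needs only non-negativity, so it closes that gap, as you correctly anticipated.

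Two minor remarks. First, $(1-c)(-h(c))$ equals $(c-1)h(c)$, not $-(c-1)h(c)$ (the latter is $\le 0$). So your $f_2$ is exactly the integrand in the statement and needs no sign flip. This is harmless, since you only ever use $f_2=(1-c)f_1$. Second, the detour through Lemma \ref{lemma:improper-integral-convergence-one-implies-all} is dispensable: for each $p\in(0,1)$ one has $(1-p)f_1 \le f_2 \le f_1$ on $(0,p]$, which gives both directions at every $p$ directly.
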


\begin{proof}[\textbf{\textup{Proof of Corollary \ref{corollary:improper-integral-convergence-both-ways}}}] 

\mbox{}

Let \(f_1\) and \(f_2\) be defined on \((0, 1)\) such that \(f_1(c) = (c-1)h(c)\) and \(f_2(c) = -h(c)\).

Since \(h\) is continuous on \((0, 1)\), \(f_1\) and \(f_2\) are continuous on \((0, 1)\). 

Since \(h(c) \leq 0\) and \(c - 1 < 0\) for all \(c \in (0, 1)\), \(f_1(c) \geq 0\) and \(f_2(c) \geq 0\) for all \(c \in (0, 1)\).

\(\lim_{c \rightarrow 0^+} \frac{f_1(c)}{f_2(c)} = \lim_{c \rightarrow 0^+} (1-c) = 1\) is positive and finite.

Therefore, by Lemma \ref{lemma:limit-comparison-theorem-nonstandard}, \(\int_0^p f_1(c) \: dc\) and \(\int_0^p f_2(c) \: dc\) either both converge or both diverge for all \(p \in (0, 1)\). Since \(h(c) = -f_2(c)\) and \((c-1)h(c) = f_1(c)\), it follows that \(\int_0^p h(c) \: dc\) and \(\int_0^p (c-1)h(c) \: dc\) either both converge or both diverge for all \(p \in (0, 1)\).

Note that since we did not rule out the possibility where \(\int_0^{p_1} h(c) \: dc\) converges but \(\int_0^{p_2} h(c) \: dc\) diverges for some \(p_1, p_2 \in (0, 1)\), we are not done yet.

Take any \(p \in (0, 1)\) and consider whether \(\int_0^p h(c) \: dc\) diverges.

Case 1: \(\int_0^p h(c) \: dc\) diverges

By Lemma \ref{lemma:improper-integral-convergence-one-implies-all}, \(\int_0^p h(c) \: dc\) must diverge for all \(p \in (0, 1)\).

Hence, \(\int_0^p (c-1)h(c) \: dc\) must diverge for all \(p \in (0, 1)\).

Case 2: \(\int_0^p f_1(x) \: dx\) converges

By Lemma \ref{lemma:improper-integral-convergence-one-implies-all}, \(\int_0^p h(c) \: dc\) must converge for all \(p \in (0, 1)\).

Hence, \(\int_0^p (c-1)h(c) \: dc\) must converge for all \(p \in (0, 1)\).

Therefore, combining the two cases, \(\int_0^p h(c) \: dc\) converges for all \(p \in (0, 1)\) \(\Leftrightarrow\) \(\int_0^p (c-1)h(c) \: dc\) converges for all \(p \in (0, 1)\)

Hence, \(\int_0^p h(c) \: dc\) either both converge or both diverge.

\end{proof}

\begin{corollary}\label{corollary:improper-integral-convergence-one-way}

Let \(h(c)\) be any continuous function on \((0, 1)\) such that \(h(c) \leq 0\) for all \(c \in (0, 1)\). Then, \(\int_0^c h(x) \: dx\) converges for all \(c \in (0, 1) \implies \int_0^c xh(x) \: dx\) converges for all \(c \in (0, 1)\).

\end{corollary}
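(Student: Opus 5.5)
The plan is a direct comparison argument using Lemma \ref{lemma:comparison-theorem-nonstandard}, in the same style as Corollary \ref{corollary:improper-integral-convergence-both-ways}. We cannot use the limit comparison theorem here, since \(x h(x)/h(x) = x \to 0\) as \(x \to 0^+\). That is exactly why only one direction is claimed. Instead, we will bound \(|x h(x)|\) pointwise by \(|h(x)|\).

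Fix any \(c \in (0, 1)\). Define \(f_1(x) = -x h(x)\) and \(f_2(x) = -h(x)\) on \((0, c]\).

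\begin{enumerate}
\item Both functions are continuous on \((0, c]\), since \(h\) is continuous on \((0, 1)\).
\item Since \(h(x) \le 0\) and \(0 < x < 1\), we have \(0 \le -x h(x) \le -h(x)\). So \(0 \le f_1(x) \le f_2(x)\) on \((0, c]\).
\item By hypothesis, \(\int_0^c h(x)\,dx\) converges, so \(\int_0^c f_2(x)\,dx = -\int_0^c h(x)\,dx\) converges.
\item Lemma \ref{lemma:comparison-theorem-nonstandard} then gives convergence of \(\int_0^c f_1(x)\,dx\). Hence \(\int_0^c x h(x)\,dx = -\int_0^c f_1(x)\,dx\) converges.
\end{enumerate}

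Since \(c\) was arbitrary, the conclusion holds for all \(c \in (0,1)\). Alternatively, we could do the argument at a single \(c\) and extend it with Lemma \ref{lemma:improper-integral-convergence-one-implies-all}, applied to the continuous function \(x h(x)\).

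There is no real obstacle. The only points needing care are these:
\begin{itemize}
\item The sign flip is needed so that the nonnegativity hypothesis of Lemma \ref{lemma:comparison-theorem-nonstandard} is met.
\item Convergence of an integral is unaffected by multiplying the integrand by \(-1\).
\item The only singular endpoint is \(0\), because the interval \((0,c]\) is closed at \(c < 1\).
\end{itemize}
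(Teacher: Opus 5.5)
Your proof is correct and takes essentially the same route as the paper. The paper also applies Lemma \ref{lemma:comparison-theorem-nonstandard} on $(0,c]$ with the bound $0 \le |xh(x)| \le |h(x)|$, which is your $0 \le -xh(x) \le -h(x)$, and then uses the sign of $xh(x)$ to conclude that $\int_0^c xh(x)\,dx$ converges.
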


\begin{proof}[\textbf{\textup{Proof of Corollary \ref{corollary:improper-integral-convergence-one-way}}}] 

\mbox{}

Suppose \(\int_0^c h(x) \: dx\) converges for all \(c \in (0, 1)\).

Since \(h(c)\) is continuous on \((0, 1)\), \(ch(c)\) is continuous on \((0, 1)\).

Since \(h(c) \leq 0\) for all \(c \in (0, 1)\) and \(\int_0^c h(x) \: dx\) converges for all \(c \in (0, 1)\), \(\int_0^c |h(x)| \: dx = \int_0^c -h(x) \: dx\) converges for all \(c \in (0, 1)\).

Since \(0 \leq |ch(c)| \leq |h(c)|\) and \(\int_0^c |h(x)| \: dx\) converges for all \(c \in (0, 1)\), by Lemma \ref{lemma:comparison-theorem-nonstandard}, \(\int_0^c |xh(x)| \: dx\) converges for all \(c \in (0, 1)\).

For all \(c \in (0, 1)\), since \(c > 0\) and \(h(c) \leq 0\), \(ch(c) \leq 0\). Therefore, \(\int_0^c xh(x) \: dx = \int_0^c -|xh(x)| \: dx\) converges for all \(c \in (0, 1)\).

\end{proof}

\begin{theorem}\label{theorem:non-hackable-confidence-general-form}
The general form of non-hackable confidence reward schemes is
\((\int_0^c (x-1)h(x) \: dx + f_0, \int_0^c xh(x) \: dx + g_0)\), where the following conditions hold:

(i) \(h(c) \leq 0\) for all \(c \in (0, 1)\)

(ii) \(h\) is continuous on \((0, 1)\)

(iii) \(\int_0^c h(x) \: dx\) converges for all \(c \in (0, 1)\)

(iv) \(f_0, g_0\) are constants in \(\mathbb{R}\) satisfying \(f_0 \geq g_0\)
\end{theorem}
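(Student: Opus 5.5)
The proof has two directions: every scheme of the stated form is non-hackable confidence over \((0,1)\), and every non-hackable confidence scheme over \((0,1)\) has this form. Both rest on Theorem \ref{theorem:characterization-of-non-reward-hackable-schemes} with \(a = 0\), so condition (iii) there reads \(f(0^+) \geq g(0^+)\).

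\textbf{Sufficiency.} Take \(h\) satisfying (i)--(iii). By Corollary \ref{corollary:improper-integral-convergence-both-ways} and Corollary \ref{corollary:improper-integral-convergence-one-way}, the integrals \(\int_0^c (x-1)h(x)\,dx\) and \(\int_0^c xh(x)\,dx\) converge for every \(c \in (0,1)\). Hence \(f(c) = \int_0^c (x-1)h(x)\,dx + f_0\) and \(g(c) = \int_0^c xh(x)\,dx + g_0\) are well defined.

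Next I check the regularity required of a reward scheme. Fix any \(c_0 \in (0,1)\) and write \(f(c) = \int_{c_0}^c (x-1)h(x)\,dx + \text{const}\). Since \(h\) is continuous, the fundamental theorem of calculus gives \(f'(c) = (c-1)h(c)\), which is continuous. Likewise \(g'(c) = ch(c)\). So property (i) of Theorem \ref{theorem:characterization-of-non-reward-hackable-schemes} holds with this \(h\), and property (ii) is hypothesis (i).

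For property (iii), convergence of the improper integrals means \(\int_0^c (x-1)h(x)\,dx \to 0\) as \(c \to 0^+\), and the same holds for \(\int_0^c xh(x)\,dx\). Therefore \(f(0^+) = f_0 \geq g_0 = g(0^+)\).

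\textbf{Necessity.} Let \((f,g)\) be non-hackable confidence over \((0,1)\). Theorem \ref{theorem:characterization-of-non-reward-hackable-schemes} gives \(h = f'/(c-1) = g'/c \leq 0\), which is (i). Because \(f'\) is continuous, \(h\) is continuous on \((0,1)\), which is (ii).

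For (iii), I need \(\int_0^c h\) to converge. The key observation is that \(f'(c) = (c-1)h(c) \geq 0\), so \(f\) is non-decreasing. Hence \(f(0^+)\) exists in \(\mathbb{R} \cup \{-\infty\}\), and condition (iii) of Theorem \ref{theorem:characterization-of-non-reward-hackable-schemes} states \(f(0^+) \geq g(0^+)\).

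\emph{Main obstacle.} This requires both one-sided limits to be finite, but the theorem does not say so explicitly. The paper's proof treats \(f(a^+)\) as \(\inf f\) and \(g(a^+)\) as \(\sup g\). Since \(g\) is non-increasing, \(g(0^+) = \sup g > -\infty\). The inequality \(f(0^+) \geq g(0^+)\) then forces \(f(0^+) > -\infty\), so \(f(0^+)\) is finite.

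Then \(\int_0^p (x-1)h(x)\,dx = f(p) - f(0^+)\) converges for each \(p\). By Corollary \ref{corollary:improper-integral-convergence-both-ways}, \(\int_0^p h\) converges for all \(p\), which is (iii). Corollary \ref{corollary:improper-integral-convergence-one-way} then gives convergence of \(\int_0^p xh(x)\,dx\), so \(g(0^+)\) is finite as well.

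Finally, set \(f_0 = f(0^+)\) and \(g_0 = g(0^+)\). Then \(f(c) = \int_0^c (x-1)h(x)\,dx + f_0\), obtained as the limit of \(f(c) - f(\epsilon) = \int_\epsilon^c f'\) as \(\epsilon \to 0^+\), and similarly for \(g\). Condition (iv), \(f_0 \geq g_0\), is exactly property (iii) of the characterization theorem.
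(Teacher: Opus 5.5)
Your proof is correct and follows essentially the same route as the paper. Sufficiency goes through the two improper-integral corollaries, the fundamental theorem of calculus and Theorem~\ref{theorem:characterization-of-non-reward-hackable-schemes} with $a=0$. Necessity uses the monotonicity of $f$ and $g$ together with $f(0^+)\geq g(0^+)$ to force finite limits at $0^+$, and then Corollary~\ref{corollary:improper-integral-convergence-both-ways} to obtain condition (iii). The only minor difference is that you get finiteness of $g(0^+)$ from convergence of $\int_0^p xh(x)\,dx$, whereas the paper reads it off directly from the monotonicity argument; both are valid.
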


\begin{remark}
A related theorem that describes the general form of reward schemes (not necessarily non-hackable confidence) that satisfy Proper Scoring property can be found in Theorem 4.2 of \cite{schervish1989general}.
\end{remark}

\begin{proof}[\textbf{\textup{Proof of Theorem \ref{theorem:non-hackable-confidence-general-form}}}] 

\mbox{}

\((\Rightarrow)\)

We aim to show that if a reward scheme \((f(c), g(c))\) is non-hackable confidence, then it must be of the given form.

Since \((f(c), g(c))\) is non-hackable confidence, by Theorem \ref{theorem:characterization-of-non-reward-hackable-schemes}, \(h(c)\) is well-defined and \(h(c) = \frac{f'(c)}{c-1} = \frac{g'(c)}{c}\), which implies that 
\(f'(c) = (c-1)h(c)\) and \(g'(c) = c h(c)\). Moreover, \(h(c) \leq 0\) for all \(c \in (0, 1)\), satisfying (i).

By Interpretability property, since \(g\) is non-increasing on \((0, 1)\), it is impossible for \(g(0^+)\) to diverge to \(-\infty\). Since \(f(0^+) \geq g(0^+)\), it is impossible for \(f(0^+)\) to diverge to \(-\infty\). Likewise, since \(f\) is non-decreasing on \((0, 1)\), it is impossible for \(f(0^+)\) to diverge to \(\infty\). Since \(f(0^+) \geq g(0^+)\), it is impossible for \(g(0^+)\) to diverge to \(\infty\). Therefore, the antiderivatives of \(f'\) and \(g'\) must have well-defined real number limits at \(0^+\). 

Since \(f\) and \(g\) are assumed to have continuous first order derivatives, \(h(c)\) is continuous on \((0, 1)\), satisfying (ii). 

Therefore, \(f'(c) = (c-1)h(c)\) and \(g'(c) = ch(c)\) are continuous on \((0, 1)\). This implies that \(\int_0^c (x-1)h(x) \: dx\) and \(\int_0^c xh(x) \: dx\) are well-defined and must converge for all \(c \in (0, 1)\). Hence, \(f(c) = \int_0^c (x-1)h(x) \: dx + f_0\) and \(g(c) = \int_0^c xh(x) \: dx + g_0\) for some real constants \(f_0\) and \(g_0\), satisfying the general form.

By Corollary \ref{corollary:improper-integral-convergence-both-ways}, since \(\int_0^c (x-1)h(x) \: dx\) converges for all \(c \in (0, 1)\), \(\int_0^c h(x) \: dx\) must converge for all \(c \in (0, 1)\), satisfying (iii).

Since \(f(0^+) \geq g(0^+)\) by Theorem \ref{theorem:characterization-of-non-reward-hackable-schemes}, \(\lim_{c\rightarrow0^+} (\int_0^c (x-1)h(x) \: dx + f_0) \geq \lim_{c\rightarrow0^+} (\int_0^c xh(x) \: dx + g_0)\), which implies that \(f_0 \geq g_0\) since both integrals converge, satisfying (iv).

\((\Leftarrow)\)

We want to show that if \((f(c), g(c)) = (\int_0^c (x-1)h(x) \: dx + f_0, \int_0^c xh(x) \: dx + g_0)\) such that the given constraints on \(h\), \(f_0\) and \(g_0\) are satisfied, then \((f(c), g(c))\) is non-hackable confidence.

Since \(h(c) \leq 0\) for all \(c \in (0, 1)\) from (i), \(h\) is continuous on \((0, 1)\) from (ii) and\(\int_0^c h(x) \: dx\) converges for all \(c \in (0, 1)\) from (iii), it follows that \(\int_0^c (x-1)h(x) \: dx\) converges for all \(c \in (0, 1)\) by Corollary \ref{corollary:improper-integral-convergence-both-ways} and \(\int_0^c xh(x) \: dx\) converges for all \(c \in (0, 1)\) by Corollary \ref{corollary:improper-integral-convergence-one-way}.

Therefore, both \(f\) and \(g\) are well-defined functions.

Since \(h\) is continuous on \((0, 1)\) from (ii), \(f'(c) = (c-1)h(c)\) and \(g'(c) = ch(c)\) are both continuous on \((0, 1)\), hence both \(f\) and \(g\) have continuous first derivatives on \((0, 1)\).

\(f(0^+) = \lim_{c \rightarrow 0 } (\int_0^c (x-1)h(x) \: dx + f_0) = f_0\)

\(g(0^+) = \lim_{c \rightarrow 0 } (\int_0^c xh(x) \: dx + g_0) = g_0\)

Since \(f_0 \geq g_0\) from (iv), we have \(f(0^+) \geq g(0^+)\). In addition, we also have \(h(c) \leq 0\) for all \(c \in (0, 1)\), therefore, \((f(c), g(c))\) is non-hackable confidence on \((0, 1)\).

\end{proof}

\begin{definition}\label{definition:strict-non-hackable}
A reward scheme \((f(c), g(c))\) is strictly non-hackable confidence if all of the following are satisfied:

(i) \textbf{Strict Interpretability}: \(f(c)\) is strictly increasing in \((0, 1)\) and \(g(c)\) is strictly decreasing on \((0, 1)\).

(ii) \textbf{Strict Proper Scoring}: \((f(c), g(c))\) is a strictly proper scoring rule, i.e. \(\text{argmax}_c R(c, p) = \{p\} \: \forall p \in (0, 1)\).

(iii) \textbf{Strict Best Effort}: \(R_{max}\) is strictly increasing on \((0, 1)\).
\end{definition}

\begin{remark}\label{remark:strict-non-hackable}
Strict Interpretability ensures that correct answers with higher confidence are always given a higher reward and incorrect answers with higher confidence are always penalized more heavily. 

Strict Proper Scoring provides a stronger theoretical guarantee that the only optimal solution is for the LLM to be honest about its epistemic uncertainty, assuming the answer to each question is deterministic and the evaluator is deterministic. Without the strict condition, the LLM can output an inaccurate confidence estimate and still obtain the same reward.

Strict Best Effort ensures that the LLM is incentivized to always answer to the best of its ability as an increase the probability for answering correctly is always rewarded with an increase with the maximum possible expected reward for accurate confidence calibration.

\end{remark}

\begin{remark}\label{remark:strict-non-hackable-converse}
In general, strictly non-hackable confidence reward schemes are non-hackable confidence. But the converse is not true. Correctness-only, i.e. the reward scheme \((1, 0)\), is an example of a non-hackable confidence reward scheme that is not strictly non-hackable confidence. Theorem \ref{theorem:strict-non-hackable} describes a condition under which the converse is true.
\end{remark}

\begin{theorem}\label{theorem:strict-non-hackable}

If a non-hackable confidence reward scheme \((f(c), g(c))\) further satisfies \(h(c) < 0\) for all \(c \in (0, 1)\), then \((f(c), g(c))\) is strictly non-hackable confidence.

\end{theorem}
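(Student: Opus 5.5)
The plan is to check the three strict properties of Definition \ref{definition:strict-non-hackable} one at a time. Each check uses the identities already available for $h$ together with the extra hypothesis $h(c)<0$. Since the scheme is non-hackable confidence (over $(0,1)$, as the statement does not specify an interval), Theorem \ref{theorem:characterization-of-non-reward-hackable-schemes} gives that $h$ is well-defined, with $f'(c)=(c-1)h(c)$ and $g'(c)=ch(c)$, and that $f(0^+)\ge g(0^+)$. The recurring tool is the elementary fact that a differentiable function whose derivative is strictly positive on an interval is strictly increasing there. This follows from the mean value theorem, or from $\int_x^y r'>0$ for a continuous $r'>0$. It upgrades each "non-" statement in the proof of Theorem \ref{theorem:characterization-of-non-reward-hackable-schemes} to a strict one.

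\textbf{Strict Interpretability.} For $c\in(0,1)$ we have $c-1<0$ and $h(c)<0$, so $f'(c)=(c-1)h(c)>0$. Likewise $g'(c)=ch(c)<0$ because $c>0$. Hence $f$ is strictly increasing and $g$ is strictly decreasing on $(0,1)$.

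\textbf{Strict Proper Scoring.} Fix $p\in(0,1)$ and set $r(c)=R(c,p)$. By Lemma \ref{lemma:partial-derivative-computation}, $r'(c)=(c-p)h(c)$. This is strictly positive for $c<p$ and strictly negative for $c>p$. Now take $c\neq p$. If $c<p$, then $r(p)-r(c)=\int_c^p r'(x)\,dx>0$, since the integrand is continuous and strictly positive on $(c,p)$. If $c>p$, then symmetrically $r(c)-r(p)<0$. Therefore $p$ is the unique maximiser, so $\operatorname{argmax}_c R(c,p)=\{p\}$.

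\textbf{Strict Best Effort.} This is the step I expect to need the most care, because $f(0^+)\ge g(0^+)$ is only a weak inequality at the boundary and must be turned into strict positivity of $R_{max}'$ everywhere. By Lemma \ref{lemma:rmax-derivative}, $R_{max}'(c)=f(c)-g(c)$.
\begin{itemize}
\item The limits $f(0^+)=\inf f$ and $g(0^+)=\sup g$ exist by monotonicity.
\item They are finite. Neither can be $+\infty$ or $-\infty$ in a way consistent with $f(0^+)\ge g(0^+)$ together with the monotonicity directions, as argued in the proof of Theorem \ref{theorem:non-hackable-confidence-general-form}.
\end{itemize}
For any $c\in(0,1)$, strict monotonicity of $f$ and $g$ then gives the chain
\[
f(c) > f(c/2) \ge f(0^+) \ge g(0^+) \ge g(c/2) > g(c).
\]
So $R_{max}'(c)>0$ on all of $(0,1)$, and $R_{max}$ is strictly increasing there. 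Combining the three properties shows that $(f,g)$ is strictly non-hackable confidence.
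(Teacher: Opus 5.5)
Your proof is correct and is essentially the paper's argument. The signs of $f'=(c-1)h$, $g'=ch$ and $\partial_c R=(c-p)h$ give Strict Interpretability and Strict Proper Scoring, and Strict Best Effort uses the same halving trick, $R_{max}'(c)=f(c)-g(c)>f(c/2)-g(c/2)\ge 0$. The one difference is cosmetic: the paper gets $f(c/2)-g(c/2)=R_{max}'(c/2)\ge 0$ directly from the Best Effort property (arguing by contradiction), so your detour through $f(0^+)\ge g(0^+)$ and the finiteness of these limits is valid but not needed.
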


\begin{proof}[\textbf{\textup{Proof of Theorem \ref{theorem:strict-non-hackable}}}] 

\mbox{}

Let \(p\) be any real number in \((0, 1)\) and \(r(c) = R(c, p)\), where \(r\) is defined over \(c \in (0, 1)\). By Lemma \ref{lemma:partial-derivative-computation}, \(r'(c) = \frac{\partial R}{\partial c} = (c - p)h(c)\) for all \(c\) in \((0, 1)\). Since \(h(c) < 0\) for all \(c \in (0, 1)\), \(c = p\) is the only critical point in \((0, 1)\). Since \(r'\) changes from positive to negative at \(c = p\), \(c = p\) is a local maximum by first derivative test. This implies that \(c = p\) is the global maximum of \(r\) on \((0, 1)\). Therefore, \(\text{argmax}_{c} R(c, p) = \{p\}\), satisfying the Strict Proper Scoring property.

From Definition \ref{definition:h}, \(f'(c) = (c-1)h(c)\) and \(g'(c) = ch(c)\). Since \(c-1 < 0\) and \(c > 0\) for all \(c \in (0, 1)\), \(f'(c) > 0\) and \(g'(c) < 0\) for all \(c \in (0, 1)\). Therefore, \(f\) is strictly increasing and \(g\) is strictly decreasing on \((0, 1)\), satisfying the Strict Interpretability property.

In addition, by Lemma \ref{lemma:rmax-derivative}, since \(h\) is well-defined by definition of non-hackable confidence reward schemes (Definition \ref{definition:non-confidence-hackable-reward-scheme}), \( R_{max}'(c) = f(c) - g(c) \) for all \( c \in (0, 1) \). Therefore, \(R_{max}'\) is strictly increasing on \((0, 1)\).

Since Best Effort property is satisfied in non-hackable confidence reward schemes, \(R_{max}'(c) \geq 0\) for all \(c \in (0, 1)\). Suppose \(R_{max}'(\epsilon) = 0\) for some \(\epsilon \in (0, 1)\). Then \(\frac{\epsilon}{2} \in (0, 1)\) and \(\frac{\epsilon}{2} < \epsilon\), which implies that \(R_{max}'(\frac{\epsilon}{2}) < 0\) since \(R_{max}'\) is strictly increasing. This contradicts the Best Effort Property. Hence, \(R_{max}'(c) > 0\) for all \(c \in (0, 1)\). This implies that \(R_{max}\) is strictly increasing on \((0, 1)\), satisfying the Strict Best Effort property.

\end{proof}

\subsection{Alternative Formulation of Non-hackable Confidence Reward Schemes}\label{appendix:decision-making-construction}

As mentioned in Section \ref{section:related-work}, \cite{wu2026mitigatingllmhallucinationbehaviorally} and \cite{wu2026basdecisiontheoreticapproachevaluating} proposed reward functions based on the behavioral calibration framework of \cite{kalai2025languagemodelshallucinate}.

In this subsection, we show that the abstention-based risk-thresholding framework leads to a non-confidence hackable reward scheme and can be used to construct all non-hackable confidence reward schemes such that \(f(0^+) = g(0^+) = 0\).

In the behavioral calibration framework of \cite[p. 13]{kalai2025languagemodelshallucinate}, the LLM is awarded 1 point for a correct answer, 0 for an abstention, and \(-\frac{t}{1-t}\) points for an incorrect answer. This is formally described in Definition \ref{definition:abstention-reward-scheme}, and termed ``abstention reward function''.

\begin{definition}\label{definition:abstention-reward-scheme}
Let \(\text{isAbstention}(q, a)\) be 1 if the LLM answer \(a\) abstains from answering question \(q\) and 0 otherwise. Let \(\text{isCorrect}(q, a)\) be equal to 1 if answer \(a\) is a correct answer to question \(q\) and 0 otherwise. Then, for any \(t \in [0, 1)\), the abstention reward function with confidence threshold \(t\) is defined as follows:  

\(
R_{abs}(q, a) = 
\begin{cases}
1, & \text{if} \: \text{isCorrect}(q, a) = 1 \wedge \text{isAbstention}(q, a) = 0\\
0, & \text{if} \:\text{isAbstention}(q, a) = 1\\
-\frac{t}{1-t}, & \text{if} \: \text{isCorrect}(q, a) = 0 \wedge \text{isAbstention}(q, a) = 0\\
\end{cases}
\)
\end{definition}

\begin{theorem}\label{theorem:abstention-threshold}
Assuming that the confidence output \(c\) of the LLM is well-calibrated, under the abstention reward scheme with confidence threshold \(t\), it is optimal on expectation for the LLM to answer is \(c \geq t\) and to abstain if \(c < t\). 
\end{theorem}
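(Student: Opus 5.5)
The approach is a direct comparison of expected rewards under the abstention reward function of Definition \ref{definition:abstention-reward-scheme}. I will formalize ``well-calibrated'' as follows: when the LLM reports confidence \(c\) for its best-effort answer to a question \(q\), the probability that this answer is correct equals \(c\). The LLM then chooses between two actions, answering or abstaining, and I compare their expected rewards as functions of \(c\) and \(t\).

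\emph{Abstaining.} By Definition \ref{definition:abstention-reward-scheme}, abstention yields reward \(0\) with certainty. Its expected reward is therefore \(0\).

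\emph{Answering.} A non-abstaining answer is correct with probability \(c\) (reward \(1\)) and incorrect with probability \(1-c\) (reward \(-\frac{t}{1-t}\)). Its expected reward is
\[
E_{ans}(c) = c - (1-c)\frac{t}{1-t} = \frac{c(1-t) - t(1-c)}{1-t} = \frac{c - t}{1-t}.
\]
Since \(t \in [0,1)\), the denominator \(1-t\) is strictly positive. Hence \(E_{ans}(c) \geq 0\) exactly when \(c \geq t\), and \(E_{ans}(c) < 0\) exactly when \(c < t\). Comparing with the abstention reward \(0\):
\begin{itemize}
\item if \(c \geq t\), answering is (weakly) optimal, with equality at \(c = t\);
\item if \(c < t\), abstaining is strictly optimal.
\end{itemize}

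The main point needing care is that ``answering'' means answering to the best of the LLM's ability. One might object that deliberately answering incorrectly could do better than the best-effort answer. It cannot: such an answer is correct with some probability \(c' \leq c\), and \(E_{ans}\) is increasing in its argument because its slope is \(\frac{1}{1-t} > 0\). The best-effort answer therefore dominates any degraded answer.

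The same observation covers randomized policies. Any mixture of abstaining and answering has expected reward equal to a convex combination of \(0\) and \(\frac{c-t}{1-t}\). Such a combination is maximized at a pure strategy, namely the one identified above. This establishes the threshold rule stated in Theorem \ref{theorem:abstention-threshold}.
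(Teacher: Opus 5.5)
Your proof is correct and follows the paper's argument: both compare the expected reward of answering, $c - \frac{(1-c)t}{1-t}$, with the zero reward of abstaining, and use $1-t>0$ to reduce the comparison to $c \geq t$. The differences are minor: you simplify directly to $\frac{c-t}{1-t}$ where the paper runs a chain of equivalences, and your extra remarks on degraded answers and randomized policies are not in the paper but are sound.
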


\begin{proof}[\textbf{\textup{Proof of Theorem \ref{theorem:abstention-threshold}}}] 

\mbox{}

Note that \(\mathbb{E}[\text{LLM answers}] = c - \frac{(1-c)t}{1-t}\) and \(\mathbb{E}[\text{LLM abstains}] = 0\) since we assume that the confidence outputs are well-calibrated.

Since \(1-t \in (0, 1]\), 

\(\mathbb{E}[\text{LLM answers}] \geq \mathbb{E}[\text{LLM abstains}]\)

\(\Leftrightarrow c - \frac{(1-c)t}{1-t} \geq 0 \)

\(\Leftrightarrow c \geq \frac{(1-c)t}{1-t}\)

\(\Leftrightarrow c \geq \frac{t}{1-t} - \frac{ct}{1-t}\)

\(\Leftrightarrow c + \frac{ct}{1-t} \geq \frac{t}{1-t}\)

\(\Leftrightarrow c(1 + \frac{t}{1-t}) \geq \frac{t}{1-t}\)

\(\Leftrightarrow c(\frac{(1-t) + t}{1-t}) \geq \frac{t}{1-t}\)

\(\Leftrightarrow c(\frac{1}{1-t}) \geq \frac{t}{1-t}\)

\(\Leftrightarrow c \geq t\)

Therefore, it is optimal for the LLM to answer the question if \(c \geq t\) and abstain otherwise.

\end{proof}

We define a function \(w(t)\) of confidence thresholds, where \(w\) is defined, non-negative and continuous on \((0, 1)\) and \(\int_0^c w(t) \: dt\) converges for all \(c \in (0, 1)\). Note that this generalizes from \cite{wu2026basdecisiontheoreticapproachevaluating} by no longer restricting \(w(t)\) to be a probability distribution over \([0, 1)\).

Following \cite{wu2026basdecisiontheoreticapproachevaluating}, we aim to maximize the expected reward of the LLM under the abstention reward function with confidence threshold \(t\) weighted by \(w(t)\). Under the optimal strategy as described in Theorem \ref{theorem:abstention-threshold}, we derive the reward scheme \((f(c), g(c))\) as follows:

\(f(c) = \int^1_c 0 \: dt + \int^c_0 w(t) \: dt = \int^c_0 w(t) \: dt\)

\(g(c) = \int^1_c 0 \: dt + \int^c_0  w(t)\frac{-t}{1-t} \: dt = \int^c_0 w(t)\frac{t}{t-1} \: dt\)

Definition \ref{definition:weighted-abstention-confidence-threshold-reward-scheme} formally summarizes this notion as ``weighted abstention confidence threshold reward schemes''.

\begin{definition}\label{definition:weighted-abstention-confidence-threshold-reward-scheme}

\((f(c), g(c))\) is a weighted abstention confidence threshold reward scheme if there exists \(w(c)\) defined on \((0, 1)\) such that all of the following hold:

(i) \(w(c) \geq 0\) for all \(c \in (0, 1)\)

(ii) \(w\) is continuous on \((0, 1)\)

(iii) \(\int_0^c w(t) \: dt\) converges for all \(c \in (0, 1)\)

(iv) \(f(c) = \int^c_0 w(t) \: dt\)

(v) \(g(c) = \int^c_0 w(t)\frac{t}{t-1} \: dt\) 

\end{definition}

\begin{remark}
\cite{shuford1966admissible} derived a general form of reward schemes \((f(c), g(c))\) that satisfy Interpretability and Proper Scoring, but not necessarily Best Effort. Up to notational differences, the expressions of \(f(c)\) and \(g(c)\) are almost identical to weighted abstention confidence threshold reward schemes except for the integration endpoints for \(g(c)\). In this work, we aim to characterize non-hackable confidence reward schemes in terms of weighted abstention confidence threshold reward schemes, which satisfies Best Effort in addition to Interpretability and Proper Scoring.
\end{remark}

\begin{theorem}\label{theorem:abstention-framework-non-hackable-confidence}
The weighted abstention confidence threshold reward scheme 
\((f(c), g(c)) = (\int^c_0 w(t) \: dt, \int^c_0 w(t)\frac{t}{t-1} \: dt )\) is a non-hackable confidence reward scheme for any non-negative and continuous \(w(t)\) over \((0, 1)\) such that \(\int_0^c w(t) \: dt\) converges for all \(c \in (0, 1)\).
\end{theorem}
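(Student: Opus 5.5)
The plan is to reduce the statement to Theorem \ref{theorem:non-hackable-confidence-general-form}, which already characterizes non-hackable confidence reward schemes by an integral representation. The natural candidate is
\[
h(x) = \frac{w(x)}{x-1}, \qquad x \in (0,1),
\]
chosen so that \((x-1)h(x) = w(x)\) and \(x h(x) = \frac{x\,w(x)}{x-1} = w(x)\frac{x}{x-1}\). With \(f_0 = g_0 = 0\), the general form \((\int_0^c (x-1)h(x)\,dx + f_0, \int_0^c x h(x)\,dx + g_0)\) then coincides exactly with \((\int_0^c w(t)\,dt, \int_0^c w(t)\frac{t}{t-1}\,dt)\).

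Next, verify the four hypotheses of Theorem \ref{theorem:non-hackable-confidence-general-form} for this \(h\).
\begin{itemize}
\item[(i)] Since \(w \ge 0\) and \(x-1<0\) on \((0,1)\), we get \(h \le 0\).
\item[(ii)] \(h\) is a quotient of continuous functions with denominator nonvanishing on \((0,1)\), so it is continuous.
\item[(iv)] \(f_0 = g_0 = 0\) trivially satisfies \(f_0 \ge g_0\).
\item[(iii)] It remains to show that \(\int_0^c h(x)\,dx\) converges for all \(c\in(0,1)\), which is the only step requiring care.
\end{itemize}

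For (iii), use Corollary \ref{corollary:improper-integral-convergence-both-ways}. It applies because \(h\) is continuous and non-positive, and it says that convergence of \(\int_0^c h\) for all \(c\) is equivalent to convergence of \(\int_0^c (x-1)h(x)\,dx = \int_0^c w(x)\,dx\) for all \(c\). The latter holds by hypothesis. As a fallback, one can argue directly with Lemma \ref{lemma:limit-comparison-theorem-nonstandard} applied to \(-h\) and \(w\) on \((0,c]\), since their ratio \(1/(1-x) \to 1\) as \(x\to 0^+\). A small subtlety arises where \(w\) vanishes; this is why the corollary, which is phrased via the nonnegative pair \(f_1=(c-1)h\), \(f_2=-h\), is preferable to a literal ratio of \(w\) and \(h\).

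With all four conditions in hand, the \((\Leftarrow)\) direction of Theorem \ref{theorem:non-hackable-confidence-general-form} shows that the scheme is non-hackable confidence on \((0,1)\), and in particular that both integrals defining \(f\) and \(g\) converge. The main obstacle is the convergence bookkeeping in (iii), together with making sure that the \(g\)-integral \(\int_0^c w(t)\frac{t}{t-1}\,dt\) is well-defined. The latter is covered inside the cited theorem by Corollary \ref{corollary:improper-integral-convergence-one-way}, so nothing beyond the identification of \(h\) is genuinely new.
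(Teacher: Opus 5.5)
Your proposal is correct and follows essentially the same route as the paper. You set $h(c) = w(c)/(c-1)$, check that $h$ is non-positive and continuous, obtain convergence of $\int_0^c h$ from Corollary~\ref{corollary:improper-integral-convergence-both-ways}, and conclude via the $(\Leftarrow)$ direction of Theorem~\ref{theorem:non-hackable-confidence-general-form} with $f_0 = g_0 = 0$; the only cosmetic difference is that the paper explicitly invokes Corollary~\ref{corollary:improper-integral-convergence-one-way} for the $g$-integral, whereas you leave that step inside the cited theorem.
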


\begin{proof}[\textbf{\textup{Proof of Theorem \ref{theorem:abstention-framework-non-hackable-confidence}}}] 

\mbox{}

Since \(f'(c) = w(c)\) and \(g'(c) = w(c)\frac{c}{c-1}\) from (iv) and (v) respectively and \(w\) is continuous on \((0, 1)\) from (ii), \(f\) and \(g\) have continuous first derivatives on \((0, 1)\). Furthermore, \(\frac{f'(c)}{c-1} = \frac{w(c)}{c-1}\) and \(\frac{g'(c)}{c} = \frac{w(c)}{c-1}\). Hence, \(h\) is well defined and \(h(c) = \frac{w(c)}{c-1}\). 

Since \(w(c) \geq 0\) for all \(c \in (0, 1)\) from (i) and \(c - 1 < 0\) for all \(c \in (0, 1)\), \(h(c) \leq 0\) for all \(c \in (0, 1)\). Since \(w(c)\) is continuous on \((0, 1)\) from (ii), it follows that \(h(c)\) must also be continuous on \((0, 1)\).

By Corollary \ref{corollary:improper-integral-convergence-both-ways}, since \(w(c) = (c-1)h(c)\) and \(\int^c_0 w(t) \: dt\) converges for all \(c \in (0, 1)\) from (iii), \(\int^c_0 h(t) \: dt\) converges for all \(c \in (0, 1)\). Therefore, by Corollary \ref{corollary:improper-integral-convergence-one-way}, \(\int^c_0 th(t) \: dt\) converges for all \(c \in (0, 1)\).

Since \(w(c) = (c-1)h(c)\) and \(w(c)\frac{c}{c-1} = ch(c)\) for all \(c \in (0, 1)\), \((\int^c_0 w(t) \: dt, \int^c_0 w(t)\frac{t}{t-1} \: dt) = (\int^c_0 (t-1)h(t) \: dt, \int^c_0 th(t) \: dt)\) is a non-hackable confidence reward scheme on \((0, 1)\) by Theorem \ref{theorem:non-hackable-confidence-general-form}.

\end{proof}

As the proof of Theorem \ref{theorem:abstention-framework-non-hackable-confidence} suggests, the formulation of the weighted abstention confidence threshold reward scheme has \(f_0 = g_0 = 0\), hence does not take into account possible additive constant rewards for answering correctly and answering incorrectly. Therefore, we generalize the notion of weighted abstention confidence threshold reward schemes in Definition \ref{definition:generalized-weighted-abstention-confidence-threshold-reward-scheme}.

\begin{definition}\label{definition:generalized-weighted-abstention-confidence-threshold-reward-scheme}

\((f(c), g(c))\) is a generalized weighted abstention confidence threshold reward scheme if there exists \(w(t)\) defined on \((0, 1)\) and \(f_0, g_0 \in \mathbb{R}\) such that all of the following hold:

(i) \(w(c) \geq 0\) for all \(c \in (0, 1)\)

(ii) \(w\) is continuous on \((0, 1)\)

(iii) \(\int_0^c w(t) \: dt\) converges for all \(c \in (0, 1)\)

(iv) \(f(c) = \int^c_0 w(t) \: dt + f_0\)

(v) \(g(c) = \int^c_0 w(t)\frac{t}{t-1} \: dt + g_0\) 

(vi) \(f_0 \geq g_0\)

\end{definition}

\begin{theorem}\label{theorem:abstention-framework-non-hackable-confidence-parametrization}
The set of non-hackable confidence reward schemes is the set of generalized weighted abstention confidence threshold reward schemes.
\end{theorem}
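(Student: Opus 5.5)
We prove the set equality by double inclusion, reducing both directions to the general form in Theorem \ref{theorem:non-hackable-confidence-general-form}. The dictionary between the two parametrizations is $w(c) = (c-1)h(c)$, or equivalently $h(c) = \frac{w(c)}{c-1}$. This is already implicit in the proof of Theorem \ref{theorem:abstention-framework-non-hackable-confidence}. Under this substitution,
\begin{align*}
(x-1)h(x) &= w(x), \\
xh(x) &= w(x)\tfrac{x}{x-1}.
\end{align*}
Hence the integrands in Theorem \ref{theorem:non-hackable-confidence-general-form} and in Definition \ref{definition:generalized-weighted-abstention-confidence-threshold-reward-scheme} coincide term by term. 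The additive constants $f_0, g_0$ and the condition $f_0 \geq g_0$ appear identically in both.

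\textbf{Generalized weighted abstention $\subseteq$ non-hackable.} Let $(f,g)$ be given by $w, f_0, g_0$ satisfying conditions (i)--(vi) of Definition \ref{definition:generalized-weighted-abstention-confidence-threshold-reward-scheme}. Set $h(c) = \frac{w(c)}{c-1}$.
\begin{itemize}
\item $h$ is continuous on $(0,1)$, since $w$ is continuous and $c-1 \neq 0$ there.
\item $h \leq 0$, since $w \geq 0$ and $c-1<0$.
\item The integrals $\int_0^c (x-1)h(x)\,dx = \int_0^c w(x)\,dx$ converge for all $c \in (0,1)$. Corollary \ref{corollary:improper-integral-convergence-both-ways} then gives convergence of $\int_0^c h(x)\,dx$ for all $c$.
\end{itemize}
Together with $f_0 \geq g_0$, this means $(f,g)$ matches the general form of Theorem \ref{theorem:non-hackable-confidence-general-form} with conditions (i)--(iv) satisfied. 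So $(f,g)$ is non-hackable confidence. This argument is essentially the proof of Theorem \ref{theorem:abstention-framework-non-hackable-confidence} with the constants carried along.

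\textbf{Non-hackable $\subseteq$ generalized weighted abstention.} Start from a non-hackable confidence scheme $(f,g)$. By Theorem \ref{theorem:non-hackable-confidence-general-form}, it can be written as $(\int_0^c (x-1)h(x)\,dx + f_0, \int_0^c xh(x)\,dx + g_0)$, where $h$ is continuous and nonpositive, $\int_0^c h$ converges, and $f_0 \geq g_0$. Define $w(c) = (c-1)h(c)$ and check each condition of Definition \ref{definition:generalized-weighted-abstention-confidence-threshold-reward-scheme}:
\begin{itemize}
\item $w \geq 0$, as a product of two nonpositive factors.
\item $w$ is continuous.
\item $\int_0^c w$ converges for all $c$, by Corollary \ref{corollary:improper-integral-convergence-both-ways}, applied in the direction opposite to the first inclusion.
\item $f(c) = \int_0^c w + f_0$ directly.
\item $g(c) = \int_0^c w(x)\frac{x}{x-1}\,dx + g_0$, because $w(x)\frac{x}{x-1} = xh(x)$ on $(0,1)$.
\end{itemize}
The constraint $f_0 \geq g_0$ is inherited unchanged.

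\textbf{Main obstacle.} The only delicate point is the bookkeeping of improper-integral convergence at $0^+$. We must make sure that convergence of $\int_0^c w$ and of $\int_0^c h$ transfer in both directions, and that $\int_0^c xh(x)\,dx$ converges. Corollaries \ref{corollary:improper-integral-convergence-both-ways} and \ref{corollary:improper-integral-convergence-one-way} already settle all three. Beyond citing them, we only need to confirm that the identity $w(x)\frac{x}{x-1} = xh(x)$ holds pointwise on the open interval, where division by $x-1$ is legitimate.
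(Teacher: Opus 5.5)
Your proposal is correct and matches the paper's proof essentially step for step. It uses the same dictionary $w(c) = (c-1)h(c)$, reduces both inclusions to Theorem~\ref{theorem:non-hackable-confidence-general-form}, and invokes Corollaries~\ref{corollary:improper-integral-convergence-both-ways} and~\ref{corollary:improper-integral-convergence-one-way} in the same places to transfer convergence at $0^+$. The paper also checks explicitly that $f' = w$ and $g' = w\,\frac{c}{c-1}$ are continuous; you leave this to the converse direction of Theorem~\ref{theorem:non-hackable-confidence-general-form}, which does establish it, so nothing is missing.
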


\begin{proof}[\textbf{\textup{Proof of Theorem \ref{theorem:abstention-framework-non-hackable-confidence-parametrization}}}]

\mbox{}

\((\Rightarrow)\)

We aim to show non-hackable confidence reward schemes are generalized weighted abstention confidence threshold reward schemes. Let \((f(c), g(c))\) be a non-hackable confidence reward scheme.

By Theorem \ref{theorem:non-hackable-confidence-general-form}, \(f(c) = \int_0^c (t-1)h(t) \: dt + f_0\) and \(g(c) = \int_0^c th(t) \: dt + g_0\), where all of the following hold:

(a) \(h(c) \leq 0\) for all \(c \in (0, 1)\)

(b) \(h\) is continuous on \((0, 1)\)

(c) \(\int_0^c h(t) \: dt\) converges for all \(c \in (0, 1)\)

(d) \(f_0, g_0 \in \mathbb{R}\) such that \(f_0 \geq g_0\).

We aim to show that \((f(c), g(c))\) is a generalized weighted abstention confidence threshold reward scheme.

Let \(w(c) = (c-1)h(c)\) for all \(c \in (0, 1)\). Since \(h(c) \leq 0\) for all \(c \in (0, 1)\) from (a) and \(c - 1 < 0\) for all \(c \in (0, 1)\), \(w(c) \geq 0\) for all \(c \in (0, 1)\), satisfying (i) in Definition \ref{definition:generalized-weighted-abstention-confidence-threshold-reward-scheme}.

Since \(h\) is continuous on \((0, 1)\) from (b), \(w\) is continuous on \((0, 1)\), satisfying (ii) in Definition \ref{definition:generalized-weighted-abstention-confidence-threshold-reward-scheme}.

Since in addition, \(\int_0^c h(t) \: dt\) converges for all \(c \in (0, 1)\) from (c), by Corollary \ref{corollary:improper-integral-convergence-both-ways}, \(\int_0^c (t-1)h(t) \: dt\) converges for all \(c \in (0, 1)\) from (c), satisfying (iii) in Definition \ref{definition:generalized-weighted-abstention-confidence-threshold-reward-scheme}. Therefore, by Corollary \ref{corollary:improper-integral-convergence-one-way}, \(\int^c_0 th(t) \: dt\) converges for all \(c \in (0, 1)\). This ensures both \(f\) and \(g\) are well-defined on \((0, 1)\).

\(f(c) = \int_0^c (t-1)h(t) \: dt + f_0 = \int_0^c w(t) \: dt + f_0\), satisfying (iv) in Definition \ref{definition:generalized-weighted-abstention-confidence-threshold-reward-scheme}.

For all \(c \in (0, 1)\), since \(w(c) = (c-1)h(c)\), \(h(c) = \frac{w(c)}{c-1}\). Therefore, 

\(g(c) = \int_0^c th(t) \: dt + g_0 = \int_0^c w(t)\frac{t}{t-1} \: dt + g_0\), which satisfies (v) in Definition \ref{definition:generalized-weighted-abstention-confidence-threshold-reward-scheme}.

(vi) in Definition \ref{definition:generalized-weighted-abstention-confidence-threshold-reward-scheme} is automatically satisfied from (d).

Therefore, \((f(c), g(c))\) is a generalized weighted abstention confidence threshold reward scheme.

\((\Leftarrow)\)

We aim to show that generalized weighted abstention confidence threshold reward schemes are non-hackable confidence reward schemes. Let \((f(c), g(c))\) be a generalized weighted abstention confidence threshold reward scheme.

Since \(f'(c) = w(c)\) and \(g'(c) = w(c)\frac{c}{c-1}\) from (iv) and (v) in Definition \ref{definition:generalized-weighted-abstention-confidence-threshold-reward-scheme} respectively and \(w\) is continuous on \((0, 1)\) from (ii) in Definition \ref{definition:generalized-weighted-abstention-confidence-threshold-reward-scheme}, \(f\) and \(g\) have continuous first derivatives on \((0, 1)\). Furthermore, \(\frac{f'(c)}{c-1} = \frac{w(c)}{c-1}\) and \(\frac{g'(c)}{c} = \frac{w(c)}{c-1}\). Hence, \(h\) is well defined and \(h(c) = \frac{w(c)}{c-1}\). 

Since \(w(c) \geq 0\) for all \(c \in (0, 1)\) from (i) in Definition \ref{definition:generalized-weighted-abstention-confidence-threshold-reward-scheme} and \(c - 1 < 0\) for all \(c \in (0, 1)\), \(h(c) \leq 0\) for all \(c \in (0, 1)\). Since \(w(c)\) is continuous on \((0, 1)\) from (ii) in Definition \ref{definition:generalized-weighted-abstention-confidence-threshold-reward-scheme}, it follows that \(h(c)\) must also be continuous on \((0, 1)\).

By Corollary \ref{corollary:improper-integral-convergence-both-ways}, since \(w(c) = (c-1)h(c)\) and \(\int^c_0 w(t) \: dt\) converges for all \(c \in (0, 1)\) from (iii) in Definition \ref{definition:generalized-weighted-abstention-confidence-threshold-reward-scheme}, \(\int^c_0 h(t) \: dt\) converges for all \(c \in (0, 1)\). Therefore, by Corollary \ref{corollary:improper-integral-convergence-one-way}, \(\int^c_0 th(t) \: dt\) converges for all \(c \in (0, 1)\).

Since \(f_0 \geq g_0\) from (vi) in Definition \ref{definition:generalized-weighted-abstention-confidence-threshold-reward-scheme}, \(w(c) = (c-1)h(c)\) for all \(c \in (0, 1)\) and \(w(c)\frac{c}{c-1} = ch(c)\) for all \(c \in (0, 1)\), \((\int^c_0 w(t) \: dt + f_0, \int^c_0 w(t)\frac{t}{t-1} \: dt + g_0) = (\int^c_0 (t-1)h(t) \: dt + f_0, \int^c_0 th(t) \: dt + g_0)\) is a non-hackable confidence reward scheme on \((0, 1)\) by Theorem \ref{theorem:non-hackable-confidence-general-form}.

\end{proof}

As shown in Theorem \ref{theorem:abstention-framework-non-hackable-confidence-parametrization}, generalized weighted abstention confidence threshold reward schemes form an alternative parametrization of non-hackable confidence reward schemes, providing an alternative perspective from the decision theoretic point of view.

The scenario in \cite{wu2026basdecisiontheoreticapproachevaluating} is a special case with all of the following additional constraints:

(i) \(w(t)\) has a well-defined continuous extension at \(0^+\) by setting \(w(0) = w(0^+)\).

(ii) \(f_0 = g_0 = 0\)

(iii) \(\int_0^1 w(t) \: dt = 1\)

\subsection{Importance of Interpretability Property}\label{appendix:interpretability-redundant}

While both \cite{damani2026beyond} and \cite{wu2026basdecisiontheoreticapproachevaluating} considered the Proper Scoring and Best Effort properties, they did not consider the Interpretability property. Nonetheless, their proposed Brier-1 \cite{damani2026beyond} and Brier-log Hybrid \cite{wu2026basdecisiontheoreticapproachevaluating} reward schemes satisfy the Interpretability property. As demonstrated in Corollary \ref{corollary:unimodal-R-discrete}, proper scoring reward schemes that also satisfy the Interpretability property encourage a response confidence value close to the true probability of answering the question correctly even when the allowed confidence values are discretized. This is useful as the difference in the optimal confidence value to the true probability of answering the question correctly can be bounded by the maximum gap of two consecutive allowable confidence values when the set of allowable confidence values is finite. For instance, this applies to settings where the LLM is tasked to output a raw unnormalized confidence value as an integer between 0 and 100 inclusive.

This section provides a proof that reward schemes that satisfy the Proper Scoring property must also satisfy the Interpretability property, demonstrating that we did not further restrict the set of possible non-hackable confidence reward schemes by inclusion of the Interpretability property. 

\begin{theorem}\label{theorem:interpretability-redundant}
Reward schemes \((f(c), g(c))\) that satisfy the Proper Scoring property must also satisfy the Interpretability property.
\end{theorem}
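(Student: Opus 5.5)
We must show that Proper Scoring implies $f$ is non-decreasing and $g$ is non-increasing on $(0,1)$. The plan is a classical exchange (revealed preference) argument, which avoids needing the sign of $h$ a priori. Lemma~\ref{lemma:derivation-of-h} already gives that $h$ is well-defined, with $f'(c)=(c-1)h(c)$ and $g'(c)=ch(c)$. By Lemma~\ref{lemma:h-interpretability}, it then suffices to prove $h(c)\le 0$ for all $c\in(0,1)$.

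First, take any $p<q$ in $(0,1)$. Proper Scoring at $p$ gives $R(p,p)\ge R(q,p)$, and Proper Scoring at $q$ gives $R(q,q)\ge R(p,q)$. Written out, these are
\[
p f(p)+(1-p)g(p)\ \ge\ p f(q)+(1-p)g(q)
\]
and
\[
q f(q)+(1-q)g(q)\ \ge\ q f(p)+(1-q)g(p).
\]
Adding the two inequalities and cancelling terms gives $(q-p)\bigl[(f(q)-g(q))-(f(p)-g(p))\bigr]\ge 0$. Hence $D(c):=f(c)-g(c)$ is non-decreasing on $(0,1)$.

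Next, we connect $D$ to $h$. Since $D'(c)=f'(c)-g'(c)=(c-1)h(c)-ch(c)=-h(c)$, monotonicity of $D$ (which is $C^1$) yields $D'(c)\ge 0$, i.e.\ $h(c)\le 0$ for every $c\in(0,1)$. Lemma~\ref{lemma:h-interpretability} then gives Interpretability directly. Equivalently, we could read off $f'=(c-1)h\ge 0$ and $g'=ch\le 0$.

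The only delicate point is passing from the monotonicity of $D$ to a pointwise sign on $D'$. This is routine here because $f$ and $g$ are assumed continuously differentiable: the difference quotients of a non-decreasing differentiable function are non-negative, so the limit is too. An alternative route, which we would mention as a remark, uses a second-order condition at the maximizer $c=p$ of $R(\cdot,p)$. Lemma~\ref{lemma:partial-derivative-computation} gives $\partial_c R=(c-p)h(c)$, which suggests a sign argument. However, that route needs care when $h$ is not differentiable, which is why the exchange argument above is preferable.
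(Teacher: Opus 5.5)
Your proof is correct, but it takes a different route from the paper's.

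\textbf{The paper's route.} The paper argues locally and by contradiction. If $h(p)>0$ for some $p$, continuity of $h$ (inherited from continuity of $f'$ and $g'$) gives $h>0$ on $[p-\epsilon/2,p)$. So $\partial_c R(c,p)=(c-p)h(c)<0$ there, and integrating gives $R(c,p)>R(p,p)$ for $c$ just below $p$, contradicting Proper Scoring. This is a first-order sign argument that needs only continuity of $h$, not differentiability. So the caveat in your closing remark does not really apply to it.

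\textbf{Your route.} You apply Proper Scoring globally at two points $p<q$ and add the inequalities. The algebra checks out. It shows that $D=f-g=R_{max}'$ is non-decreasing, i.e.\ that $R_{max}(p)=\sup_c R(c,p)$ is convex as a supremum of affine functions of $p$. The paper elsewhere derives this fact from Interpretability rather than directly from Proper Scoring. Your argument needs differentiability of $f,g$ but not continuity of $f',g'$.

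\textbf{You can drop $h$ entirely.} Write $\Delta f=f(q)-f(p)$ and $\Delta g=g(q)-g(p)$. Your two inequalities say $\phi(p)\le 0\le\phi(q)$ for the affine function $\phi(t)=t\,\Delta f+(1-t)\,\Delta g$. Hence its slope $\Delta f-\Delta g$ is non-negative, which gives $\Delta f=\phi(1)\ge\phi(q)\ge 0$ and $\Delta g=\phi(0)\le\phi(p)\le 0$. That is Interpretability with no smoothness assumption at all, the generality of the Schervish lemma the paper cites.

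\textbf{What each buys.} Yours is more elementary and more general. The paper's reuses its $h$-machinery (Lemma~\ref{lemma:partial-derivative-computation}) and localizes the failure: any point with $h>0$ yields a profitable small downward deviation of the reported confidence near that point.
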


\begin{remark}
A more general version of Theorem \ref{theorem:interpretability-redundant}, which does not assume continuity of \(f(c)\) and \(g(c)\), and its corresponding proof, can be found in Lemma A.1 of \cite{schervish1989general}. A different explanation of why Theorem \ref{theorem:interpretability-redundant} holds can be found in \cite{gneiting2007strictly}.
\end{remark}

\begin{proof}[\textbf{\textup{Proof of Theorem \ref{theorem:interpretability-redundant}}}]

\mbox{}

Since \((f(c), g(c))\) satisfies the Proper Scoring property, by Lemma \ref{lemma:derivation-of-h}, \(h\) is well-defined and \(h(c) = \frac{f'(c)}{c-1} = \frac{g'(c)}{c}\) for all \(c \in (0, 1)\). 

Suppose the Interpretability property is not satisfied. Then, by Lemma \ref{lemma:h-interpretability}, there exists \(p \in (0, 1)\) such that \(h(p) > 0\).

Since \(f\) and \(g\) are assumed to have continuous first derivatives on \((0, 1)\), \(h\) is continuous on \((0, 1)\). Therefore, there exist \(\epsilon > 0\) such that \(|h(c) - h(p)| < h(p)\) for all \(c \in (p-\epsilon, p+\epsilon) \). Note that since \(h(c)\) must be well-defined for all \(c \in (p - \epsilon, p+\epsilon)\), \(p-\epsilon, p+\epsilon \in [0, 1]\). Since \(\epsilon > 0\), \(0 \leq p - \epsilon < p - \frac{\epsilon}{2} < p < p + \epsilon \leq 1\).

Hence, for all \(c \in [p-\frac{\epsilon}{2}, p)\), \(|h(c) - h(p)| < h(p) \implies h(c) - h(p) > -h(p) \implies h(c) > 0 \).

Let \(r(c) = R(c, p)\). By Lemma \ref{lemma:partial-derivative-computation}, \(r'(c) = \frac{\partial}{\partial c} R(c, p) = (c-p)h(c) < 0\) for all \(c \in [p-\frac{\epsilon}{2}, p)\). 

Therefore, \(r(p) - r(c) = \int_c^p (c-p)h(c) < 0 \implies r(c) > r(p) \) for all \(c \in [p-\frac{\epsilon}{2}, p)\). Hence, \(p \notin \text{argmax}_c r(c) \implies p \notin \text{argmax}_c R(c, p)\). Therefore, we obtain a contradiction on the Proper Scoring property and all reward schemes that satisfy the Proper Scoring property must also satisfy the Interpretability property.

\end{proof}

\section{Brier-log Hybrid Naming} \label{appendix:brier-log-hybrid-naming}

This section of the appendix aims to justify why the Brier-log Hybrid reward scheme is named as such. To recall, Brier-log Hybrid reward scheme is the reward scheme \((f(c), g(c))\), where \(f(c) = c\) and \(g(c) = c+\ln(1-c) \). In short, the Brier-log Hybrid reward scheme resembles the log loss at higher confidences and a scaled version of Brier-1 score at lower confidences. 

\subsection{Calculations for Brier-1 reward scheme}

To understand the comparison with the Brier-1 reward scheme when the probability of answering correctly is low, we need to compute \(R_{max}(c)\) and \(R(c, p)\). To recall, the Brier-1 reward scheme, originally proposed (but not named as such) by \cite{damani2026beyond}, is defined as \((1-(1-c)^2, -c^2)\).

For the Brier-1 reward scheme, the maximum expected reward with probability of correctness \(c\) is

\(R_{max}(c)\)

\(= c(1 - (1-c)^2) + (1-c)(-c^2)\)

\(= c(1 - 1 + 2c - c^2) - c^2 + c^3\)

\(= 2c^2 - c^3 - c^2 + c^3\)

\(= c^2\). 

The expected reward with confidence \(c\) and probability of correctness \(p\) is

\(R(c, p)\)

\(= p(1-(1-c)^2) + (1-p)(-c^2)\)

\(= p(1 - 1 - 2c + c^2) -c^2 + pc^2\)

\(= - 2pc + pc^2 - c^2 + pc^2\)

\(= 2pc - c^2\)

\subsection{Low confidence}

When \(c \in (0, 1)\) is close to 0, the incorrectness reward is \(c + \ln(1-c) \approx c - c - \frac{c^2}{2} = -\frac{c^2}{2}\). The correctness reward is approximately \( c \).

The maximum possible expected reward is \(R_{max}(c) \approx c^2 + (1-c)(-\frac{c^2}{2}) \approx c^2 - \frac{c^2}{2} = \frac{c^2}{2}\), which is exactly half of that in the Brier-1 reward scheme.

Up to the second order terms, for the Brier-log Hybrid function, assuming both \(p\) and \(c\) are small, 

\(R(c, p)\)

\(= p(c) + (1-p)(c + \ln(1-c)) \)

\(\approx pc + (1-p)(c - c - \frac{c^2}{2}) \) 

\(= pc + (1 - p)(-\frac{c^2}{2}) \)

\(= pc - \frac{c^2}{2} + \frac{pc^2}{2} \)

\(\approx pc - \frac{c^2}{2} \)

which is one-half of the corresponding value for Brier-1. 

\subsection{High confidence}

As observed by \cite{wu2026basdecisiontheoreticapproachevaluating}, both Brier-log Hybrid and Log Loss have a heavy overconfidence penalty. In this short subsection, we explain from an intuitive mathematical perspective how Brier-log Hybrid resembles the Log Loss reward scheme when the confidence \(c\) is high.

When \(c \in (0, 1)\) is close to 1, \(\ln c \approx 0\). Therefore, the correctness reward of \(c\) approximates \(c + \ln(c)\). The incorrectness reward of \(c + \ln(1-c)\) stays the same as Log Loss reward scheme except for an additional reward of \(c\). Hence, Brier-log Hybrid approximates Log Loss with an additional constant reward of \(c\) regardless of the correctness of the answer.

\section{Determination of whether reward schemes in experiments are non-hackable confidence} \label{appendix:non-hackable-determination}

For all reward schemes parametrized by \(k\), we assume that \(k\) is non-negative.

The reward schemes used in experiments all have \(h(c)\), as defined in Definition \ref{definition:h}, well defined. The corresponding values of \(h(c)\) are in Table \ref{table:h-list}. Note that all the \(h(c)\) in Table \ref{table:h-list} are non-positive for \(c \in (0, 1)\), assuming positive \(k\). Moreover, assuming positive \(k\), \(h(c) < 0\) for all \(c \in (0, 1)\) for all examined examined reward schemes except Correctness-only. Hence, by Theorem \ref{theorem:strict-non-hackable}, apart from Correctness-only, all other examined reward schemes are strictly non-hackable confidence over the same interval they are non-hackable confidence.

\begin{table}
    \centering
    \caption{List of corresponding \(h(c)\) for each reward scheme}
    \begin{tabular}{cc}\toprule
         Reward Scheme
&  \(h(c)\)\\\midrule
Correctness-only &0\\ \midrule
 Log-\(k\)
& \(\frac{k}{c(c-1)}\)\\ \midrule
         Log Loss
&  \(\frac{1}{c(c-1)}\)\\ \midrule
         Brier-\(k\) 
&  \(-2k\)\\ \midrule
 Brier Score
& \(-2\)\\ \midrule
 Brier-Log Hybrid
& \(\frac{1}{c-1}\)\\ \midrule
 Overconfidence-\(k\)
& \(-\frac{k}{(\frac{1}{k}+c)((k+1)\ln(k+1)-k)}\)\\ \midrule
 Underconfidence-\(k\)& \(-\frac{1}{(\frac{1}{k}+(1-c))(1-\frac{\ln(k+1)}{k})}\)\\ \bottomrule\end{tabular}
    \label{table:h-list}
\end{table}

By Theorem \ref{theorem:characterization-of-non-reward-hackable-schemes}, it remains to check the values of \(a\) such that \(f(a^+) \geq g(a^+)\) holds

\subsection{Correctness-only}

Since \(f(c) = 1 > g(c) = 0\) for all \(c \in (0, 1)\), Correctness-only is a  non-hackable confidence reward scheme on \((0, 1)\).

\subsection{Log-\(k\)}

Recall that for Log-\(k\), \(f(c) = 1 + k\ln(c)\) and \(g(c) = k\ln(1-c)\). 

\(f(c) - g(c)\)

\(= 1 + k\ln(c) - k\ln(1-c)\)

\(= 1 + k(\ln(c) - \ln(1-c))\)

\(= 1 + k\ln(\frac{c}{1-c})\)

Therefore, 
\(f(c) \geq g(c)\)

\(\Leftrightarrow 1 + k\ln(\frac{c}{1-c}) \geq 0\)

\(\Leftrightarrow k\ln(\frac{c}{1-c}) \geq -1\)

\(\Leftrightarrow \ln(\frac{c}{1-c}) \geq -\frac{1}{k}\)

\(\Leftrightarrow \frac{c}{1-c} \geq e^{-\frac{1}{k}} \)

\(\Leftrightarrow c \geq (1-c)e^{-\frac{1}{k}} \) (which holds since \(c \in (0, 1) \implies 1-c>0\))

\(\Leftrightarrow c \geq e^{-\frac{1}{k}} - ce^{-\frac{1}{k}} \) 

\(\Leftrightarrow c + ce^{-\frac{1}{k}} \geq e^{-\frac{1}{k}} \) 

\(\Leftrightarrow c (1 + e^{-\frac{1}{k}}) \geq e^{-\frac{1}{k}} \) 

\(\Leftrightarrow c \geq \frac{e^{-\frac{1}{k}}}{1 + e^{-\frac{1}{k}}} \) 

Therefore, Log-\(k\) is a non-hackable confidence reward scheme on \(\left(\frac{e^{-\frac{1}{k}}}{1 + e^{-\frac{1}{k}}}, 1\right)\). An alternative explanation for why Log-\(k\) is a hackable confidence reward scheme on \((0, 1)\) can be found in \cite{damani2026beyond}.

In particular, when \(k = \frac{1}{\ln 202}\), \(e^{-\frac{1}{k}} = e^{-\ln 202} = \frac{1}{202}\). Therefore, Log-\(\frac{1}{\ln 202}\) is a non-hackable confidence reward scheme on \((\frac{1}{203}, 1)\). For our experiments, the minimum possible representable confidence value is \(\frac{1}{202}\), hence confidence reward hacking cannot take place in Log-\(\frac{1}{\ln 202}\). In practice, with a sufficiently small \(k\), which need not be too small (e.g. less than \(0.1\)), confidence reward hacking is unlikely to take place.

\subsection{Log Loss}

Recall that \(f(c) = \ln(c)\) and \(g(c) = \ln(1-c)\).

Hence, \(f(c) \geq g(c) \Leftrightarrow \ln(c) \geq \ln(1-c) \Leftrightarrow c \geq 1-c \Leftrightarrow 2c \geq 1\).

Therefore, the Log Loss reward scheme is non-hackable confidence only on \((0.5, 1)\).

\subsection{Brier-\(k\)}

Recall that for Brier-\(k\), \(f(c) = 1 - k(1 - c)^2\) and \(g(c) = -kc^2\). 

\(f(c) - g(c)\)

\(= 1 - k(1 - c)^2 - (-kc^2)\)

\(= 1 - k(1 - 2c + c^2) + kc^2\)

\(= 1 - k + 2kc - kc^2 + kc^2\)

\(= 1 - k + 2kc\)

Therefore, \(f(c) \geq g(c) \Leftrightarrow 1 - k + 2kc \geq 0 \Leftrightarrow 2kc \geq k-1 \Leftrightarrow c \geq \frac{k-1}{2k} \).

Hence, Brier-\(k\) is non-confidence hackable on \((0, 1)\) for all \(k \in (0, 1]\) and on \((\frac{k-1}{2k}, 1)\) for all \(k \in (1, \infty)\).

An alternative explanation for why Brier-\(k\) is a non-hackable confidence reward scheme over \((0, 1)\) if and only if \(k \in (0, 1]\) (excluding the Interpretability property) can be found in \cite{damani2026beyond}.

\subsection{Brier Score}

Recall that the Brier Score reward scheme has \(f(c) = -(1-c)^2\) and \(g(c) = -c^2\).

\(f(c) - g(c) = - (1-c)^2 - (-c^2) = - (1 - 2c + c^2) + c^2 = - 1 + 2c\)

Therefore, \(f(c) \geq g(c)\) if and only if \(c \in [0.5, 1)\). 

Hence, Brier score is a non-hackable confidence reward scheme only over \((0.5, 1)\).

\subsection{Brier-log Hybrid}

Since \(f(c) - g(c) = -\ln(1-c)\) is always non-negative for all \(c \in (0, 1)\), Brier-Log Hybrid is non-hackable confidence. 

An alternative explanation for why Brier-log Hybrid is a strict non-hackable confidence reward scheme over \((0, 1)\) (excluding the strict Interpretability property) can be found on \cite{wu2026basdecisiontheoreticapproachevaluating}.

\subsection{Overconfidence-\(k\)}\label{appendix:overconfidence-k-non-hackable-confidence}

Let \(l(k) = (k+1)\ln(k+1) - k\). Since \(l(0^+) = 0\) and \(l'(k) = \ln(k+1) + 1 - 1 = \ln(k+1) > 0\) for all positive \(k\), \(l(k) > 0\) for all positive \(k\).

Since \(f(c) - g(c) = \frac{k\ln(ck+1)}{(k+1)\ln(k+1)-k}\) is always non-negative for all \(c \in (0, 1)\), Overconfidence-\(k\) is non-hackable confidence.

\subsection{Underconfidence-\(k\)}\label{appendix:underconfidence-k-non-hackable-confidence}

Let \(l(k) = k - \ln(1+k)\). Since \(l(0^+) = 0\) and \(l'(k) = 1 - \frac{1}{1+k} > 0\) for all positive \(k\), \(l(k)\) is positive for all positive \(k\). 

Therefore, \(f(c) - g(c) = \frac{-k\ln(1-\frac{kc}{1+k})}{k-\ln(1+k)}\) is always non-negative for all \(c \in (0, 1)\). Hence, Underconfidence-\(k\) is a non-hackable confidence reward scheme on \((0, 1)\).

\section{Determination of Non-hackable Confidence Reward Schemes with Overconfidence and Underconfidence Bias} \label{appendix:confidence-bias}

Intuitively, overconfidence bias means that the LLM will receive a better reward when overconfident relative to the true probability of answering correctly by \(\delta\), compared to being underconfident by \(\delta\). When unsure about the well-calibrated subjective probability of answering correctly, LLM is incentivized to be overconfident about its correctness. 

Likewise, underconfidence bias means that the LLM will receive a better reward when underconfident relative to the true probability of answering correctly by \(\delta\), compared to being overconfident by \(\delta\). When unsure about the well-calibrated subjective probability of answering correctly, LLM is incentivized to be underconfident about its correctness. 

Definition \ref{definition:overconfidence-underconfidence-bias} is restated below. Note that for simplicity, in this section of the appendix, we only consider non-hackable confidence reward schemes.

\definitionconfidencebias*

\cite{wu2026basdecisiontheoreticapproachevaluating} proposed Brier-log Hybrid and alluded to its underconfidence bias. In Appendix \ref{appendix:sufficient-condition-confidence-bias}, we generalize this finding by proving a sufficient condition for a non-hackable reward confidence scheme to exhibit overconfidence or underconfidence bias.

In Appendix \ref{appendix:}, we determine whether each of the non-reward hackable schemes examined in Appendix \ref{appendix:non-hackable-determination} exhibit overconfidence bias or underconfidence bias.

\subsection{Sufficient Condition for Overconfidence/Underconfidence Bias} \label{appendix:sufficient-condition-confidence-bias}

For this subsection, we further assume that both the reward for answering correctly \(f(c)\) and the reward for answering incorrectly \(g(c)\) have continuous second-order derivatives. This allows for the expected reward function \(R(c, p)\) to have continuous second-order derivatives.

\begin{lemma}\label{lemma:confidence-bias-first-derivative}
Let \(r\) be a function on \((0, 1)\) such that \(r\) has continuous second-order derivatives on \((0, 1)\) and \(r''\) is strictly decreasing on \((0, 1)\). Suppose \(r\) has a maximum at \(p\). 

Then, for all \(\delta > 0\) such that \(p + \delta \in (0, 1)\) and \(p - \delta \in (0, 1)\), \(-r'(p+\delta) > r'(p-\delta)\).

\end{lemma}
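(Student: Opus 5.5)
The plan is to compare the two derivative values by integrating $r''$ outward from $p$ and using the strict monotonicity of $r''$ to pair points symmetrically.

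\textbf{Step 1: $r'(p)=0$.} Since $p$ is a maximum of $r$ on the open interval $(0,1)$ and $r$ is differentiable, it is an interior maximum, so $r'(p)=0$.

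\textbf{Step 2: write both derivatives as integrals of $r''$.} Fix $\delta>0$ with $p\pm\delta\in(0,1)$. Because $r''$ is continuous on $(0,1)$, the fundamental theorem of calculus together with Step 1 gives
\[
r'(p+\delta)=\int_0^\delta r''(p+s)\,ds,
\qquad
r'(p-\delta)=-\int_0^\delta r''(p-s)\,ds.
\]

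\textbf{Step 3: combine the two expressions.} Adding them, the desired inequality $-r'(p+\delta)>r'(p-\delta)$ is equivalent to
\[
r'(p+\delta)+r'(p-\delta)=\int_0^\delta \bigl(r''(p+s)-r''(p-s)\bigr)\,ds<0 .
\]

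\textbf{Step 4: sign of the integrand.} For every $s\in(0,\delta]$ we have $p+s>p-s$, so strict decrease of $r''$ gives $r''(p+s)-r''(p-s)<0$. Thus the integrand is continuous on $[0,\delta]$, strictly negative on $(0,\delta]$, and equal to $0$ at $s=0$.

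\textbf{Main obstacle.} The only delicate point is turning pointwise negativity into a strictly negative integral, since the integrand vanishes at $s=0$. One way is to pick any $s_0\in(0,\delta)$. The integrand is continuous and negative at $s_0$, so it is bounded above by a negative constant on some small interval around $s_0$, which forces the whole integral to be strictly negative. Alternatively, use the standard fact that a continuous function that is nonpositive and not identically zero has negative integral.

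Concluding from Step 3 completes the proof.
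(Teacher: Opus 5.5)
Your proof is correct and follows essentially the same route as the paper. Both arguments use \(r'(p)=0\), write \(r'(p+\delta)\) and \(-r'(p-\delta)\) as integrals of \(r''\) over \([p,p+\delta]\) and \([p-\delta,p]\), and compare them pointwise via the strict decrease of \(r''\). The only difference is the pairing: the paper substitutes \(u=c+\delta\) and compares \(r''(u-\delta)>r''(u)\) on all of \([p,p+\delta]\), so its integrand is strictly positive on the whole closed interval, whereas your reflection \(p\pm s\) makes the integrand vanish at \(s=0\) and needs the extra continuity step, which you handle correctly.
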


\begin{proof}[\textbf{\textup{Proof of Lemma \ref{lemma:confidence-bias-first-derivative}}}]

\mbox{}

Take any \(\delta > 0\) such that such that \(p + \delta \in (0, 1)\) and \(p - \delta \in (0, 1)\).

Since \(r\) has continuous second-order derivatives in \((0, 1)\), \(r'\) must have continuous first-order derivatives in \((0, 1)\). Since \(r\) has a maximum at \(p\), \(r'(p) = 0\) and \(r''(p) \leq 0\). 

\(r'(p+\delta)  = r'(p+\delta) - r'(p) = \int^{p+\delta}_{p} r''(c) \: dc\)

\(-r'(p-\delta)  = r'(p) - r'(p-\delta) = \int^{p}_{p-\delta} r''(c) \: dc\)

Let \(u = c + \delta\). Then, when \(c = p\), \(u = p + \delta\); and when \(c = p - \delta\), \(u = p\). Moreover, \(du = dc\). Therefore, since \(r''(u-\delta) > r''(u)\) for all \(u \in [p, p+\delta]\),

\(-r'(p-\delta) = \int^{p+\delta}_{p} r''(u - \delta) \: du > \int^{p+\delta}_{p} r''(u) \: du = r'(p+\delta)\)

Hence, \(r'(p-\delta) < -r'(p+\delta)\), yielding the result in the lemma.

\end{proof}

\begin{lemma}\label{lemma:confidence-bias-function-value-decreasing}
Let \(r\) be a function on \((0, 1)\) such that \(r\) has continuous second-order derivatives on \((0, 1)\) and \(r''\) is strictly decreasing on \((0, 1)\). Suppose \(r\) has a maximum at \(p\). 

Then, for all \(\delta > 0\) such that \(p + \delta \in (0, 1)\) and \(p - \delta \in (0, 1)\), \(r(p)-r(p+\delta) > r(p)-r(p-\delta)\)

\end{lemma}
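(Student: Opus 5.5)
The plan is to reduce the statement to Lemma \ref{lemma:confidence-bias-first-derivative} by writing both penalties as integrals of \(r'\) and comparing the integrands pointwise after a reflection about \(p\). Fix \(\delta > 0\) with \(p \pm \delta \in (0, 1)\). Since \(r\) has continuous second-order derivatives, \(r'\) is continuous, so the fundamental theorem of calculus gives
\[
r(p) - r(p+\delta) = -\int_{p}^{p+\delta} r'(c) \, dc, \qquad r(p) - r(p-\delta) = \int_{p-\delta}^{p} r'(c) \, dc .
\]

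Next, I substitute \(c = p - s\) in the second integral and \(c = p + s\) in the first. Both then become integrals over \(s \in [0, \delta]\):
\[
r(p) - r(p+\delta) = \int_0^{\delta} \bigl(-r'(p+s)\bigr) \, ds, \qquad r(p) - r(p-\delta) = \int_0^{\delta} r'(p-s) \, ds .
\]
For every \(s \in (0, \delta]\), the points \(p \pm s\) lie in \((0,1)\) because they are between \(p - \delta\) and \(p + \delta\). Lemma \ref{lemma:confidence-bias-first-derivative}, applied with \(s\) in place of \(\delta\), therefore gives \(-r'(p+s) > r'(p-s)\) for all \(s \in (0, \delta]\).

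To conclude, I note that the difference of the integrands, \(-r'(p+s) - r'(p-s)\), is continuous in \(s\) on \([0,\delta]\). It is strictly positive on \((0,\delta]\) and equals \(0\) at \(s=0\), since \(r'(p)=0\). A continuous function that is nonnegative and not identically zero has a strictly positive integral, so integrating over \([0,\delta]\) yields \(r(p) - r(p+\delta) > r(p) - r(p-\delta)\), which is the claim.

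The only delicate point is this last step: the strict pointwise inequality fails at the single endpoint \(s = 0\). Continuity of the integrand resolves this, because it gives positivity on a subinterval of positive length, and hence strictness of the integral inequality is preserved.
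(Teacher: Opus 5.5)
Your proposal is correct and follows essentially the same route as the paper. Both write the two penalties as integrals of \(r'\), reflect about \(p\) via the substitutions \(u = c - p\) and \(u = p - c\), and compare the integrands pointwise on \((0,\delta]\) using Lemma \ref{lemma:confidence-bias-first-derivative}; your continuity argument at \(s = 0\) just makes explicit why the integral inequality stays strict, which the paper asserts directly.
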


\begin{proof}[\textbf{\textup{Proof of Lemma \ref{lemma:confidence-bias-function-value-decreasing}}}]

\mbox{}

Take any \(\delta > 0\) such that such that \(p + \delta \in (0, 1)\) and \(p - \delta \in (0, 1)\).

For all \(x \in (0, \delta]\), since \(p - \delta \leq p - x < p < p + x \leq p + \delta\), both \(p - x\) and \(p + x\) are in \((0, 1)\).

Therefore, by Lemma \ref{lemma:confidence-bias-first-derivative}, \(-r'(p+x) > r'(p-x)\) for all \(x \in (0, \delta]\).

Note that \(r(p) - r(p + \delta) = \int_p^{p+\delta} -r'(c_+) \: dc_+\) and \(r(p) - r(p - \delta) = \int_{p-\delta}^{p} r'(c_-) \: dc_-\)

Let \(u_+ = c_+ - p\). Then, when \(c_+ = p\), \(u_+ = 0\) and when \(c_+ = p + \delta\), \(u_+ = \delta\). Moreover, \(du_+ = dc_+\).

let \(u_- = p - c_-\). Then, when \(c_- = p\), \(u_- = 0\) and when \(c_- = p - \delta\), \(u_- = \delta\). Moreover, \(du_- = - dc_-\).

Hence, \(r(p) - r(p + \delta) = \int_p^{p+\delta} -r'(c_+) \: dc_+ = \int_0^{\delta} -r'(p + u_+) \: du_+ \) and

\(r(p) - r(p - \delta) = \int_{p-\delta}^{p} r'(c_-) \: dc_- = \int_{\delta}^{0} -r'(p - u_-) \: du_- = \int_{0}^{\delta} r'(p - u_-) \: du_-\).

Since \(-r'(p+u) > r'(p-u)\) for all \(u \in (0, \delta]\), we conclude that \(\int_0^{\delta} -r'(p + u_+) \: du_+ > \int_{0}^{\delta} r'(p - u_-) \: du_-\), therefore, \(r(p) - r(p + \delta) > r(p) - r(p - \delta)\).
\end{proof}

Similarly, using the idea of the proofs of Lemmas \ref{lemma:confidence-bias-first-derivative} and \ref{lemma:confidence-bias-function-value-decreasing}, we can prove the following result:

\begin{lemma}\label{lemma:confidence-bias-function-value-increasing}
Let \(r\) be a function on \((0, 1)\) such that \(r\) has continuous second-order derivatives on \((0, 1)\) and \(r''\) is strictly increasing on \((0, 1)\). Suppose \(r\) has a maximum at \(p\). 

Then, for all \(\delta > 0\) such that \(p + \delta \in (0, 1)\) and \(p - \delta \in (0, 1)\), \(r(p)-r(p+\delta) < r(p)-r(p-\delta)\)

\end{lemma}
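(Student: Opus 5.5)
The plan is to run the proofs of Lemmas \ref{lemma:confidence-bias-first-derivative} and \ref{lemma:confidence-bias-function-value-decreasing} again with every inequality involving \(r''\) reversed. A cleaner alternative is to reflect the problem and invoke those lemmas directly. Define \(s(c) = r(1-c)\) on \((0,1)\). Then \(s\) has continuous second-order derivatives and \(s''(c) = r''(1-c)\). Since \(r''\) is strictly increasing, \(s''\) is strictly decreasing on \((0,1)\). Since \(r\) has a maximum at \(p\), \(s\) has a maximum at \(q = 1-p\).

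Now take \(\delta > 0\) with \(p \pm \delta \in (0,1)\). Then \(q \pm \delta = 1 - (p \mp \delta) \in (0,1)\), so Lemma \ref{lemma:confidence-bias-function-value-decreasing} applies to \(s\) at \(q\). It gives
\[
s(q) - s(q+\delta) > s(q) - s(q-\delta).
\]
Translating back, \(s(q) = r(p)\), \(s(q+\delta) = r(p-\delta)\) and \(s(q-\delta) = r(p+\delta)\). The inequality therefore reads
\[
r(p) - r(p-\delta) > r(p) - r(p+\delta),
\]
which is exactly the claim.

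The only points needing care are the routine verifications. First, the chain rule gives \(s'' = r''(1-\cdot)\) with no sign change, because the factor \((-1)^2 = 1\). Second, the maximum is preserved under the reflection, and the domain condition on \(\delta\) transfers as shown above. I expect no real obstacle.

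If a direct proof is preferred instead, I would mirror Lemma \ref{lemma:confidence-bias-first-derivative}. Write \(r'(p+\delta) = \int_p^{p+\delta} r''\) and \(-r'(p-\delta) = \int_p^{p+\delta} r''(u-\delta)\,du\). Since \(r''\) is now increasing, \(r''(u-\delta) < r''(u)\), which gives \(-r'(p+x) < r'(p-x)\) for \(x \in (0,\delta]\). Integrating in \(x\) over \((0,\delta]\), exactly as in Lemma \ref{lemma:confidence-bias-function-value-decreasing}, then yields the reversed inequality.
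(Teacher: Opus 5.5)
Your proof is correct, and your main argument takes a different route from the paper's. The paper writes out no separate proof for this lemma. It only says the result follows ``similarly, using the idea of the proofs'' of Lemmas \ref{lemma:confidence-bias-first-derivative} and \ref{lemma:confidence-bias-function-value-decreasing}. That means rerunning the integral comparison with \(r''(u-\delta) > r''(u)\) replaced by \(r''(u-\delta) < r''(u)\), which is exactly your fallback sketch.

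Your main argument instead uses the reflection \(s(c) = r(1-c)\). This converts the increasing case into the decreasing case, so Lemma \ref{lemma:confidence-bias-function-value-decreasing} can be invoked as a black box. The remaining checks are all handled correctly:
\begin{itemize}
\item \(s''(c) = r''(1-c)\), with no sign change;
\item \(s''\) is strictly decreasing on all of \((0,1)\);
\item the maximum moves to \(q = 1-p\);
\item \(p \pm \delta \in (0,1)\) is equivalent to \(q \mp \delta \in (0,1)\).
\end{itemize}

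The reflection avoids repeating the two-stage derivative-then-integral argument. It also makes explicit that the decreasing and increasing lemmas are mirror images under \(c \mapsto 1-c\). The paper's mirrored proof is self-contained instead. It also produces, as a byproduct, the pointwise inequality \(-r'(p+x) < r'(p-x)\) for \(x \in (0,\delta]\), the analogue of Lemma \ref{lemma:confidence-bias-first-derivative} for the increasing case.
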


\begin{theorem}\label{theorem:confidence-bias-test}

Let \((f(c), g(c))\) be a non-hackable confidence reward scheme such that \(f\) and \(g\) have continuous second-order derivatives. Let \(R(c, p)\) be as defined in Definition \ref{definition:non-confidence-hackable-reward-scheme}. Then, the following hold:

(i) If for all \(p \in (0, 1)\), \(\frac{\partial^2 R}{\partial c^2}\) is strictly decreasing with respect to \(c\) on \((0, 1)\) when keeping \(p\) fixed, then \((f(c), g(c))\) has underconfidence bias.

(ii) If for all \(p \in (0, 1)\), \(\frac{\partial^2 R}{\partial c^2}\) is strictly increasing with respect to \(c\) on \((0, 1)\) when keeping \(p\) fixed, then \((f(c), g(c))\) has overconfidence bias.

\end{theorem}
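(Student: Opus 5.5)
The plan is to fix an arbitrary \(p \in (0, 1)\), set \(r(c) = R(c, p)\) on \((0, 1)\), and reduce both parts of Theorem \ref{theorem:confidence-bias-test} directly to Lemmas \ref{lemma:confidence-bias-function-value-decreasing} and \ref{lemma:confidence-bias-function-value-increasing}. The first step is to verify the hypotheses of those lemmas. Since \(f\) and \(g\) have continuous second derivatives, \(r(c) = p f(c) + (1-p) g(c)\) has a continuous second derivative \(r''(c) = \frac{\partial^2 R}{\partial c^2}(c, p)\) on \((0, 1)\). Since the scheme is non-hackable confidence, it satisfies the Proper Scoring property, so \(p \in \text{argmax}_c R(c, p)\). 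That is, \(r\) attains a maximum at \(p\), which is exactly the extra hypothesis the lemmas need.

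For part (i), the assumption that \(\frac{\partial^2 R}{\partial c^2}\) is strictly decreasing in \(c\) for this fixed \(p\) says that \(r''\) is strictly decreasing on \((0, 1)\). Lemma \ref{lemma:confidence-bias-function-value-decreasing} then gives
\[
r(p) - r(p+\delta) > r(p) - r(p-\delta)
\]
for every \(\delta > 0\) with \(p \pm \delta \in (0, 1)\). Rewriting via Definition \ref{definition:miscalibration-penalty}, \(r(p) - r(p \pm \delta) = R(p,p) - R(p\pm\delta, p) = R_{pen}(p\pm\delta, p)\), so \(R_{pen}(p+\delta, p) > R_{pen}(p-\delta, p)\). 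Since \(p\) was arbitrary, this is the underconfidence bias condition of Definition \ref{definition:overconfidence-underconfidence-bias}. Part (ii) is identical except that we invoke Lemma \ref{lemma:confidence-bias-function-value-increasing}, which gives the reversed inequality and hence overconfidence bias.

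The only real obstacle is that Lemma \ref{lemma:confidence-bias-function-value-increasing} is stated without proof. If needed, I would supply it by mirroring Lemma \ref{lemma:confidence-bias-first-derivative}. When \(r''\) is strictly increasing, we have \(r''(u-\delta) < r''(u)\). Combined with \(r'(p) = 0\), this gives \(-r'(p-\delta) < r'(p+\delta)\), that is, \(-r'(p+\delta) < r'(p-\delta)\). Integrating this over \(x \in (0, \delta]\), as in Lemma \ref{lemma:confidence-bias-function-value-decreasing}, yields the reversed penalty inequality. Here \(r'(p) = 0\) holds because \(p\) is an interior maximum of a differentiable function.

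One small point to check is that the strict integral inequalities are legitimate. The integrands are continuous and the pointwise inequality is strict on a set of positive measure, so the inequality between the integrals is strict.
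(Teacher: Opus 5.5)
Your proposal is correct and follows the paper's proof essentially verbatim. You fix \(p\), use Proper Scoring to get a maximum of \(r(c)=R(c,p)\) at \(c=p\), and apply Lemma \ref{lemma:confidence-bias-function-value-decreasing} (resp.\ Lemma \ref{lemma:confidence-bias-function-value-increasing}) to obtain the required inequality between \(R_{pen}(p+\delta,p)\) and \(R_{pen}(p-\delta,p)\); your sketch of the increasing-case lemma, mirroring Lemmas \ref{lemma:confidence-bias-first-derivative} and \ref{lemma:confidence-bias-function-value-decreasing} with the inequalities reversed, is sound and supplies exactly what the paper leaves as ``similarly''.
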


\begin{proof}[\textbf{\textup{Proof of Theorem \ref{theorem:confidence-bias-test}}}]

\mbox{}

Since the proofs for (i) and (ii) are similar, we show only the proof of (i) here.

Fix any \(p \in (0, 1)\). Note that when \(p\) is fixed, by Proper Scoring property of non-hackable confidence reward schemes in Definition \ref{definition:non-confidence-hackable-reward-scheme}, \(R(c, p)\) has a maximum at \(c = p\).

Let \(r_p(c) = R(c, p)\) for all \(c \in (0, 1)\). Note that by Definition \ref{definition:non-confidence-hackable-reward-scheme}, \(R(c, p) = pf(c) + (1-p)g(c)\). Therefore, since \(f(c)\) and \(g(c)\) are assumed to have continuous second-order derivatives, \(R(c, p)\) has continuous second-order derivatives.

Since \(r_p\) has a maximum at \(c = p\) and \(r_p''(c) = \frac{\partial^2}{\partial c^2}(R(c, p))\) is strictly decreasing on \((0, 1)\), by Lemma \ref{lemma:confidence-bias-function-value-decreasing}, \(r_p(p) - r(p + \delta) > r_p(p) - r_p(p - \delta)\) for all \(\delta > 0\) such that \(p + \delta \in (0, 1)\) and \(p - \delta \in (0, 1)\).

Therefore, for all \(\delta > 0\) such that \(p + \delta \in (0, 1)\) and \(p - \delta \in (0, 1)\), \( R(p, p) - R(p + \delta, p) > R(p, p) - R(p - \delta, p)\), which implies that \(R_{pen}(p + \delta, p) > R_{pen}(p - \delta, p)\).

Therefore, \((f(c), g(c))\) has underconfidence bias.

\end{proof}

Theorem \ref{theorem:confidence-bias-test} provides a sufficient condition of overconfidence and underconfidence bias based on the second derivative of the expected reward function \(R(c, p)\), allowing us to show that Overconfidence-\(k\) has overconfidence bias and Underconfidence-\(k\) has underconfidence bias in the next subsection.

\subsection{Determination of Reward Schemes with Overconfidence/Underconfidence Bias} \label{appendix:}

\subsubsection{Correctness-only}

Since Correctness-only reward scheme has a reward that is independent of the confidence of the LLM, the miscalibration penalty is always zero. Therefore, the Correctness-only reward scheme has neither overconfidence nor underconfidence bias.

\subsubsection{Overconfidence-\(k\)}

\(R(c, p)\)

\(= p f(c) + (1-p) g(c)\)

\(= p \frac{(k+1)\ln(ck+1)-ck}{(k+1)\ln(k+1)-k} + (1-p) \frac{\ln(ck+1)-ck}{(k+1)\ln(k+1)-k}\)

\(= \frac{p((k+1)\ln(ck+1)-ck) + (1-p)(\ln(ck+1)-ck)}{(k+1)\ln(k+1)-k}\)

\(= \frac{p(k+1)\ln(ck+1) + (1-p)\ln(ck+1) - ck}{(k+1)\ln(k+1)-k}\)

\(= \frac{(pk + p + 1 - p)(\ln(ck+1)) - ck}{(k+1)\ln(k+1)-k}\)

\(= \frac{(pk+1)\ln(ck+1) - ck}{(k+1)\ln(k+1)-k}\)

\(\frac{\partial R}{\partial c} = \frac{(pk+1)\frac{k}{ck+1} - k}{(k+1)\ln(k+1)-k}\)

\(\frac{\partial^2 R}{\partial c^2} = \frac{-(pk+1)\frac{k^2}{(ck+1)^2}}{(k+1)\ln(k+1)-k}\)

As shown in Appendix \ref{appendix:overconfidence-k-non-hackable-confidence}, \((k+1)\ln(k+1)-k\) is always positive for all \(k > 0\). Note that \(pk + 1 > 0\) and \(k^2 > 0\) since \(p\) is non-negative and \(k\) is positive. 

As \(c\) increases in \((0, 1)\), \(ck + 1\) increases and is positive, hence \((ck + 1)^2\) increases and is positive. This implies that \(\frac{k^2}{(ck + 1)^2}\) decreases and is positive.

Therefore, keeping \(p\) fixed, \(\frac{\partial^2 R}{\partial c^2}\) increases with respect to \(c\) on the interval \((0, 1)\). Hence, by Theorem \ref{theorem:confidence-bias-test}, Overconfidence-\(k\) has overconfidence bias.

\subsubsection{Brier-\(k\), where \(k \leq 1\)}

The expected reward with confidence \(c\) and probability of correctness \(p\) is

\(R(c, p)\)

\(= p(1-k(1-c)^2) + (1-p)(-kc^2)\)

\(= p(1 - k(1 - 2c + c^2)) - kc^2 + pkc^2\)

\(= p(1 - k + 2ck - c^2k) - kc^2 + pkc^2\)

\(= p - pk + 2pck - pc^2k - kc^2 + pkc^2\)

\(= p - pk + 2pck - kc^2\)

Therefore, the miscalibration penalty \(R_{pen}(c, p)\) is equal to

\(R(p, p) - R(c, p)\)

\(= (p - pk + 2p^2k - kp^2) - (p - pk + 2pck - kc^2)\)

\(= p - pk + 2p^2k - kp^2 - p + pk - 2pck + kc^2)\)

\(= 2p^2k - kp^2 - 2pck + kc^2)\)

\(= k(2p^2 - p^2 - 2pc + c^2)\)

\(= k(p^2 - 2pc + c^2)\)

\(= k(p - c)^2\)

Therefore, for all \(\delta > 0\) and \(p \in (0, 1)\) such that \(p + \delta \in (0, 1)\) and \(p - \delta \in (0, 1)\), \(R_{pen}(p + \delta, p) = k\delta^2 = R_{pen}(p - \delta, p)\). Therefore, Brier-\(k\) has neither overconfidence bias nor underconfidence bias.

\subsubsection{Underconfidence-\(k\)}

\(R(c, p)\)

\(= pf(c) + (1-p)g(c)\)

\(= p\frac{kc + \ln(1-\frac{kc}{1+k})}{k - \ln(1+k)} + (1-p)\frac{kc + (k+1)\ln(1-\frac{kc}{1+k})}{k - \ln(1+k)} \)

\(= \frac{p(kc + \ln(1-\frac{kc}{1+k})) + (1-p)(kc + (k+1)\ln(1-\frac{kc}{1+k}))}{k - \ln(1+k)} \)

\(= \frac{pkc + p\ln(1-\frac{kc}{1+k}) + (1-p)kc + (1-p)(k+1)\ln(1-\frac{kc}{1+k})}{k - \ln(1+k)} \)

\(= \frac{kc + (p + (1-p)(k+1))\ln(1-\frac{kc}{1+k})}{k - \ln(1+k)} \)

\(= \frac{kc + (p + (1-p)k + (1-p))\ln(1-\frac{kc}{1+k})}{k - \ln(1+k)} \)

\(= \frac{kc + (1 + k-pk)\ln(1-\frac{kc}{1+k})}{k - \ln(1+k)} \)

\(\frac{\partial R}{\partial c}\)

\(= \frac{k + (1 + k-pk)\frac{-\frac{k}{1+k}}{1-\frac{kc}{1+k}}}{k - \ln(1+k)} \)

\(= \frac{k + (1 + k-pk)\frac{-k}{1 + k - kc}}{k - \ln(1+k)} \)

\(= \frac{k}{k - \ln(1+k)} + \frac{1 + k-pk}{k - \ln(1+k)}\frac{-k}{1 + k - kc}\)

\(\frac{\partial^2 R}{\partial c^2}\)

\(= \frac{1 + k-pk}{k - \ln(1+k)}\frac{(-k)(-1)(-k)}{(1 + k - kc)^2} \)

\(= \frac{1 + k - pk}{k - \ln(1+k)}\frac{-k^2}{(1 + k - kc)^2} \)

\(= \frac{1 + k(1-p)}{k - \ln(1+k)}\frac{-k^2}{(1 + k(1-c))^2} \)

Let \(p \in (0, 1)\) be fixed. As shown in Appendix \ref{appendix:underconfidence-k-non-hackable-confidence}, \(k - \ln(1+k) > 0\) for all \(k > 0\). Furthermore, since \(p \in (0, 1)\), \(1 + k(1-p) > 1 > 0\) and \(-k^2 < 0\). Likewise, for all \(c \in (0, 1)\), \(1 + k(1-c) > 1 > 0\). Therefore, \(\frac{\partial^2 R}{\partial c^2} < 0\) for all \(p, c \in (0, 1)\).

Keeping \(p\) fixed, since \(1 + k(1-c)\) decreases and is positive as \(c\) increases within the interval \((0, 1)\), \(\frac{1}{(1 + k(1-c))^2}\) increases as \(c\) increases within the interval \((0, 1)\). Therefore, \(\frac{\partial^2 R}{\partial c^2}\) is strictly decreasing with respect to \(c \in (0, 1)\). Therefore, by Theorem \ref{theorem:confidence-bias-test}, Underconfidence-\(k\) has underconfidence bias.

\subsubsection{Brier-log Hybrid}

\(R(c, p)\)

\(= pf(c) + (1-p)g(c)\)

\(= pc + (1-p)(c+\ln(1-c))\)

\(= pc + c+\ln(1-c) - pc - p\ln(1-c)\)

\(= c + \ln(1-c) - p\ln(1-c)\)

\(= c + (1-p)\ln(1-c)\)

\(\frac{\partial R}{\partial c} = 1 - \frac{1-p}{1-c}\)

\(\frac{\partial^2 R}{\partial c^2} = - \frac{1-p}{(1-c)^2}\)

For all \(p \in (0, 1)\), keeping \(p\) fixed, \(\frac{\partial^2 R}{\partial c^2}\) is strictly decreasing as \(c\) increases in the range \((0, 1)\). Therefore, by Theorem \ref{theorem:confidence-bias-test}, Brier-log Hybrid has underconfidence bias.

\section{Experimental Setting Details} \label{appendix:experimental-setting-details}

\subsection{Experimental Procedure} \label{appendix:experimental-procedure}

Our experiment methodology can be described as follows:

\textbf{Stage 1 - Dataset preparation}: For each dataset obtained, we split the data into training and testing sets, and classified the questions according to their difficulty in easy, medium and hard, as elaborated in Appendix \ref{appendix:dataset-details-and-preprocessing}. 

\textbf{Stage 2 - Fine-tuning preparation}: For each dataset, the LLM with temperature 1 is tasked to answer 1024 questions in the training set and to provide its confidence score in JSON format. Its responses are collected for the next stage. More details on how this is done can be found on Appendix \ref{appendix:fine-tuning-preparation}.

\textbf{Stage 3 - Supervised Fine-tuning (SFT)}: The responses from the previous step are reformatted to an XML-like format to help the LLM follow instructions in the next step. The LLM is fine-tuned on the reformatted responses using LoRA \cite{hu2021loralowrankadaptationlarge} for 1 epoch.

\textbf{Stage 4 - RL}: We fine-tuned the LLM to perform confidence calibration using RL. The LLM is tasked to answer the question and to provide a confidence value \(\tilde{c}\), an integer in [0, 100], to its answer, along with some reasoning on how it arrived at the answer and \(\tilde{c}\). To determine the reward, the answer is graded for correctness and  \(\tilde{c}\) is normalized into the implied confidence \(c\) by setting \(c  = \frac{\tilde{c}+0.5}{101}\).

Instead of Group Relative Policy Optimization (GRPO) \cite{shao2024deepseekmathpushinglimitsmathematical}, we used GRPO Done Right (Dr GRPO) \cite{liu2025understandingr1zeroliketrainingcritical} loss, which removes the standard deviation normalization term and gives each token equal weightage. This loss removes the confidence calibration bias found in GRPO \cite{bereket2025uncalibratedreasoninggrpoinduces}. We ran our experiments for 500 steps, with 8 generations per question and 64 questions per batch. Since we aim to train the LLM in both confidence calibration and reasoning improvements, the loss is taken over the entire output.

\textbf{Step 5 - Evaluation}: The LLM is then evaluated using the test set of the dataset, with the answers and verbalized confidences recorded. The LLM samples 16 responses for each question at a temperature of 1. The base model and the model after Stage 3 (i.e., the SFT model) are evaluated as baselines. We evaluated the model using accuracy, Expected Calibration Error (10 bins) \cite{guo2017calibrationmodernneuralnetworks}, Area Under Receiver Operating Characteristic Curve \cite{marcum1960auroc, peterson1954auroc}, Brier score \cite{brier1950verification}, the average Brier-1 reward \cite{damani2026beyond} and calibration bias.

Details regarding the exact prompt formats can be found on Appendix \ref{appendix:prompt-formats}.

\subsection{Dataset details and preprocessing} \label{appendix:dataset-details-and-preprocessing}

% Had to rely on Google Search Gemini AI to find the relevant commands for bulleted list.
\begin{itemize}
\item \textbf{BigMath \cite{albalak2025bigmathlargescalehighqualitymath}}:
We took the filtered dataset from Hugging Face found at open-r1/Big-Math-RL-Verified-Processed\footnote{https://huggingface.co/datasets/open-r1/Big-Math-RL-Verified-Processed}. The filtered dataset contains only questions where the answers are parseable by math-verify\footnote{https://github.com/huggingface/Math-Verify} and the Llama 3.1 (8B) solve rate is provided by \cite{albalak2025bigmathlargescalehighqualitymath}.

We labeled the questions with the highest tercile of Llama 3.1 (8B) solve rate as easy, middle tercile of Llama 3.1 (8B) solve rate as medium and lower tercile of Llama 3.1 (8B) solve rate as difficult. The easy questions have Llama 3.1 (8B) solve rates greater than 0.765625, the hard questions have corresponding solve rates below 0.1875, while the medium questions have solve rates in between 0.1875 and 0.765625 inclusive.

We randomly split the dataset into train and test splits in an approximately 97:3 ratio. 

To verify LLM answers, we used math-verify, with an accuracy tolerance of two decimal places for non-exact answers represented in decimal.

\item \textbf{DeepMath-103K \cite{deepmath}}: This dataset consists of difficult math questions and has been used to finetune LLM via RL to improve reasoning performance \cite{deepmath}. The difficulty rating is based on estimates from GPT-4o after being prompted with the difficulty level guidance in the Art of Problem Solving\footnote{https://artofproblemsolving.com/wiki/index.php/AoPS\_Wiki:Competition\_ratings}\cite{deepmath}.

We labeled the questions with the lowest tercile of difficulty rating as easy, middle tercile of difficulty rating as medium and highest tercile of difficulty rating as difficult. The easy questions have difficulty under 5, the hard questions have difficulty above 6 while the medium questions have difficulty in between 5 and 6 inclusive. 

The train and test dataset splits were taken from trl-lib/DeepMath-103K at Hugging Face\footnote{https://huggingface.co/datasets/trl-lib/DeepMath-103K}. The questions were matched with the original dataset from zwhe99/DeepMath-103K\footnote{https://huggingface.co/datasets/zwhe99/DeepMath-103K} to obtain the difficulty values.

We attempted to filter off multiple choice questions and yes/no questions by filtering off the questions whose case-insensitive answer after removing the dollar sign in LaTeX or the full stop at the back is in the set \{yes, no, true, false, a, b, c, d\}. In such questions, the question format and the ground truth format are often inconsistent, complicating reliable grading of the answers. 

In particular, the yes/no questions often require proof to ensure that true understanding of mathematical concepts is rewarded. Automating the grading of the reasoning accuracy in proofs is difficult and can be unreliable.

The verification method is the same as in BigMath.

\item \textbf{HotpotQA \cite{yang2018hotpotqa}}: This is a textual reasoning dataset where LLMs are given multiple sources and are asked reading comprehension questions about the sources. The entries in the dataset have 10 sources, where 2 sources are relevant and 8 are irrelevant. The difficulties of easy, medium and hard are already provided in the dataset. 

We used the distractor subset.

For training, we used the provided training set, while for validation, we used the provided validation set. The training set contains easy, medium and hard questions while the validation set only contains hard questions.

An answer is marked as correct if the ROUGE-1 \cite{lin2004rouge} score exceeds 0.7. 

\item \textbf{HotpotQA-Modified}: Modified from the procedure in \cite{damani2026beyond} with the most important difference being the inclusion of dataset difficulty, for each dataset entry in the HotpotQA dataset, we perform the following at random:
\begin{itemize}
\item With probability \(\frac{1}{3}\), 2 irrelevant sources are removed. These are relabeled as easy questions.
\item With probability \(\frac{1}{3}\), 1 irrelevant source and 1 relevant source are removed. These are relabeled as medium questions.
\item With probability \(\frac{1}{3}\), both relevant sources are removed, leaving the LLM with no choice but to deduce from prior knowledge. These are relabeled as hard questions.
\end{itemize}
The evaluation method is the same as in HotpotQA.

\end{itemize}

\subsection{Procedure details for fine-tuning preparation}\label{appendix:fine-tuning-preparation}

The LLM was instructed to answer a 1024-question subset of questions in the training set in JavaScript Object Notation (JSON) format. A format enforcer in vLLM \cite{kwon2023efficient} was used to guide the LLM to produce valid a JSON response. The JSON contains the reasoning, answer, confidence analysis and confidence attributes. The LLM is instructed to place its reasoning on the reasoning attribute, its final answer on the answer attribute, its justification of its confidence value on the confidence analysis attribute and the confidence value on the confidence attribute. The LLM is instructed to output an integer value between 0 and 100 inclusive in the confidence attribute. For BigMath and DeepMath-103K, the output token limit was 2048. For HotpotQA and HotpotQA-Modified, the output token limit was 1024.

If a valid JSON response was produced, each of the attributes were extracted and rearranged into an Extensible Markup Language (XML) format for the SFT stage. As the LLM may not realize that its LaTeX expressions involving backslashes were being escaped, we attempt to undo the escapes to recover the original string. 

If the JSON response was invalid, the question will be discarded and its response will not be included in the SFT dataset.

As shown in Table \ref{table:valid-json-responses-sanity-check}, our experiments in Qwen 2.5 (3B) Instruct had the LLM output valid JSON responses for over 95\% of the 1024-question subset for each of the datasets tested, ensuring a sufficient dataset size for the SFT stage. 

\begin{table}
    \centering
    \caption{Number of questions with valid JSON responses in Qwen 2.5 (3B) Instruct out of the 1024 questions in the fine-tuning preparation stage.}
    \begin{tabular}{cc}
        \toprule
         Dataset& Questions with valid JSON\\
         \midrule
         BigMath& 1019\\
         DeepMath-103K& 1015\\
         HotpotQA& 1015\\
         HotpotQA-Modified& 1019\\
         \bottomrule
    \end{tabular}
    
    \label{table:valid-json-responses-sanity-check}
\end{table}

For the exact prompt formats used, please refer to Appendix \ref{appendix:prompt-formats}.

\subsection{Training details for SFT}

The LLM was loaded in 16-bit. We used LoRA \cite{hu2021loralowrankadaptationlarge} with a rank of 32 and an \(\alpha\) of 32. We used the AdamW \cite{loshchilov2018decoupled} optimizer with a weight decay of 0.01. The learning rate was set to 0.0002 with 5 warmup steps. The effective batch size is 16. We trained over the SFT dataset, prepared from the previous fine-tuning preparation stage, for 1 epoch.

\subsection{Training details for RL}

The LLM was loaded in 16-bit. We used the AdamW \cite{loshchilov2018decoupled} 8-bit optimizer with \(\beta_1 = 0.9\) and \(\beta_2 = 0.99\). The learning rate was set to \(10^{-5}\) with 25 warmup steps. The maximum gradient norm was set to 0.1. The experiments were run for 500 steps. We used an effective batch size of 512 with 8 generations per question and 64 questions per batch.

For consistency with the SFT stage, we used LoRA \cite{hu2021loralowrankadaptationlarge} with a rank of 32 and an \(\alpha\) of 32. The LLM was initialized from the LoRA adapter in the supervised finetuning step because the LLM has been fine-tuned in the supervised finetuning step to follow the XML output format which is used in the current RL step.

The RL code was implemented using Unsloth \cite{unsloth}, significantly decreasing the video random access memory (VRAM) requirements.

For consistency, the output token limits were the same as during the fine-tuning preparation stage.

Our implementation of RL contained additional formatting rewards in addition to the reward in the reward scheme. These were graded independently of the answer. Our formatting reward was based on an Unsloth GRPO tutorial \cite{unsloth}\footnote{https://colab.research.google.com/github/unslothai/notebooks/blob/main/nb/Qwen2.5\_(3B)-GRPO.ipynb}. To discourage overly long answers which would highly likely be graded as incorrect, a valid answer reward of 0.5 was provided if the length did not exceed 1000 characters. A reward of 0.5 was awarded for fully following the XML format. An additional reward of 0.1 was awarded for each opening and closing tag in the XML format that appears exactly once. A reward of 1 was provided by providing a valid confidence value, i.e. an integer between 0 and 100 inclusive. In the event of an invalid confidence value, the worst possible confidence value for the answer was assigned, further discouraging reward hacking of the LLM by not following the format.

For the exact prompt formats used, please refer to Appendix \ref{appendix:prompt-formats}.

\subsection{Details on Evaluation Protocol}

During the evaluation stage, we tested the base model on both the JSON format and XML format. The JSON format was evaluated to ensure that the SFT training in XML format did not significantly decrease performance of the base model. The XML format was evaluated to demonstrate that the base model is prompt-sensitive, i.e. performance varies by the exact prompt format used. We also evaluated on the SFT models and each of the RL models, corresponding to the different reward schemes, on the SFT model.

For both the JSON and XML formats, we provided a system prompt instructing the LLM on the format to use and used a format enforcer in vLLM \cite{kwon2023efficient} to guide the LLM towards following the instructed format. For consistency, the output token limits were the same as in the RL stage and the fine-tuning preparation stage.

Inspired by \cite{damani2026beyond}, in the event of invalid output, we give the LLM a second chance during evaluation and probe them for the final answer and confidence.

If the model was instructed to format in JSON, the LLM will additionally be guided to provide a valid confidence value, i.e. an integer in between 0 and 100 inclusive and an answer not exceeding 1000 characters. Nearly all ground truths in the BigMath, DeepMath-103K and the HotpotQA dataset are under 1000 characters long. Since an invalid JSON string cannot be parsed, in the event the JSON string is invalid, the LLM will be asked to output its answer in a follow-up prompt, then the confidence. 

If the model was instructed to output in XML, in the event of an invalid XML output, the evaluation code will attempt to extract valid XML tags for answer and confidence as parts of the output may be salvaged. In the event of multiple valid tags, only the last tag will be taken. If both the answer and the confidence can be retrieved, the output will be graded as normal. If the answer cannot be retrieved, the LLM will be asked for the answer in a follow-up. If a valid confidence cannot be retrieved, the LLM will be asked for the confidence in a follow-up. If both the answer and the confidence cannot be taken, the LLM will be asked for the answer, followed by a separate prompt asking for the confidence in a follow-up.

The follow-up prompts are identical for both JSON and XML formats. Since the initial insructions to output in JSON and XML no longer apply, the system prompt was removed with the remaining conversational trace left intact.

To prevent the LLM from taking extra time to answer the question via extra reasoning beyond the original given token limit, the LLM was tasked to output only the final answer when asked in a follow-up, and in LaTeX format for mathematical datasets. In addition, to reduce the chances of LLM failing to follow instructions, we prefixed the response with "Final Answer:" using a format enforcer. The output token limit, including the prefix, was set to 64 in BigMath and DeepMath-103K and 32 for HotpotQA and HotpotQA-Modified.

When the LLM is asked for its confidence in a follow-up, the LLM will be guided using a format enforcer to output an integer in between 0 and 100 inclusive, representing the confidence value.

Just like in \cite{damani2026beyond}, once the answer and confidence values were retrieved, the same dataset-dependent correctness checking algorithm is used, allowing for a fair comparison.

For the exact prompt formats used, please refer to Appendix \ref{appendix:prompt-formats}.

\subsection{Evaluation Metrics Used} \label{appendix:evaluation-metrics-used}

To evaluate our model, we used the following metrics. 
\begin{itemize}
\item \textbf{Accuracy}: The proportion of sampled responses that were judged as correct answers to the corresponding questions.

\item \textbf{Expected Calibration Error (ECE)} \cite{guo2017calibrationmodernneuralnetworks}: Following \cite{damani2026beyond}, let \(M\) be the number of bins and \(N\) questions in total. The ith bin is denoted as \(B_i\), defined as the set of question-answer-confidence tuples whose confidence lies within the ith bin. Let the average confidence of the tuples in the ith bin be denoted as \(\text{conf}(B_i)\) and the accuracy of the tuples in the ith bin be denoted as \(\text{acc}(B_i)\). Then, the ECE is defined as \(\sum_{i=1}^{M} \frac{|B_i|}{N}|\text{acc}(B_i) - \text{conf}(B_i)|\) \cite{damani2026beyond}.

Following \cite{damani2026beyond}, we used \(M=10\) bins. In addition, we use equally spaced bins.

\item \textbf{Area Under Receiver Operating Characteristic Curve (AUROC)} \cite{marcum1960auroc, peterson1954auroc}: Since a zero ECE can be obtained by uninformedly aligning the confidence score with the accuracy of the model \cite{devic2025tracelengthsimpleuncertainty}, we use the AUROC metric to measure the ability of the LLM to differentiate between the answers that are likely correct and the answers that are likely wrong via its verbalized confidence. Where TPR stands for the true positive rate and FPR stands for the false positive rate, AUROC is defined as \(\int^{1}_0 \text{TPR}(\text{FPR}^{-1}(t)) \: dt \) \cite{damani2026beyond}. 

\item \textbf{Brier score} \cite{brier1950verification}: Let \(N\) be the number of questions. For question \(i\), let the question be denoted as \(q_i\), the corresponding answer denoted as \(a_i\), and the confidence, normalized in the range \([0, 1]\), be denoted as \(c_i\). Let \(\text{isCorrect}(q_i, a_i)\) denote the function that validates whether \(a_i\) is the correct answer to \(q_i\), returning 1 if the answer is correct and 0 if the answer is wrong. Brier score is a proper scoring metric that is used to evaluate probability calibration abilities, given by the formula: \(\frac{1}{N} \sum_{i=1}^N (\text{isCorrect}(q_i, a_i) - c_i)^2\).

\item \textbf{Average Brier-1 reward} \cite{damani2026beyond}: A non-hackable confidence reward scheme repurposed as an integrated metric that takes into account both the accuracy and the confidence calibration of the model. The average Brier-1 reward is equal to the difference between the accuracy and the Brier score since Brier-1 offers an additional reward of 1 to the negative of the Brier score if and only if the answer is correct.

\item \textbf{Calibration bias}: Calibration bias is defined as the accuracy minus the average confidence over the evaluated answers. The average confidence over the evaluated answers is the expected accuracy assuming the confidence values are well-calibrated. A negative value means that the model is overconfident on average, while a positive value means that the model is underconfident on average.

\end{itemize}

\subsection{Prompt formats} \label{appendix:prompt-formats}

The JSON system prompt format is used for the fine-tuning preparation stage, and when evaluating base models.

The XML prompt format is used during the SFT stage to train the LLM to follow instructions for the RL stage. It is also used during the RL stage and the XML evaluation for consistency.

The JSON and XML system prompt formats are as shown below.

\begin{mdframed}[linecolor=black, backgroundcolor=gray!10, roundcorner=20pt, innertopmargin=5pt, innerbottommargin=5pt] 

\textbf{JSON system prompt format}

\nointerlineskip

\rule{\linewidth}{0.5pt}

\nointerlineskip

Respond in the JSON format: \{"reasoning": string, "answer": string, "confidence\_analysis": string, "confidence": int\}

\mbox{}

Provide your internal thought process in "reasoning" attribute.

Provide only your final answer in "answer" attribute.

For mathematical questions, provide your final answer in LaTeX format.
Provide your analysis on the uncertainty of your answer in "confidence\_analysis" attribute, taking into account various factors that may lead to your answer being different or incorrect.

In "confidence" attribute, express your confidence as an integer between 0 and 100 inclusive.

\end{mdframed}

\mbox{}

\begin{mdframed}[linecolor=black, backgroundcolor=gray!10, roundcorner=20pt, innertopmargin=5pt, innerbottommargin=5pt] 

\textbf{XML system prompt format}

\nointerlineskip

\rule{\linewidth}{0.5pt}

\nointerlineskip

When answering questions, follow these instructions:

1) Enclose your internal thought process with <reasoning> and </reasoning> tags.

2) Enclose your final answer with <answer> and </answer> tags. For mathematical answers, answer in LaTeX format.

3) Enclose your analysis on the uncertainty of your answer with <confidence\_analysis> and </confidence\_analysis> tags, taking into account various factors that may lead to your answer being different or incorrect.

4) Enclose your confidence with <confidence> and </confidence> tags. Confidence is an integer between 0 and 100 inclusive, with higher values indicating higher confidence. Higher confidence means higher score if answer is correct but lower score if answer is incorrect. Your aim is to maximize your score considering the confidence of your answer given your internal thought process to the question.

Respond in the following format:

<reasoning>

...

</reasoning>

<answer>

...

</answer>

<confidence\_analysis>

...

</confidence\_analysis>

<confidence>

...

</confidence>

\end{mdframed}

\mbox{}

For BigMath and DeepMath-103K, the prompt is simply the question in the dataset. For HotpotQA and HotpotQA-Modified, there are multiple sources which need to be presented to form a prompt. The sources are presented in random order to mitigate the effects of data leakage. The LLM prompt for questions in the HotpotQA and HotpotQA-Modified datasets are as follows, where \(N\) is the number of sources in the question:

\begin{mdframed}[linecolor=black, backgroundcolor=gray!10, roundcorner=20pt, innertopmargin=5pt, innerbottommargin=5pt] 

\textbf{Question prompt format for HotpotQA and HotpotQA-Modified}

\nointerlineskip

\rule{\linewidth}{0.5pt}

\nointerlineskip

Based on the information sources given below and your existing knowledge, answer the following question: [QUESTION]

\mbox{}

Source 1: [TITLE OF SOURCE 1]

[TEXT OF SOURCE 1]

\mbox{}

Source 2: [TITLE OF SOURCE 2]

[TEXT OF SOURCE 2]

...

Source [N]: [TITLE OF SOURCE N]

[TEXT OF SOURCE N]

\end{mdframed}

For the evaluation, sometimes, the LLM may produce an invalid response despite the help of a format enforcer. Therefore, the LLM may be prompted a second time for the answer or the confidence. Their respective prompt formats are as shown below.

\begin{mdframed}[linecolor=black, backgroundcolor=gray!10, roundcorner=20pt, innertopmargin=5pt, innerbottommargin=5pt] 

\textbf{Prompt format for second LLM probe for answer (BigMath and DeepMath-103K)}

\nointerlineskip

\rule{\linewidth}{0.5pt}

\nointerlineskip

Reasoning token limit reached. Please output only your final answer within 64 tokens. Express your answer in LaTeX.

\end{mdframed}

\begin{mdframed}[linecolor=black, backgroundcolor=gray!10, roundcorner=20pt, innertopmargin=5pt, innerbottommargin=5pt] 

\textbf{Prompt format for second LLM probe for answer (HotpotQA and HotpotQA-Modified)} 

\nointerlineskip

\rule{\linewidth}{0.5pt}

\nointerlineskip

Reasoning token limit reached. Please output only your final answer within 32 tokens.

\end{mdframed}

Note that the token limit for the second probe of the answer is 64 for BigMath and DeepMath-103K and 32 for HotpotQA and HotpotQA-Modified.

\begin{mdframed}[linecolor=black, backgroundcolor=gray!10, roundcorner=20pt, innertopmargin=5pt, innerbottommargin=5pt] 

\textbf{Prompt format for second LLM probe for confidence}

\nointerlineskip

\rule{\linewidth}{0.5pt}

\nointerlineskip

Please output your confidence as an integer between 0 and 100 inclusive.

\end{mdframed}

\subsection{Hardware used} \label{appendix:hardware-used}

Only central processing unit (CPU) power is required for dataset preparation and result analysis.

For our finetuning dataset preparation, SFT, RL and evaluation stages, we used a single instance of 3g.47gb in an NVIDIA H100 graphics processing unit (GPU). For finetuning dataset preparation, supervised finetuning, RL, we used 48 GiB of CPU random access memory (RAM). For the evaluation stage, we used 128 GiB of CPU RAM.

For Qwen 2.5 (3B) Instruct, the preprocessing stage and the finetuning stage each took under 10 minutes per dataset. The RL experiment took anywhere between 20 hours and 3 days depending on the dataset and the reward scheme. The evaluation stage completed in under 3 days per dataset, with execution time varying depending on dataset.

\subsection{Dataset and Model Licenses} \label{appendix:dataset-and-model-licenses}

DeepMath-103K \cite{deepmath} is MIT licensed, BigMath \cite{albalak2025bigmathlargescalehighqualitymath} has Apache 2.0 license, and HotpotQA \cite{yang2018hotpotqa} has CC-BY-SA-4.0 licence, which allows derivatives of the work such as HotpotQA-Modified. 

Qwen 2.5 (3B) Instruct \cite{qwen2.5} is licensed using Qwen Research Agreement license, which permits use only for research and evaluation purposes. Llama 3.2 (3B) Instruct \cite{grattafiori2024llama3herdmodels} is licensed using the Llama 3.2 Community License Agreement. Llama 3.1 (8B) Instruct \cite{grattafiori2024llama3herdmodels} is licensed using the Llama 3.1 Community License Agreement. 

All the licenses involved permit use for research.

\section{Full results}\label{appendix:full-results}

This section provides the accuracy, AUROC, Brier-1, Brier score, ECE (10-bins) and calibration bias of all datasets used in the experiment for Qwen 2.5 (3B) Instruct. For accuracy, AUROC and Brier-1, higher is better, while for Brier score, ECE (10-bins) and calibration bias, closer to 0 is better.

The results are split by dataset difficulty. Note that the HotpotQA test set only contains questions labeled as hard, therefore, only the overall results in HotpotQA are shown.

The AUROC is not shown if the accuracy is outside the range \((0.001, 0.999)\) because the small number of correct answers or the small number of wrong answers may not form a reliable sample of the ability of the LLM to differentiate correct answers from wrong answers. 

For all tables in the section, base models and SFT models are included for reference at the top, while the middle section contains hackable confidence reward schemes and the bottom contains the reward schemes in the overconfidence to underconfidence spectrum, arranged from the overconfident end at the top to the underconfident end at the bottom.

The SFT (XML) model performed similarly to the Base (JSON) model. The SFT (XML) model was fine-tuned via SFT from reformatted outputs of Base (JSON) to XML format. This shows that the SFT stage did not result in a significant loss of accuracy from the original model.

The Base (XML) format was shown to highlight the prompt sensitivity of LLM, demonstrating the importance of the exact prompt format used in evaluation. This draws parallels to \cite{liu2025understandingr1zeroliketrainingcritical}, which highligted that including a prompt template for Qwen 2.5 models negatively impacts performance prior to RL.

Tables \ref{table:bigmath-overall-results}, \ref{table:bigmath-easy-results},  \ref{table:bigmath-medium-results} and \ref{table:bigmath-hard-results} respectively show the performance statistics of Qwen 2.5 (3B) Instruct when trained on BigMath and tested on the entire test set, easy subset, medium subset and hard subset, of the test set of BigMath.

Tables \ref{table:deepmath-103k-overall-results}, \ref{table:deepmath-103k-easy-results},  \ref{table:deepmath-103k-medium-results} and \ref{table:deepmath-103k-hard-results} respectively show the performance statistics of Qwen 2.5 (3B) Instruct when trained on DeepMath-103K and tested on the entire test set, easy subset, medium subset and hard subset, of the test set of DeepMath-103K.

Table \ref{table:hotpotqa-overall-results} shows the performance statistics of Qwen 2.5 (3B) Instruct when trained on HotpotQA and tested on the test set of HotpotQA.

Tables \ref{table:hotpotqa-modified-overall-results}, \ref{table:hotpotqa-modified-easy-results},  \ref{table:hotpotqa-modified-medium-results} and \ref{table:hotpotqa-modified-hard-results} respectively show the performance statistics of Qwen 2.5 (3B) Instruct when trained on HotpotQA-Modified and tested on the entire test set, easy subset, medium subset and hard subset, of the test set of HotpotQA-Modified.

\begin{table}[h!]
\centering
\caption{Accuracy, AUROC, Brier-1, Brier, ECE (10 bins) and calibration bias statistics for the test set of BigMath when trained with BigMath dataset. Calib. Bias is short for Calibration bias.}
\label{table:bigmath-overall-results}
\begin{tabular}{l c c c c c c} 
\toprule
Model/Reward Scheme& Accuracy & AUROC & Brier-1 & Brier & ECE & Calib. Bias\\
\midrule
Base (JSON) & 0.2531& 0.5908& -0.4212& 0.6743& 0.6988& -0.6983
\\
Base (XML) & 0.5644& 0.5982& 0.1736& 0.3908& 0.3910& -0.3844
\\
SFT (XML) & 0.2499& 0.5959& -0.4266& 0.6765& 0.7018& -0.7017
\\
\midrule
Log Loss & 0.0001& -& 0.0000& 0.0001& 0.0049& -0.0049
\\
Brier Score & 0.0001& -& 0.0000& 0.0002& 0.0049& -0.0049
\\
Log-1 & 0.6336& 0.8602& 0.4841& 0.1495& 0.0826& 0.0826
\\
Brier-2 & 0.6713& 0.8434& 0.5278& 0.1436& 0.0388& 0.0028
\\
Log-\(\frac{1}{\ln 202}\)& 0.6656& 0.7849& 0.4885& 0.1771& 0.0649& -0.0503
\\
\midrule
Correctness-only & 0.6739& 0.5277& 0.3565& 0.3174& 0.3171& -0.3171
\\
Overconfidence-1000 & 0.6704& 0.6681& 0.4364& 0.2340& 0.1826& -0.1825
\\
Overconfidence-4 & 0.6685& 0.7492& 0.4908& 0.1777& 0.0564& -0.0376
\\
Overconfidence-1 & 0.6723& 0.7669& 0.5080& 0.1643& 0.0560& -0.0151
\\
Brier-1 & 0.6676& 0.8195& 0.5151& 0.1525& 0.0244& -0.0243
\\
Underconfidence-1 & 0.6666& 0.8282& 0.5132& 0.1534& 0.0307& -0.0192
\\
Underconfidence-4 & 0.6769& 0.8343& 0.5272& 0.1498& 0.0441& 0.0423
\\
Brier-log Hybrid & 0.6682& 0.8521& 0.5253& 0.1429& 0.0474& -0.0054
\\
\midrule

\end{tabular}

\end{table}

\begin{table}[h!]
\centering
\caption{Accuracy, AUROC, Brier-1, Brier, ECE (10 bins) and calibration bias statistics for the easy subset of the test set of BigMath when trained with BigMath dataset. Calib. Bias is short for Calibration bias.}
\label{table:bigmath-easy-results}
\begin{tabular}{l c c c c c c} 
\toprule
Model/Reward Scheme& Accuracy & AUROC & Brier-1 & Brier & ECE & Calib. Bias\\
\midrule
Base (JSON) & 0.4391& 0.5595& -0.0842& 0.5233& 0.5321& -0.5315
\\
Base (XML) & 0.8587& 0.5524& 0.7228& 0.1359& 0.1175& -0.1093
\\
SFT (XML) & 0.4494& 0.5681& -0.0628& 0.5122& 0.5219& -0.5217
\\
\midrule
Log Loss & 0.0001& -& 0.0000& 0.0002& 0.0048& -0.0048
\\
Brier Score & 0.0003& -& 0.0000& 0.0004& 0.0047& -0.0047
\\
Log-1 & 0.9388& 0.7847& 0.8574& 0.0814& 0.1645& 0.1645
\\
Brier-2 & 0.9530& 0.7412& 0.8981& 0.0549& 0.0866& 0.0743
\\
Log-\(\frac{1}{\ln 202}\)& 0.9501& 0.6862& 0.8837& 0.0664& 0.1351& 0.1351
\\
\midrule
Correctness-only & 0.9540& 0.5181& 0.9090& 0.0450& 0.0405& -0.0405
\\
Overconfidence-1000 & 0.9529& 0.5987& 0.9035& 0.0495& 0.0670& 0.0621
\\
Overconfidence-4 & 0.9509& 0.6201& 0.8804& 0.0705& 0.1540& 0.1533
\\
Overconfidence-1 & 0.9521& 0.6179& 0.8836& 0.0685& 0.1597& 0.1596
\\
Brier-1 & 0.9529& 0.6756& 0.9012& 0.0517& 0.0923& 0.0921
\\
Underconfidence-1 & 0.9505& 0.7292& 0.8926& 0.0579& 0.1001& 0.0996
\\
Underconfidence-4 & 0.9552& 0.7240& 0.8925& 0.0626& 0.1233& 0.1225
\\
Brier-log Hybrid & 0.9513& 0.7566& 0.8994& 0.0519& 0.1010& 0.1009
\\
\midrule

\end{tabular}

\end{table}

\begin{table}[h!]
\centering
\caption{Accuracy, AUROC, Brier-1, Brier, ECE (10 bins) and calibration bias statistics for the medium subset of the test set of BigMath when trained with BigMath dataset. Calib. Bias is short for Calibration bias.}
\label{table:bigmath-medium-results}
\begin{tabular}{l c c c c c c} 
\toprule
Model/Reward Scheme& Accuracy & AUROC & Brier-1 & Brier & ECE & Calib. Bias\\
\midrule
Base (JSON) & 0.2387& 0.5701& -0.4519& 0.6905& 0.7135& -0.7130
\\
Base (XML) & 0.6096& 0.5734& 0.2559& 0.3538& 0.3472& -0.3381
\\
SFT (XML) & 0.2262& 0.5702& -0.4767& 0.7030& 0.7263& -0.7262
\\
\midrule
Log Loss & 0.0001& -& 0.0000& 0.0002& 0.0048& -0.0048
\\
Brier Score & 0.0001& -& 0.0000& 0.0001& 0.0050& -0.0050
\\
Log-1 & 0.6981& 0.7655& 0.5074& 0.1907& 0.1417& 0.1417
\\
Brier-2 & 0.7520& 0.7580& 0.5902& 0.1618& 0.0929& 0.0735
\\
Log-\(\frac{1}{\ln 202}\)& 0.7427& 0.6964& 0.5712& 0.1715& 0.0311& 0.0227
\\
\midrule
Correctness-only & 0.7545& 0.5231& 0.5149& 0.2395& 0.2376& -0.2374
\\
Overconfidence-1000 & 0.7506& 0.6171& 0.5610& 0.1896& 0.1088& -0.1080
\\
Overconfidence-4 & 0.7474& 0.6718& 0.5761& 0.1712& 0.0354& 0.0272
\\
Overconfidence-1 & 0.7532& 0.6829& 0.5903& 0.1629& 0.0428& 0.0391
\\
Brier-1 & 0.7480& 0.7244& 0.5823& 0.1657& 0.0507& 0.0318
\\
Underconfidence-1 & 0.7444& 0.7373& 0.5785& 0.1660& 0.0461& 0.0445
\\
Underconfidence-4 & 0.7573& 0.7436& 0.5832& 0.1741& 0.1143& 0.1134
\\
Brier-log Hybrid & 0.7460& 0.7600& 0.5878& 0.1582& 0.0532& 0.0532
\\
\midrule

\end{tabular}

\end{table}

\begin{table}[h!]
\centering
\caption{Accuracy, AUROC, Brier-1, Brier, ECE (10 bins) and calibration bias statistics for the hard subset of the test set of BigMath when trained with BigMath dataset. Calib. Bias is short for Calibration bias.}
\label{table:bigmath-hard-results}
\begin{tabular}{l c c c c c c} 
\toprule
Model/Reward Scheme& Accuracy & AUROC & Brier-1 & Brier & ECE & Calib. Bias\\
\midrule
Base (JSON) & 0.0892& 0.5631& -0.7134& 0.8027& 0.8440& -0.8436
\\
Base (XML) & 0.2358& 0.5598& -0.4376& 0.6734& 0.6995& -0.6957
\\
SFT (XML) & 0.0823& 0.5723& -0.7250& 0.8073& 0.8496& -0.8496
\\
\midrule
Log Loss & 0.0000& -& 0.0000& 0.0000& 0.0050& -0.0050
\\
Brier Score & 0.0000& -& 0.0000& 0.0000& 0.0050& -0.0050
\\
Log-1 & 0.2748& 0.7581& 0.1019& 0.1729& 0.0569& -0.0561
\\
Brier-2 & 0.3185& 0.7265& 0.1085& 0.2100& 0.1387& -0.1380
\\
Log-\(\frac{1}{\ln 202}\)& 0.3139& 0.6824& 0.0248& 0.2891& 0.3028& -0.3028
\\
\midrule
Correctness-only & 0.3229& 0.5225& -0.3355& 0.6584& 0.6639& -0.6639
\\
Overconfidence-1000 & 0.3172& 0.6105& -0.1393& 0.4565& 0.4935& -0.4934
\\
Overconfidence-4 & 0.3169& 0.6635& 0.0296& 0.2873& 0.2869& -0.2869
\\
Overconfidence-1 & 0.3211& 0.6949& 0.0633& 0.2577& 0.2382& -0.2382
\\
Brier-1 & 0.3117& 0.7084& 0.0758& 0.2359& 0.1933& -0.1932
\\
Underconfidence-1 & 0.3146& 0.7136& 0.0822& 0.2324& 0.1991& -0.1984
\\
Underconfidence-4 & 0.3279& 0.7136& 0.1191& 0.2087& 0.1081& -0.1073
\\
Brier-log Hybrid & 0.3169& 0.7330& 0.1022& 0.2148& 0.1674& -0.1671
\\
\midrule

\end{tabular}

\end{table}

\begin{table}[h!]
\centering
\caption{Accuracy, AUROC, Brier-1, Brier, ECE (10 bins) and calibration bias statistics for the test set of DeepMath-103K when trained with DeepMath-103K dataset. Calib. Bias is short for Calibration bias.}
\label{table:deepmath-103k-overall-results}
\begin{tabular}{l c c c c c c} 
\toprule
Model/Reward Scheme& Accuracy & AUROC & Brier-1 & Brier & ECE & Calib. Bias\\
\midrule
Base (JSON) & 0.1796& 0.5713& -0.5268& 0.7063& 0.7427& -0.7415
\\
Base (XML) & 0.3228& 0.5577& -0.2743& 0.5971& 0.6145& -0.6078
\\
SFT (XML) & 0.1634& 0.5732& -0.5626& 0.7260& 0.7636& -0.7633
\\
\midrule
Log Loss & 0.0001& -& 0.0000& 0.0002& 0.0049& -0.0049
\\
Brier Score & 0.0001& -& 0.0000& 0.0001& 0.0049& -0.0049
\\
Log-1 & 0.0000& -& 0.0000& 0.0001& 0.0050& -0.0050
\\
Brier-2 & 0.0002& -& 0.0000& 0.0002& 0.0048& -0.0048
\\
Log-\(\frac{1}{\ln 202}\)& 0.4327& 0.6119& 0.1952& 0.2375& 0.0556& 0.0527
\\
\midrule
Correctness-only & 0.4238& 0.5034& -0.1437& 0.5675& 0.5686& -0.5684
\\
Overconfidence-1000 & 0.4204& 0.5446& 0.1737& 0.2468& 0.0822& -0.0821
\\
Overconfidence-4 & 0.4294& 0.6609& 0.2052& 0.2242& 0.0155& -0.0140
\\
Overconfidence-1 & 0.4138& 0.6752& 0.1930& 0.2207& 0.0341& 0.0299
\\
Brier-1 & 0.4063& 0.6558& 0.1828& 0.2235& 0.0468& 0.0082
\\
Underconfidence-1 & 0.4159& 0.6375& 0.1870& 0.2289& 0.0296& -0.0153
\\
Underconfidence-4 & 0.3921& 0.6452& 0.1684& 0.2237& 0.0422& 0.0360
\\
Brier-log Hybrid & 0.2635& 0.6302& 0.0781& 0.1854& 0.0273& 0.0271
\\
\midrule

\end{tabular}

\end{table}

\begin{table}[h!]
\centering
\caption{Accuracy, AUROC, Brier-1, Brier, ECE (10 bins) and calibration bias statistics for the easy subset of the test set of DeepMath-103K when trained with DeepMath-103K dataset. Calib. Bias is short for Calibration bias.}
\label{table:deepmath-103k-easy-results}
\begin{tabular}{l c c c c c c} 
\toprule
Model/Reward Scheme& Accuracy & AUROC & Brier-1 & Brier & ECE & Calib. Bias\\
\midrule
Base (JSON) & 0.1958& 0.5777& -0.5148& 0.7107& 0.7412& -0.7404
\\
Base (XML) & 0.4331& 0.5909& -0.0689& 0.5020& 0.5118& -0.5070
\\
SFT (XML) & 0.1785& 0.5796& -0.5570& 0.7355& 0.7660& -0.7660
\\
\midrule
Log Loss & 0.0000& -& 0.0000& 0.0000& 0.0051& -0.0051
\\
Brier Score & 0.0001& -& -0.0001& 0.0001& 0.0051& -0.0051
\\
Log-1 & 0.0001& -& 0.0000& 0.0001& 0.0050& -0.0050
\\
Brier-2 & 0.0000& -& 0.0000& 0.0000& 0.0050& -0.0050
\\
Log-\(\frac{1}{\ln 202}\)& 0.5334& 0.6897& 0.2933& 0.2401& 0.1270& 0.1249
\\
\midrule
Correctness-only & 0.5308& 0.5040& 0.0689& 0.4619& 0.4618& -0.4617
\\
Overconfidence-1000 & 0.5262& 0.5899& 0.2904& 0.2358& 0.0120& 0.0039
\\
Overconfidence-4 & 0.5324& 0.7571& 0.3292& 0.2032& 0.0786& 0.0359
\\
Overconfidence-1 & 0.5205& 0.7646& 0.3118& 0.2087& 0.0938& 0.0860
\\
Brier-1 & 0.5108& 0.7142& 0.2843& 0.2265& 0.1289& 0.0834
\\
Underconfidence-1 & 0.5191& 0.6986& 0.2907& 0.2284& 0.1143& 0.0695
\\
Underconfidence-4 & 0.4948& 0.7209& 0.2692& 0.2256& 0.1062& 0.1061
\\
Brier-log Hybrid & 0.3313& 0.7306& 0.1347& 0.1966& 0.0750& 0.0649
\\
\midrule

\end{tabular}

\end{table}

\begin{table}[h!]
\centering
\caption{Accuracy, AUROC, Brier-1, Brier, ECE (10 bins) and calibration bias statistics for the medium subset of the test set of DeepMath-103K when trained with DeepMath-103K dataset. Calib. Bias is short for Calibration bias.}
\label{table:deepmath-103k-medium-results}
\begin{tabular}{l c c c c c c} 
\toprule
Model/Reward Scheme& Accuracy & AUROC & Brier-1 & Brier & ECE & Calib. Bias\\
\midrule
Base (JSON) & 0.1508& 0.5724& -0.5790& 0.7298& 0.7706& -0.7695
\\
Base (XML) & 0.3016& 0.5557& -0.3137& 0.6153& 0.6351& -0.6280
\\
SFT (XML) & 0.1398& 0.5677& -0.6080& 0.7479& 0.7876& -0.7874
\\
\midrule
Log Loss & 0.0000& -& 0.0000& 0.0001& 0.0049& -0.0049
\\
Brier Score & 0.0001& -& 0.0000& 0.0001& 0.0049& -0.0049
\\
Log-1 & 0.0000& -& 0.0000& 0.0001& 0.0049& -0.0049
\\
Brier-2 & 0.0001& -& 0.0000& 0.0001& 0.0049& -0.0049
\\
Log-\(\frac{1}{\ln 202}\)& 0.4059& 0.6051& 0.1732& 0.2328& 0.0328& 0.0283
\\
\midrule
Correctness-only & 0.3940& 0.5037& -0.2030& 0.5970& 0.5986& -0.5984
\\
Overconfidence-1000 & 0.3886& 0.5359& 0.1399& 0.2487& 0.1123& -0.1123
\\
Overconfidence-4 & 0.4054& 0.6549& 0.1817& 0.2237& 0.0335& -0.0335
\\
Overconfidence-1 & 0.3855& 0.6683& 0.1681& 0.2174& 0.0253& 0.0057
\\
Brier-1 & 0.3796& 0.6575& 0.1611& 0.2185& 0.0489& -0.0170
\\
Underconfidence-1 & 0.3876& 0.6344& 0.1616& 0.2260& 0.0402& -0.0401
\\
Underconfidence-4 & 0.3613& 0.6433& 0.1450& 0.2163& 0.0180& 0.0088
\\
Brier-log Hybrid & 0.2263& 0.6178& 0.0568& 0.1695& 0.0079& -0.0039
\\
\midrule

\end{tabular}

\end{table}

\begin{table}[h!]
\centering
\caption{Accuracy, AUROC, Brier-1, Brier, ECE (10 bins) and calibration bias statistics for the hard subset of the test set of DeepMath-103K when trained with DeepMath-103K dataset. Calib. Bias is short for Calibration bias.}
\label{table:deepmath-103k-hard-results}
\begin{tabular}{l c c c c c c} 
\toprule
Model/Reward Scheme& Accuracy & AUROC & Brier-1 & Brier & ECE & Calib. Bias\\
\midrule
Base (JSON) & 0.2204& 0.5671& -0.4435& 0.6639& 0.6957& -0.6941
\\
Base (XML) & 0.3015& 0.5334& -0.3140& 0.6155& 0.6331& -0.6260
\\
SFT (XML) & 0.1958& 0.5793& -0.4876& 0.6835& 0.7212& -0.7207
\\
\midrule
Log Loss & 0.0003& -& 0.0000& 0.0004& 0.0047& -0.0047
\\
Brier Score & 0.0001& -& 0.0000& 0.0002& 0.0048& -0.0048
\\
Log-1 & 0.0000& -& 0.0000& 0.0000& 0.0050& -0.0050
\\
Brier-2 & 0.0004& -& 0.0000& 0.0004& 0.0046& -0.0046
\\
Log-\(\frac{1}{\ln 202}\)& 0.4260& 0.5671& 0.1818& 0.2441& 0.0586& 0.0567
\\
\midrule
Correctness-only & 0.4189& 0.5024& -0.1533& 0.5722& 0.5731& -0.5728
\\
Overconfidence-1000 & 0.4197& 0.5273& 0.1704& 0.2493& 0.0776& -0.0754
\\
Overconfidence-4 & 0.4167& 0.6007& 0.1807& 0.2360& 0.0304& -0.0068
\\
Overconfidence-1 & 0.4063& 0.6217& 0.1736& 0.2327& 0.0476& 0.0420
\\
Brier-1 & 0.3976& 0.6062& 0.1670& 0.2306& 0.0184& 0.0122
\\
Underconfidence-1 & 0.4105& 0.5993& 0.1762& 0.2342& 0.0179& -0.0172
\\
Underconfidence-4 & 0.3911& 0.5921& 0.1559& 0.2352& 0.0523& 0.0459
\\
Brier-log Hybrid & 0.2919& 0.5800& 0.0850& 0.2069& 0.0637& 0.0603
\\
\midrule

\end{tabular}

\end{table}

\begin{table}[h!]
\centering
\caption{Accuracy, AUROC, Brier-1, Brier, ECE (10 bins) and calibration bias statistics for the test set of HotpotQA when trained with HotpotQA dataset. Calib. Bias is short for Calibration bias.}
\label{table:hotpotqa-overall-results}
\begin{tabular}{l c c c c c c} 
\toprule
Model/Reward Scheme& Accuracy & AUROC & Brier-1 & Brier & ECE & Calib. Bias\\
\midrule
Base (JSON) & 0.4404& 0.5850& -0.0286& 0.4689& 0.4730& -0.4693
\\
Base (XML) & 0.0479& 0.5315& -0.7309& 0.7788& 0.8399& -0.8399
\\
SFT (XML) & 0.4362& 0.6044& -0.0208& 0.4570& 0.4644& -0.4634
\\
\midrule
Log Loss & 0.0000& -& 0.0000& 0.0001& 0.0050& -0.0049
\\
Brier Score & 0.0000& -& -0.0001& 0.0001& 0.0058& -0.0058
\\
Log-1 & 0.6472& 0.6420& 0.4277& 0.2194& 0.0777& -0.0776
\\
Brier-2 & 0.6507& 0.5920& 0.4221& 0.2286& 0.1057& -0.1057
\\
Log-\(\frac{1}{\ln 202}\)& 0.6537& 0.5455& 0.4077& 0.2461& 0.1491& -0.1491
\\
\midrule
Correctness-only & 0.6571& 0.5095& 0.3207& 0.3363& 0.3346& -0.3345
\\
Overconfidence-1000 & 0.6548& 0.5273& 0.3220& 0.3328& 0.3307& -0.3305
\\
Overconfidence-4 & 0.6526& 0.5725& 0.4134& 0.2393& 0.1319& -0.1319
\\
Overconfidence-1 & 0.6507& 0.5854& 0.4059& 0.2448& 0.1563& -0.1562
\\
Brier-1 & 0.6504& 0.5790& 0.4144& 0.2360& 0.1225& -0.1225
\\
Underconfidence-1 & 0.6551& 0.6046& 0.4238& 0.2313& 0.1175& -0.1175
\\
Underconfidence-4 & 0.6529& 0.6158& 0.4242& 0.2287& 0.1093& -0.1093
\\
Brier-log Hybrid & 0.6439& 0.6098& 0.4128& 0.2311& 0.1081& -0.1080
\\
\midrule

\end{tabular}

\end{table}

\begin{table}[h!]
\centering
\caption{Accuracy, AUROC, Brier-1, Brier, ECE (10 bins) and calibration bias statistics for the test set of HotpotQA-Modified when trained with HotpotQA-Modified dataset. Calib. Bias is short for Calibration bias.}
\label{table:hotpotqa-modified-overall-results}
\begin{tabular}{l c c c c c c} 
\toprule
Model/Reward Scheme& Accuracy & AUROC & Brier-1 & Brier & ECE & Calib. Bias\\
\midrule
Base (JSON) & 0.2806& 0.6559& -0.2226& 0.5032& 0.5335& -0.5317
\\
Base (XML) & 0.0363& 0.6050& -0.6437& 0.6800& 0.7703& -0.7703
\\
SFT (XML) & 0.2787& 0.6472& -0.2270& 0.5057& 0.5347& -0.5325
\\
\midrule
Log Loss & 0.0000& -& 0.0000& 0.0001& 0.0050& -0.0050
\\
Brier Score & 0.0000& -& 0.0000& 0.0001& 0.0050& -0.0050
\\
Log-1 & 0.0950& 0.9681& 0.0536& 0.0415& 0.0235& -0.0235
\\
Brier-2 & 0.2862& 0.8900& 0.1677& 0.1185& 0.0414& -0.0414
\\
Log-\(\frac{1}{\ln 202}\)& 0.4186& 0.7081& 0.1922& 0.2264& 0.1038& -0.1026
\\
\midrule
Correctness-only & 0.4204& 0.5292& -0.1282& 0.5486& 0.5508& -0.5470
\\
Overconfidence-1000 & 0.4135& 0.6856& 0.1402& 0.2734& 0.2300& -0.2290
\\
Overconfidence-4 & 0.4166& 0.7405& 0.2149& 0.2016& 0.0223& -0.0129
\\
Overconfidence-1 & 0.4204& 0.7332& 0.2012& 0.2192& 0.1182& -0.1152
\\
Brier-1 & 0.4173& 0.7474& 0.2131& 0.2043& 0.0637& -0.0608
\\
Underconfidence-1 & 0.4181& 0.7519& 0.2140& 0.2041& 0.0794& -0.0775
\\
Underconfidence-4 & 0.4118& 0.7481& 0.2088& 0.2030& 0.0616& -0.0565
\\
Brier-log Hybrid & 0.4007& 0.7440& 0.1994& 0.2013& 0.0503& -0.0459
\\
\midrule

\end{tabular}

\end{table}

\begin{table}[h!]
\centering
\caption{Accuracy, AUROC, Brier-1, Brier, ECE (10 bins) and calibration bias statistics for the easy subset of the test set of HotpotQA-Modified when trained with HotpotQA-Modified dataset. Calib. Bias is short for Calibration bias.}
\label{table:hotpotqa-modified-easy-results}
\begin{tabular}{l c c c c c c} 
\toprule
Model/Reward Scheme& Accuracy & AUROC & Brier-1 & Brier & ECE & Calib. Bias\\
\midrule
Base (JSON) & 0.4493& 0.5908& -0.0125& 0.4618& 0.4672& -0.4642
\\
Base (XML) & 0.0588& 0.5386& -0.7216& 0.7804& 0.8369& -0.8368
\\
SFT (XML) & 0.4421& 0.5898& -0.0213& 0.4634& 0.4680& -0.4655
\\
\midrule
Log Loss & 0.0001& -& 0.0000& 0.0001& 0.0050& -0.0049
\\
Brier Score & 0.0000& -& 0.0000& 0.0001& 0.0050& -0.0050
\\
Log-1 & 0.2022& 0.9386& 0.1275& 0.0747& 0.0271& -0.0271
\\
Brier-2 & 0.5296& 0.7723& 0.3546& 0.1750& 0.0162& -0.0162
\\
Log-\(\frac{1}{\ln 202}\)& 0.6271& 0.6098& 0.3953& 0.2318& 0.0725& -0.0320
\\
\midrule
Correctness-only & 0.6273& 0.5059& 0.2612& 0.3661& 0.3648& -0.3634
\\
Overconfidence-1000 & 0.6228& 0.5957& 0.3723& 0.2506& 0.1348& -0.1299
\\
Overconfidence-4 & 0.6248& 0.6326& 0.3958& 0.2290& 0.0698& 0.0697
\\
Overconfidence-1 & 0.6328& 0.6198& 0.4081& 0.2247& 0.0506& -0.0310
\\
Brier-1 & 0.6317& 0.6359& 0.4082& 0.2235& 0.0419& 0.0025
\\
Underconfidence-1 & 0.6245& 0.6511& 0.4033& 0.2212& 0.0414& -0.0083
\\
Underconfidence-4 & 0.6199& 0.6488& 0.3972& 0.2226& 0.0433& 0.0044
\\
Brier-log Hybrid & 0.6111& 0.6461& 0.3864& 0.2247& 0.0371& 0.0210
\\
\midrule

\end{tabular}

\end{table}

\begin{table}[h!]
\centering
\caption{Accuracy, AUROC, Brier-1, Brier, ECE (10 bins) and calibration bias statistics for the medium subset of the test set of HotpotQA-Modified when trained with HotpotQA-Modified dataset. Calib. Bias is short for Calibration bias.}
\label{table:hotpotqa-modified-medium-results}
\begin{tabular}{l c c c c c c} 
\toprule
Model/Reward Scheme& Accuracy & AUROC & Brier-1 & Brier & ECE & Calib. Bias\\
\midrule
Base (JSON) & 0.2726& 0.6221& -0.2435& 0.5161& 0.5411& -0.5382
\\
Base (XML) & 0.0352& 0.5796& -0.6395& 0.6746& 0.7681& -0.7681
\\
SFT (XML) & 0.2726& 0.6152& -0.2452& 0.5178& 0.5407& -0.5381
\\
\midrule
Log Loss & 0.0000& -& 0.0000& 0.0000& 0.0050& -0.0050
\\
Brier Score & 0.0001& -& 0.0000& 0.0001& 0.0050& -0.0049
\\
Log-1 & 0.0619& 0.9708& 0.0272& 0.0347& 0.0268& -0.0268
\\
Brier-2 & 0.2323& 0.8839& 0.1149& 0.1174& 0.0583& -0.0583
\\
Log-\(\frac{1}{\ln 202}\)& 0.4203& 0.6393& 0.1704& 0.2499& 0.1243& -0.0881
\\
\midrule
Correctness-only & 0.4250& 0.5209& -0.1237& 0.5487& 0.5498& -0.5447
\\
Overconfidence-1000 & 0.4135& 0.6110& 0.1207& 0.2929& 0.2229& -0.2217
\\
Overconfidence-4 & 0.4187& 0.6659& 0.1923& 0.2264& 0.0541& 0.0033
\\
Overconfidence-1 & 0.4238& 0.6575& 0.1805& 0.2433& 0.1206& -0.1053
\\
Brier-1 & 0.4184& 0.6743& 0.1875& 0.2309& 0.0896& -0.0424
\\
Underconfidence-1 & 0.4197& 0.6762& 0.1902& 0.2295& 0.0777& -0.0615
\\
Underconfidence-4 & 0.4158& 0.6766& 0.1869& 0.2289& 0.0816& -0.0363
\\
Brier-log Hybrid & 0.4003& 0.6699& 0.1747& 0.2257& 0.0691& -0.0296
\\
\midrule

\end{tabular}

\end{table}

\begin{table}[h!]
\centering
\caption{Accuracy, AUROC, Brier-1, Brier, ECE (10 bins) and calibration bias statistics for the hard subset of the test set of HotpotQA-Modified when trained with HotpotQA-Modified dataset. Calib. Bias is short for Calibration bias.}
\label{table:hotpotqa-modified-hard-results}
\begin{tabular}{l c c c c c c} 
\toprule
Model/Reward Scheme& Accuracy & AUROC & Brier-1 & Brier & ECE & Calib. Bias\\
\midrule
Base (JSON) & 0.1203& 0.6546& -0.4118& 0.5320& 0.5926& -0.5926
\\
Base (XML) & 0.0149& 0.6180& -0.5702& 0.5851& 0.7062& -0.7062
\\
SFT (XML) & 0.1217& 0.6452& -0.4144& 0.5361& 0.5953& -0.5938
\\
\midrule
Log Loss & 0.0000& -& 0.0000& 0.0000& 0.0050& -0.0050
\\
Brier Score & 0.0000& -& 0.0000& 0.0001& 0.0050& -0.0050
\\
Log-1 & 0.0204& 0.9829& 0.0054& 0.0149& 0.0166& -0.0166
\\
Brier-2 & 0.0958& 0.9398& 0.0325& 0.0633& 0.0503& -0.0503
\\
Log-\(\frac{1}{\ln 202}\)& 0.2093& 0.7020& 0.0109& 0.1984& 0.1896& -0.1870
\\
\midrule
Correctness-only & 0.2100& 0.5276& -0.5202& 0.7302& 0.7370& -0.7321
\\
Overconfidence-1000 & 0.2050& 0.6807& -0.0722& 0.2772& 0.3357& -0.3346
\\
Overconfidence-4 & 0.2070& 0.7824& 0.0567& 0.1504& 0.1160& -0.1110
\\
Overconfidence-1 & 0.2055& 0.7701& 0.0152& 0.1904& 0.2106& -0.2085
\\
Brier-1 & 0.2028& 0.7788& 0.0434& 0.1594& 0.1448& -0.1416
\\
Underconfidence-1 & 0.2110& 0.7925& 0.0486& 0.1624& 0.1638& -0.1618
\\
Underconfidence-4 & 0.2008& 0.7764& 0.0424& 0.1584& 0.1413& -0.1366
\\
Brier-log Hybrid & 0.1915& 0.7612& 0.0371& 0.1544& 0.1326& -0.1283
\\
\midrule

\end{tabular}

\end{table}

\clearpage

\section{Case study of selective confidence reward hacking}\label{appendix:reward-hacking-examples}

This appendix section shows a case study of selective confidence reward hacking in Brier-2 reward scheme for the HotpotQA-Modified dataset. This was recorded during the 500th and final RL training step of Qwen 2.5 (3B) Instruct. Table \ref{table:reward-hacking-responses} shows the responses of a cherry-picked sample of 3 questions in the final RL training batch of the LLM but representative of the scenarios on how the LLM may respond when selective confidence reward hacking is present. Note that the exact question is irrelevant because the main focus is to show how the LLM chooses not to provide an educated guess to maximize its chances of getting the correct answer.

\begin{table}
\centering

\caption{Table of responses to three example questions in the final RL training batch of Qwen 2.5 (3B) Instruct for Brier-2 in the HotpotQA-Modified dataset. Con. stands for the corresponding confidence level of the response.}
\begin{tabular}{l c l c l c}    
\toprule
\multicolumn{2}{c}{Example Question A} & \multicolumn{2}{c}{Example Question B} & \multicolumn{2}{c}{Example Question C} \\
\cmidrule(lr){1-2} \cmidrule(lr){3-4} \cmidrule(lr){5-6}
Response & Con. & Response & Con. & Response & Con. \\ \midrule
W. Chump \& Sons Limited & 70 & Unknown & 0 & Unknown & 0 \\
W. Chump \& Sons Limited & 60 & No information & 0 & Not provided & 0 \\
W. Chump \& Sons Limited & 70 & Hog & 60 & Unknown & 0 \\
W. Chump \& Sons & 70 & Unknown & 0 & Not enough information & 0 \\
W. Chump \& Sons Limited & 70 & Hog & 70 & Unknown & 0 \\
W. Chump \& Sons & 70 & Unknown & 0 & Unknown & 0 \\
W. Chump \& Sons Limited & 70 & unspecified & 0 & Unknown & 0 \\
W. Chump \& Sons Limited & 70 & Unknown & 0 & Not provided & 0 \\ \bottomrule
\label{table:reward-hacking-responses}
\end{tabular}
\end{table}

The LLM chose to answer Example Question A as it was moderately confident of the correct answer. The confidence values have largely converged to 70, showing a high degree of internal consistency of its confidence.

The LLM has selectively given up on answering Example Question B in 6 out of the 8 responses. This shows that the decision to be confident about an incorrect answer may not be deterministic.

In the third example question, the LLM has decided to give up on Example Question C as there was insufficient information to answer the question. This occurs in HotpotQA-Modified as relevant sources can be removed during dataset preprocessing.

% These example questions correspond to the first, second and fourth questions of step 500 of our training run respectively.

\section{Results for other LLM}

In addition to Qwen 2.5 (3B) Instruct, we ran our experiments on Llama 3.2 (3B) Instruct and Llama 3.1 (8B) Instruct \cite{grattafiori2024llama3herdmodels} on the BigMath dataset.

\subsection{Llama 3.2 (3B) Instruct}

For Llama 3.2 (3B) Instruct, we ran RL on BigMath on 1000 steps and the learning rate lowered to \(10^{-6}\). The other hyperparameters are the same as in Qwen 2.5 (3B) Instruct. We tested the LLM on Brier-1, Log-1, Brier-2 and Brier-log Hybrid reward schemes, of which only Brier-1 and Brier-log Hybrid are non-hackable confidence. The training accuracies and the training losses for each experiment are as shown in Figure \ref{fig:llama3.2-3b-training-collapse}.

\begin{figure}
    \centering
    \includegraphics[width=1\linewidth]{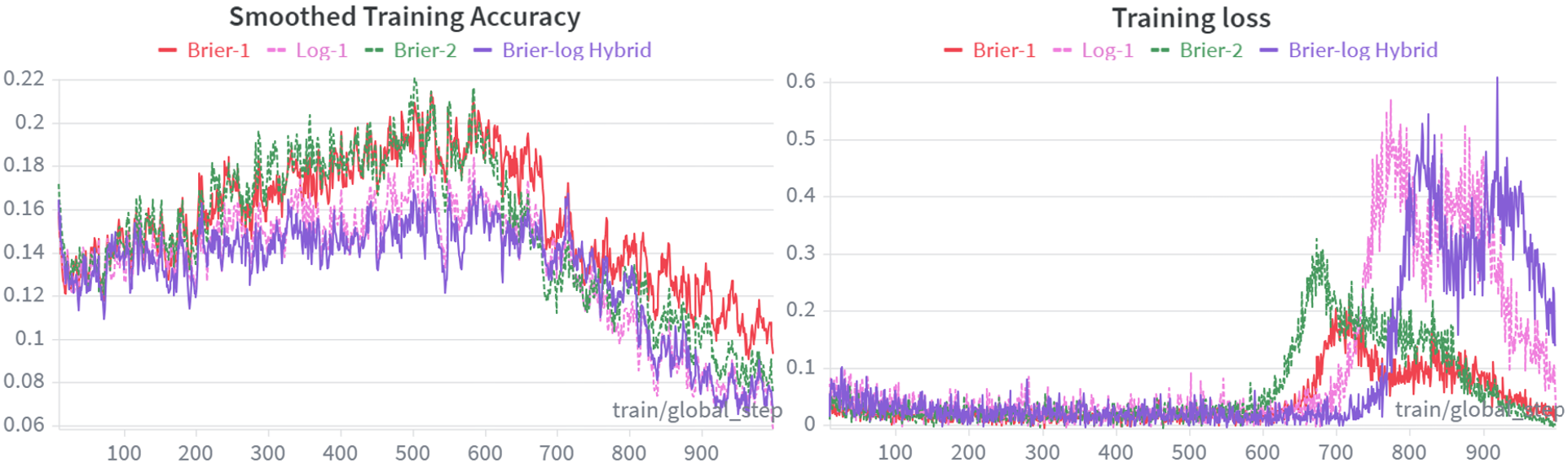}
    \caption{Training accuracy (left) and training loss (right) statistics for BigMath in Llama 3.2 (3B) Instruct from Step 10 to Step 999 (0-indexed). The training accuracy has been smoothed using an exponential moving average \cite[p. 15]{brown1956exponential}, initialized at 0.5 with smoothing factor of 0.2. Dotted lines represent hackable confidence reward schemes.}
    \label{fig:llama3.2-3b-training-collapse}
\end{figure}

Dr GRPO training collapsed, as evidenced by the sudden spike in training loss in Figure \ref{fig:llama3.2-3b-training-collapse}. As this empirically holds independent of the reward scheme used, it is likely a limitation of Dr GRPO RL rather than the reward scheme. Notably, Brier-2 was the first and Brier-log Hybrid was the last to experience training collapse.

\subsection{Llama 3.1 (8B) Instruct}

\begin{figure}
    \centering
    \includegraphics[width=1\linewidth]{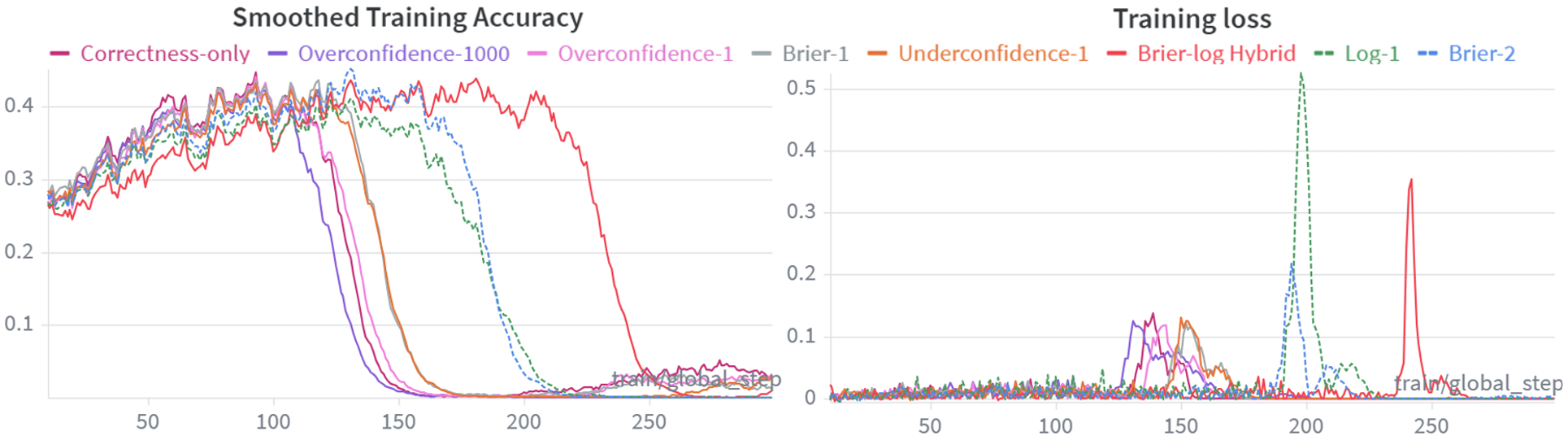}
    \caption{Training accuracy (left) and training loss (right) statistics for BigMath in Llama 3.1 (8B) Instruct from Step 10 to Step 299 (0-indexed). The training accuracy has been smoothed using an exponential moving average \cite[p. 15]{brown1956exponential}, initialized at 0.5 with smoothing factor of 0.2. Dotted lines represent hackable confidence reward schemes.}
    \label{fig:llama3.1-8b-training-collapse}
\end{figure}

We ran Dr GRPO RL on BigMath with the same hyperparameters as Qwen 2.5 (3B) Instruct except that the maximum gradient norm has been conservatively set to 0.01 to rule out issues regarding too large a gradient. We ran our experiments on Correctness-only, Overconfidence-1000, Overconfidence-1, Brier-1, Underconfidence-1, Brier-log Hybrid, Log-1 and Brier-2 reward schemes. All the reward schemes mentioned are non-hackable confidence except Log-1 and Brier-2. Figure \ref{fig:llama3.1-8b-training-collapse} shows the results of our experiments.

Likewise, the model initially trains well before a sharp decrease to training accuracy, likely due to conflicting signals in the Dr GRPO loss. This occurred regardless of the reward scheme used. The peaks in the training loss occur during the significant decrease in training accuracy.

Similarly, Brier-log Hybrid was the last to suffer the training collapse. However, unlike the experiment in Llama 3.2 (3B) Instruct, the training collapse occurs when the LLM has reached approximately the same reasoning accuracy apart from the hackable confidence reward scheme Log-1, which suggests a learning problem. A more detailed investigation on the phenomenon of training collapse during confidence calibration of RL will be left to future research.

As Brier-log Hybrid is the furthest along the underconfident end of the spectrum, based on the results in Figure \ref{fig:deepmath-103k-training} of Section \ref{section:results-rq2}, Brier-log Hybrid can be expected to take the longest time to train to the approximate accuracy level threshold, after which, RL training collapse would occur. Indeed, as shown in Figure \ref{fig:llama3.1-8b-training-collapse}, Brier-log Hybrid takes the longest time to do so among the examined non-hackable confidence reward schemes.

\section{Experimental Anomalies}

This section highlights experimental anomalies that researchers may face while attempting to reproduce this work. Currently, only one has been identified, i.e. the confidence may converge to approximately 0.1 while performing RL training on confidence calibration.

\subsection{Confidence Converges to Approximately 0.1}

As shown in Figure \ref{figure:anomalous-runs}, while training Qwen 2.5 (3B) Instruct on the HotpotQA-Modified dataset using RL on the reward schemes Correctness-only, Brier-1 and Overconfidence-1, the average LLM training confidence appeared to converge to approximately 0.1 mid-run, and did not significantly change for hundreds of steps despite improving training accuracy, suggesting poor confidence calibration due to the LLM getting stuck in a suboptimal local maximum. These runs were not taken for evaluation in the main paper.

We reran the anomalous RL experiments starting from the same fine-tuned model at the supervised fine-tuning stage. Upon rerun, the anomalies mostly did not resurface, and we took these runs for the evaluation in the main paper. As the problem was not reproducible across reruns, this was likely due to the stochasticity of the RL algorithm and issues with the reliability of the trust region in RL. 

\begin{figure}[h]
    \centering
    \includegraphics[width=1\linewidth]{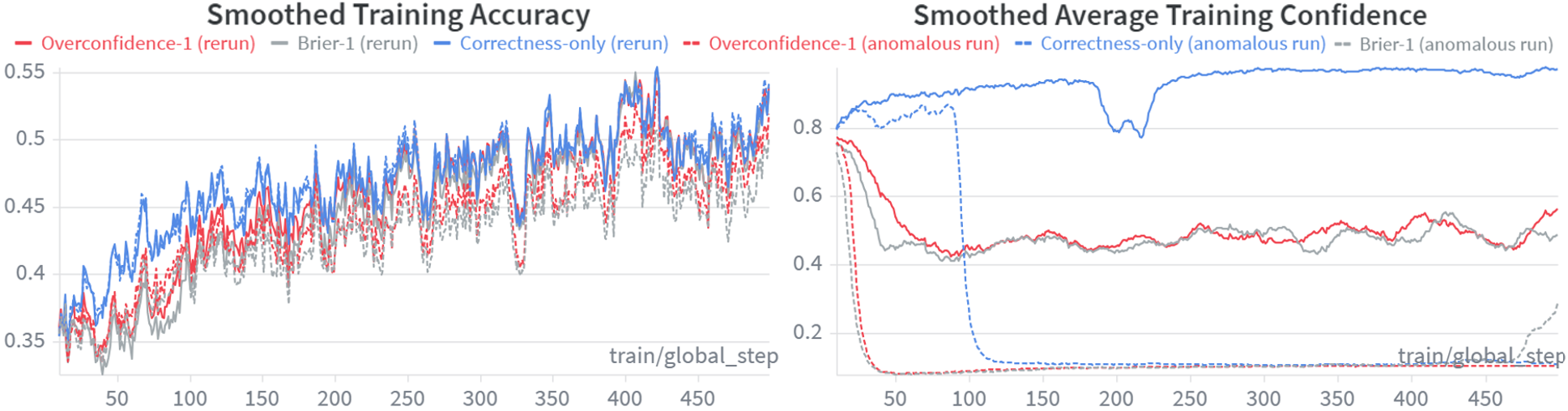}
    \caption{Training accuracy (left) and training confidence (right) statistics for both anomalous runs and their corresponding reruns during RL training Qwen 2.5 (3B) Instruct of the HotpotQA-Modified dataset from Step 10 to Step 499 (0-indexed), smoothed using an exponential moving average \cite[p. 15]{brown1956exponential}, both initialized at 0.5 with smoothing factor of 0.2. Dotted lines represent the anomalous runs while solid lines of the same color represent the corresponding reruns.}
    \label{figure:anomalous-runs}
\end{figure}

%May want to show examples of LLM misinterpretation. The anomalous runs are likely to be released only upon paper acceptance due to anonymity constraints.

The LLM has often shown high confidence on its confidence analysis despite providing a confidence of 10 on the scale from 0 to 100. Therefore, the LLM has likely misinterpreted the instruction to output the confidence on a scale from 0 to 100 as a scale from 0 to 10 instead, and the misinterpretation was likely reinforced due to the lower confidence of 10 resulting in a higher expected reward. To discourage this from happening, there should be checks to ensure that the confidence analysis correctly justifies the confidence value provided. We leave the checking to future work.

In the example below, we provide one question and one corresponding LLM response in the final training step of the anomalous experimental run when RL training on the HotpotQA-Modified dataset and using the Brier-1 reward scheme. The response demonstrates LLM confidence in its answer despite indicating a low confidence of 10 on the scale of 0 to 100, as instructed in the system prompt. The system prompt used is the XML system prompt format in Section \ref{appendix:prompt-formats}.

\begin{mdframed}[linecolor=black, backgroundcolor=gray!10, roundcorner=20pt, innertopmargin=5pt, innerbottommargin=5pt] 

\textbf{Question (excluding system prompt)}

Based on the information sources given below and your existing knowledge, answer the following question: The Holy Trinity, an episode of British motoring series "The Grand Tour" was produced by which  independent television production company founded in 2015 in the UK?

Source 1: James May

James Daniel May (born 16 January 1963) is an English television presenter and journalist.  He is best known as a co-presenter of the motoring programme "Top Gear" alongside Jeremy Clarkson and Richard Hammond from 2003 until 2015.  As of 2016 he is a director of the production company W. Chump \& Sons (founded July 2015) and is also a co-presenter in the television series "The Grand Tour" for Amazon Video as an exclusive for Amazon Prime customers, alongside his former "Top Gear" colleagues, Clarkson and Hammond, as well as former producer Andy Wilman.

Source 2: G-Unit Films and Television Inc.

G-Unit Films and Television Inc. is an American film and television production company founded by rapper 50 Cent and Interscope in 2003.  In 2008, 50 Cent stated in an interview that he has created his own independent film production company with Randall Emmett called Cheetah Vision, technically scrapping G-Unit Films.  In 2010, Jackson revived G-Unit Films, renaming the company to G-Unit Films and Television Inc.  The company has joint ventures with Will Packer’s production company Will Packer Productions and Universal Television.  In over 18 months, Jackson has sold projects to six different networks.  Among them was "Power", a STARZ drama in which he not only co-stars but also serves as co-creator and executive producer.  “Power” debuted in June 2014 and was renewed for a second season after one episode.  “Power’s” August 2 season finale garnered the high ratings through the season, more than doubling the premiere and it generated 71\% of the African-American viewership of any scripted premium series since 2006.  Jackson serves as a co‐star, co-creator and executive television producer of the STARZ network drama where he signed a 2 year contract with representation coming from the Agency for the Performing Arts.  Ratings have been a success for Starz.  with the second season premiere being the highest-ever season with 1.43 million people tuning in live.  Jackson also serves as an executive television producer for "Dream School" for SundanceTV, a series that follows fifteen high school dropouts as they are taught by a series of celebrity "teachers".

Source 3: WestWind Pictures

WestWind Pictures Ltd. is an independent television production company founded in 1989 in Regina, Saskatchewan.  The company, now based in Toronto, Ontario, is co-owned by CEO Mary Darling and President Clark Donnelly.  WestWind currently has programs airing in over 80 countries around the world.  The company is currently involved in scripted television series, documentaries, lifestyle programming and feature films.

Source 4: W. Chump and Sons

W. Chump \& Sons Limited is an independent television production company founded in July 2015 in the United Kingdom.  The company is headquartered on Power Road , Chiswick, London, and owned by Jeremy Clarkson, Richard Hammond, James May and Andy Wilman.  The directors of the company are the former presenters and executive producer of the MPH Show and "Top Gear" Live exhibition tours, as well as the former "Top Gear" series produced for BBC Television from October 2002 until March 2015.  s of 2016 the company was engaged in producing "The Grand Tour," a motoring television series that debuted on Amazon Video on 18 November 2016.

Source 5: The Automobile Association

AA plc (The AA, originally The Automobile Association) is a British motoring association founded in 1905, which currently provides car insurance, driving lessons, breakdown cover, loans, motoring advice, road maps and other services.  The association demutualised in 1999, to become a private limited company, and in 2002, the AA Motoring Trust was created to continue its public interest and road safety activities.  The company is listed on the London Stock Exchange, and is a constituent of the FTSE 250 Index.

Source 6: Kapital Entertainment

Kapital Entertainment is an American independent television production company founded by Aaron Kaplan in 2009.  Prior to founding Kapital, Kaplan was the worldwide head of scripted television at William Morris Agency to which he joined in 1991.  He exited the company in 2009 after 18 years when WMA was going through the regulatory process to finalize their merger with Endeavor to form WME Entertainment.

Source 7: The Grand Tour (TV series)

The Grand Tour is a British motoring television series for Amazon Video presented by Jeremy Clarkson, Richard Hammond and James May and produced by Andy Wilman.

Source 8: Desilu Productions

Desilu Productions ( ) was an American production company founded and co-owned by husband and wife Desi Arnaz and Lucille Ball, best known for shows such as "I Love Lucy", "", and "The Untouchables".  Until 1962, Desilu was the second-largest independent television production company in the U.S. behind MCA's Revue Productions until MCA bought Universal Pictures, and Desilu became and remained the number-one independent production company until being sold in 1967.  Ball and Arnaz jointly owned the majority stake in Desilu from its inception until 1962, when Ball bought out Arnaz and ran the company by herself for several years.  Ball had succeeded in making Desilu profitable again by 1967, when she sold her shares of Desilu to Gulf+Western for \$17 million (\$ in 2016 dollars).  Gulf+Western then transformed Desilu into the television production arm of Paramount Pictures, rebranding the company as the original Paramount Television.

\nointerlineskip

\rule{\linewidth}{0.5pt}

\nointerlineskip

\textbf{Response}

<reasoning>

The question asks for the independent television production company founded in 2015 in the UK producing The Holy Trinity, an episode of the British motoring series 'The Grand Tour'. 'The Grand Tour' is produced by W. Chump and Sons, which is stated to be an independent television production company founded in July 2015 in the United Kingdom.

</reasoning>

<answer>

W. Chump and Sons

</answer>

<confidence\_analysis>

The answer is directly derived from Source 4, which states that W. Chump \& Sons Limited is an independent television production company founded in July 2015 in the United Kingdom and is the producer of 'The Grand Tour'. All other sources either do not mention 'The Grand Tour' or are unrelated to the production company question.

</confidence\_analysis>

<confidence>

10

</confidence>

\end{mdframed}

\end{document}